\newtheorem{theorem}{Theorem}
\definecolor{customblue}{RGB}{31, 119, 180}
\definecolor{customgreen}{RGB}{44, 160, 44}
\definecolor{customyellow}{RGB}{255, 193, 7}
\definecolor{custompurple}{RGB}{148, 103, 189}
\definecolor{customred}{RGB}{255,99,71}
\newcommand{\meanpmstd}[2]{%
\FPeval{\result}{pow(0.5, #2)}%
\FPround{\result}{\result}{1}%
#1  \scalebox{0.9}{$\pm$\result}
}
\title{On Extending Direct Preference Optimization to Accommodate Ties}
\author{%
  Jinghong Chen, Guangyu Yang,  Weizhe Lin, Jingbiao Mei, Chenxu Lyu, Bill Byrne\\
  Department of Engineering\\
  University of Cambridge\\
  Cambridge, United Kingdom CB2 1PZ \\
  \texttt{\{jc2124, gy266, wl356, jm2245, cl927, wjb31\}@cam.ac.uk} \\
}
\begin{document}

\maketitle

\begin{abstract}

We derive and investigate two DPO variants that explicitly model the possibility of declaring a tie in pair-wise comparisons. We replace the Bradley-Terry model in DPO with two well-known modeling extensions, by Rao and Kupper and by Davidson, that assign probability to ties as alternatives to clear preferences. Our experiments in neural machine translation and summarization show that explicitly labeled ties can be added to the datasets for these DPO variants without  the degradation in task performance that is observed when the same tied pairs are presented to DPO. We find empirically that the inclusion of ties leads to stronger regularization with respect to the reference policy as measured by KL divergence, and we see this even for DPO in its original form. We provide a theoretical explanation for this regularization effect using ideal DPO policy theory. We further show performance improvements over DPO in translation and mathematical reasoning using our DPO variants. We find it can be beneficial to include ties in preference optimization rather than simply discard them, as is done in common practice.

\end{abstract}

\section{Introduction}

The original formulation of DPO~\citep{DPO-Rafael} does not allow for ties.  DPO requires training data consisting of paired options, $y_w \succ y_l$, and each of these pairs should represent a clear preference in judgment with no ambiguity as to which is the winner and which is the loser.  From this data, the DPO learning procedure  encourages the underlying policy to prefer $y_w$ over $y_l$.  This formulation does not allow for any ambiguity or uncertainty in the comparison of the paired examples in the training data. 

    This certainty is not easy to achieve in practice. A common approach is simply to discard data.  
    \citet[Sec. 4.2.1]{dubey2024llama3herdmodels} apply DPO in post-training of Llama 3 models and note that for  
    ``DPO, we use samples that are labeled as the chosen response being significantly better or better than the rejected counterpart for training and discard samples with similar responses.''  Similarly, Qwen2 developers~\citep[Sec. 4.3]{yang2024qwen2technicalreport} ``sample multiple responses from the current policy model, and the reward model selects the most and the least preferred responses, forming preference pairs that are used for DPO.'' 
    Over-generation followed by aggressive selection is  effective in producing the strongly ordered judgments needed for DPO.  However the process appears wasteful: many potentially useful, and expensively collected,  preference judgments are discarded simply because they are ties. As \citet{rao1967ties} note: ``any model which does not allow for the possibility of ties is not making full use of the information contained in the no-preference class.'' 

Motivated by this, we investigate DPO variants that can incorporate ties.  
We replace the Bradley-Terry preference model at the heart of DPO by two well-known extensions  by \citet{rao1967ties} and by \citet{Davidson-On-Extending-Bradley-Terry} that explicitly assign probability to tied judgments alongside winners and losers. Since  these models are generalizations of the Bradley-Terry model, we find that they are readily incorporated into the DPO modeling framework. 
In experiments in neural machine translation and summarization, we find that ties can be added to the datasets for these DPO variants without  the degradation in task performance that results from adding ties to the original DPO.  We also observe improved regularization, in reduced KL-divergence to the reference policy, by adding ties. We provide a theoretical explanation based on ideal DPO policy theory by \citet{PL-paper-chen}.
We further show DPO-RK and DPO-D improves performance over strong DPO baselines in translation and mathematical reasoning by including previously discarded data as tied pairs. These findings suggest it can be beneficial to incorporate ties in preference optimization rather than simply discard them, as is done in common practice.

\section{Methodology}


\subsection{DPO and the Bradley-Terry Preference Distribution}

The Bradley-Terry model assigns probability that an item $y_i$ will be preferred to item $y_j$ 
in terms of their `strength' parameters $\lambda$.   
In the RLHF setting, strengths are expressed as  rewards $r$, $\lambda=e^r$  [Eq. 1]~\cite{DPO-Rafael},
so that the preference distribution for item $i$ over item $j$  depends on the  difference in their rewards, $d_{ij} = r_i - r_j$  
\begin{align}
    p^{BT}(y_i\succ y_j) &= \frac{\lambda_i}{\lambda_i + \lambda_j} = \frac{e^{r_i}}{e^{r_i} + e^{r_j}} = \sigma(r_i - r_j) = \sigma(d_{ij})
\label{eq:bt}
\end{align}

One of the enabling observations made by \citet{DPO-Rafael} is that when a policy $\pi_\theta$ is sought to maximize the KL-regularized objective 
$    \max_{\pi_\theta}  \mathbb{E} ~ [ r(x,y) ] - 
    \beta ~ D(\pi_\theta(y|x) ~ || ~ \pi_{ref}(y|x))
    \label{eq:dpo_obj}
    $, 
the reward associated with the policy has the form  
$
    r_\theta(x,y) = \beta \log \frac{\pi_\theta(y|x)}{\pi_{ref}(y|x)} + \beta\log Z_\theta(x)
    \label{eq:optrisk}
$.
This allows expressing the difference in rewards between hypotheses $y_w$ and $y_l$ under a parameterized policy $\pi_\theta$ as the {reward margin}
\begin{align}
                    d_\theta(x, y_w, y_l) &= r_\theta(x,y_w) - r_\theta(x,y_l)   =
                    \beta \log \frac{\pi_\theta(y_w|x)}{\pi_{ref}(y_w|x)} - \beta \log \frac{\pi_\theta(y_l|x)}{\pi_{ref}(y_l|x)}
                \label{eq:rewarddiff}
            \end{align}
so that  the corresponding Bradley-Terry probability that item $y_w$ beats item $y_l$ is 
\begin{align}
 p^{BT}_\theta&(y_w \succ_x y_l) = \sigma(d_\theta(x, y_w, y_l)) = 
 \sigma\big( \beta \log \frac{\pi_\theta(y_w|x)}{\pi_{ref}(y_w|x)} - \beta \log \frac{\pi_\theta(y_l|x)}{\pi_{ref}(y_l|x)}  \big)
\end{align}  
The DPO policy objective~[Eq. 7]\cite{DPO-Rafael} follows by incorporating the parameterized form of the preference distribution into a maximum likelihood objective
\begin{align}
    \mathcal L_{DPO}(\pi_\theta; \pi_{ref}) &=  - \mathbb{E}_{x,y_w,y_l} \log p_\theta(y_w \succ_x y_l)
    \label{eq:dponll} \\ &= - \mathbb{E}_{x,y_w,y_l} \log \sigma( \beta \log \frac{\pi_\theta(y_w|x)}{\pi_{ref}(y_w|x)} - \beta \log \frac{\pi_\theta(y_l|x)}{\pi_{ref}(y_l|x)})
    \label{eq:dpoloss}
\end{align}
We note that Eq.~\ref{eq:rewarddiff} follows from the regularized risk optimization~\citep[A.1]{DPO-Rafael}.  It does  not rely on any assumption that  limits its use to the Bradley-Terry model. 

\subsection{Bradley-Terry Extensions that Accommodate Ties}
\label{ssec:bte}
An observed weakness of the Bradley-Terry model is that it does not allow for ties.    Unless two items have exactly the same strengths (so that $d_{ij}=0$), the model always assigns a higher probability of winning to the stronger item.  This may be reasonable if one item is much stronger than the other, but when items are relatively comparable it may be desirable to allow some probability for tied outcomes.   

The Rao-Kupper~\citep{rao1967ties} model assigns win and tie probabilities as:
\begin{align}
    p^{RK}(y_i\succ y_j) & =  \frac{\lambda_i}{\lambda_i + \nu_{RK} \lambda_j} 
    && \mbox{$y_i$ beats $y_j$}     \label{eq:rkwin} \\
    p^{RK}(y_i\sim y_j) & =  \frac{(\nu_{RK}^2-1) \lambda_i \lambda_j}{(\lambda_i + \nu_{RK} \lambda_j)(\lambda_j + \nu_{RK} \lambda_i)}
    && \mbox{$y_i$ ties $y_j$}
    \label{eq:rktie}
\end{align}
while the 
Davidson~\citep{Davidson-On-Extending-Bradley-Terry} model assigns win and tie probabilities as:
\begin{align}
    p^{D}(y_i\succ y_j) & =  \frac{\lambda_i}{\lambda_i + \lambda_j + 2 \nu_D \sqrt{\lambda_i\lambda_j}}  
    && \mbox{ $y_i$ beats $y_j$} \label{eq:davidsonwin}\\
    p^{D}(y_i\sim y_j) & =  \frac{ 2 \nu_D \sqrt{\lambda_i\lambda_j}}{\lambda_i + \lambda_j + 2 \nu_D  \sqrt{\lambda_i\lambda_j}}  
    && \mbox{ $y_i$ and $y_j$ tie}
\label{eq:davidsontie}
\end{align}


The probabilities of the three outcomes sum to one for both of these Bradley-Terry extensions: $p(y_i \succ y_j) + p(y_j \succ y_i) + p(y_i \sim y_j)  = 1$.
For both models, $p(y_i \sim y_j) = p(y_j \sim y_i)$ and $p(y_i \sim y_j)$ 
tends towards 0 if $\lambda_j \gg \lambda_i$.
Both variants have parameters $\nu$ that control how much probability is allocated to ties.  
Apart from $\nu_{RK} = 1$ or $\nu_D=0$, when both variants agree with Bradley-Terry, some probability is reserved for tied outcomes.

The Rao-Kupper and Davidson models arise from different considerations.   
\citet{rao1967ties} begin with the formulation 
$p^{BT}(y_i \succ y_j) = \frac{1}{4} \int_{-(r_i-r_j)}^\infty \mbox{sech}^2(y/2) dy$ \citep{bradley_statistical_1953} and note its sensitivity to the  difference in values $r_i-r_j$. 
They  note that  some judges ``may not be able to express any real preference''  in paired-comparisons   if their ``sense of perception is not sharp enough'' to detect small differences.  They  
reason that a ``threshold of sensory perception'' is needed such that if the observed difference is less than the threshold, a  judge declares a tie.  They introduce the sensitivity threshold $\alpha_{RK}$ as follows,    
$ p^{RK}(y_i \succ y_j) = \frac{1}{4} \int_{-(r_i-r_j)+\alpha_{RK}}^\infty \mbox{sech}^2(y/2) dy$, and 
Eqs.~\ref{eq:rkwin} and~\ref{eq:rktie} follow for 
$\nu_{RK} = e^{\alpha_{RK}}$.

\citet{Davidson-On-Extending-Bradley-Terry} starts from Luce's ``choice axiom'' \citep{luce1959individual} which states  that a complete system of choice probabilities should satisfy  ${p(y_i\succ y_j)}/{p(y_j\succ y_i)} = {\lambda_i}/{\lambda_j}$, which the Rao-Kupper model fails to do.  \citet{Davidson-On-Extending-Bradley-Terry} observes that it is desirable for the probability of a tie to ``be proportional to the geometric mean of the probabilities of preference''.  Adding this requirement $p(y_i\sim y_j) = 2\nu_D \sqrt{p(y_i \succ y_y)p(y_j\succ y_i)}$ to the choice axioms yields   Eqs.~\ref{eq:davidsonwin} and~\ref{eq:davidsontie} as a preference model that allows for ties and also satisfies the choice axiom.  

The Rao-Kupper win and tie probabilities can be written in a form more useful for DPO (Appendix \ref{app:Rao-Kupper and Davidson Preference/Tie Probabilities}),  with $\nu_{RK} = e^{\alpha_{RK}}$, as
\begin{align}
    &p_\theta^{RK}(y_w \succ_x y_l)  =  \sigma(d_\theta(x,y_w,y_l) - \alpha_{RK}) 
\label{eq:rkwinpprob}\\
    &p_\theta^{RK}(y_w \sim_x y_l) 
    =  (\nu_{RK}^2-1) \, \sigma(-d_\theta(x,y_w,y_l) - \alpha_{RK}) \, p_\theta^{RK}(y_w \succ_x y_l)
    \label{eq:rktiepprob}
\end{align}
and the Davidson win and tie probabilities can be written as
\begin{align}
    p^{D}_\theta(y_w\succ_x y_l) &= \frac{ 1 }{ 1 + e^{-d_\theta(x,y_w,y_l)} + 2 \nu_D e^{-d_\theta(x,y_w,y_l)/2}} \label{eq:dwinpprob} \\ 
    p^{D}_\theta(y_w\sim_x y_l) 
    &= 2\, \nu_D\, e^{-d_\theta(x,y_w,y_l)/2}\, p_\theta^{D}(y_w\succ_x y_l) 
    \label{eq:dtiepprob}
\end{align}
Although their parametric forms are different, their  treatments of wins and ties are similar (Appendix~\ref{app:Rao-Kupper and Davidson Preference/Tie Probabilities}, Fig.~\ref{fig:probability analytical}).   For pairs $(x,y_w, y_l)$ treated as wins, higher likelihood is assigned for  higher values of the reward margin $d_\theta(x,y_w,y_l)$.  For the Rao-Kupper this is particularly clear, in that the Bradley-Terry preference distribution is simply shifted by $\alpha_{RK}$.  Conversely, 
for pairs $(x,y_w, y_l)$ treated as ties, the probability of declaring a tie is high for small reward margins $d_\theta(x,y_w,y_l)$.  

\paragraph{Balancing Wins and Ties.}
In the special case of two evenly matched players ($\lambda_i = \lambda_j$), we are interested in the probability of a tie $p(y_i\sim y_j)$ versus a clear win by either player, $p(y_i\succ y_j)+p(y_j\succ y_i)$. It follows that $P_{RK}(\mbox{tie}) = \frac{\nu_{RK}-1}{2}P_{RK}(\mbox{no tie}) $ and $P_D(\mbox{tie}) = \nu_{D} P_D(\mbox{no tie})$. This shows that the parameters $\nu$ determine the probability that equally-matched items are judged as tied or not. 
$\nu$ can be tuned, but in our work, we assume that equally-matched items will tie with a probability of $1/2$ and so we set $\nu_{RK}=3$ and $\nu_{D}=1$.

\subsection{Incorporating Rao-Kupper and Davidson Models into DPO}

We extend the DPO policy objective (Eq. \ref{eq:dponll})  to include a binary flag $t$ to indicate a tie:
\begin{align}
    \mathcal L(\pi_\theta; \pi_{ref}) =  
    - \mathbb{E}_{x,y_w,y_l,t=0} \log p_\theta(y_w \succ_x y_l)
    - \mathbb{E}_{x,y_w,y_l,t=1} \log p_\theta(y_w \sim_x y_l) 
    \label{eq:extended_obj_with_tie}
\end{align}
where  $p_\theta(y_w \succ y_l) $  and $p_\theta(y_w \sim y_l)$ are taken  from either the Rao-Kupper model (Eqs.~\ref{eq:rkwinpprob},~\ref{eq:rktiepprob} or the Davidson model (Eqs.~\ref{eq:dwinpprob}, ~\ref{eq:dtiepprob}).  Note that in  Eq.~\ref{eq:extended_obj_with_tie} preference pairs in the dataset are unambiguously either wins ($t=0$) or ties ($t=1$).
The policy objectives for these two DPO variants are:
\begin{align}
&\mathcal{L}_{RK}(\pi_\theta; \pi_{ref}) = - 
\mathbb{E}_{x,y_w,y_l,t=0} \Bigl[\log \sigma(d_\theta(x, y_w, y_l)- \alpha_{RK}) \Bigr] \notag \\ &- 
\mathbb{E}_{x,y_w,y_l,t=1} \Bigl[ \log \sigma(-d_\theta(x, y_w,y_l) - \alpha_{RK}) +  \log \sigma(d_\theta(x, y_w, y_l) - \alpha_{RK}) - \log(\nu_{RK}^2-1)    \Bigr]
\label{eq:dpork_loss}
\end{align}

and
\begin{align}
\mathcal{L}_{D}(\pi_\theta; \pi_{ref}) =  &- 
\mathbb{E}_{x,y_w,y_l,t=0} \Bigl[ \log \frac{ 1 }{ 1 + e^{-d_\theta(x,y_w,y_l)} + 2 \nu_D e^{-d_\theta(x,y_w,y_l)/2}} \Bigr] \notag \\
&- 
\mathbb{E}_{x,y_w,y_l,t=1} \Bigl[ \log \frac{ 2 \nu_D e^{-d_\theta(x,y_w,y_l)/2} }{ 1 + e^{-d_\theta(x,y_w,y_l)} + 2 \nu_D e^{-d_\theta(x,y_w,y_l)/2}} \Bigr] 
\label{eq:dpod_loss}
\end{align}
We refer to these DPO variants as DPO-RK and DPO-D.  Like DPO, these objectives depend on the policy $\pi_\theta$ through the reward margin $d_\theta(x, y_w, y_l)$ (Eq.~\ref{eq:rewarddiff}). 
Unlike DPO, the training objective Eq.~\ref{eq:extended_obj_with_tie} consists of two competing terms.
For pairs $(x, y_w, y_l)$ labeled as wins the objective is to find $\pi_\theta$ to increase the reward margin $d_\theta(x,y_w,y_l)$.  However, for pairs labeled as ties
the objective is to find $\pi_\theta$ to minimize  $|d_\theta(x,y_w,y_l)|$.  To simultaneously achieve both these objectives, the underlying policy should learn to model both wins and ties.

\subsubsection{DPO-RK and DPO-D Updates}
\label{sec:DPO-RK and DPO-D Updates}

\citet{DPO-Rafael} show that DPO dynamically adjusts the gradient according to how well the preference objective is optimized for each sample 
\begin{align}
    \nabla_\theta& \log p_\theta^{BT}(y_w \succ_x y_l) = \underbrace{\sigma(-d_\theta(x,y_w,y_l))}_{\substack{\text{higher weight when reward} \\ \text{estimate is wrong}}} \,
    \beta\nabla_\theta \log  \frac{\pi_\theta(y_w|x)}{\pi_\theta(y_l|x)}
\label{eq:dpo_grad}
\end{align}
DPO-RK and DPO-D also adjust their gradients dynamically (Appendix \ref{app:Gradients for DPO-RK and DPO-D}). We define the gradient scale factors $\Delta_{win}$ and $\Delta_{tie}$ to illustrate the DPO-RK and DPO-D gradient updates on wins and ties:
\begin{align}
    \nabla \log p_\theta^{RK}(y_w \succ_x y_l) & =  \underbrace{\sigma( \alpha - d_\theta(x, y_w, y_l) )}_{  \Delta^{RK}_{win}(d_\theta)} \,
    \beta\nabla_\theta \log  \frac{\pi_\theta(y_w|x)}{\pi_\theta(y_l|x)}
    \label{eq:rkgrad_preference}
\end{align}
\begin{align}
   &\nabla_\theta \log p_\theta^{RK}(y_w \sim_x y_l) = \bigl[ \underbrace{\sigma(\alpha -d_\theta(x,y_w,y_l)) - \sigma(\alpha+d_\theta(x,y_w,y_l))}_{\Delta^{RK}_{tie}(d_\theta)} \bigr]
\beta\nabla_\theta \log  \frac{\pi_\theta(y_w|x)}{\pi_\theta(y_l|x)}
\label{eq:rkgrad_tie}
\end{align}
\begin{align}
    \nabla_\theta \log p^{D}_\theta(y_w\succ_x y_l)  &= \underbrace{\frac{  e^{-d_\theta} + \nu e^{-d_\theta/2} }{  1+e^{-d_\theta}+ 2\nu e^{-d_\theta/2}}}_{  \Delta_{win}^{D}(d_\theta) } \, 
    \beta\nabla_\theta \log  \frac{\pi_\theta(y_w|x)}{\pi_\theta(y_l|x)}
    \label{eq:dgrad_preference}
    \\
    \nabla_\theta \log p^{D}_\theta(y_w\sim_x y_l)  &= \underbrace{\Bigl[  \Delta_{win}^{D}(d_\theta) - \frac{1}{2} \bigr]}_{ \Delta_{tie}^{D}(d_\theta)} \, 
    \beta\nabla_\theta \log  \frac{\pi_\theta(y_w|x)}{\pi_\theta(y_l|x)}
    \label{eq:dgrad_tie}
\end{align}

\paragraph{$\nabla \log p_\theta(y_w \succ_x y_l)$:} For data labeled as wins, the DPO-RK gradient scale factor has the same form as DPO, but shifted by $\alpha_{RK}$ (Fig.~\ref{fig:graident scale factor analytical}).  DPO-D has a symmetric scale factor that is not as steep as DPO-RK.  All three methods work to increase the reward margin  $d_\theta(x,y_w,y_l)$.
\paragraph{$\nabla \log p_\theta(y_w \sim_x y_l)$:} For data labeled as ties, the DPO-D and DPO-RK gradient scale factors are odd and work to drive $d_\theta(x,y_w,y_l)$ towards zero, although the DPO-RK scale factor is more aggressive. This is a mechanism not present in DPO. 

\paragraph{\textbf{Intuition for hyper-parameter $\alpha_{RK}$  and $\nu_{D}$} :} We note that DPO-RK and DPO-D each introduces one additional hyper-parameter relative to DPO. In Appendix \ref{app:Mathematical Derivations} we discuss how the Rao-Kupper and the Davidson variants use these hyper-parameters to control the likelihood of a tie. The updates on ties above (Eq.\ref{eq:rkgrad_tie} and \ref{eq:dgrad_tie}) offer another intuition for these hyper-parameters: $\alpha_{RK}$ and $\nu_D$ control the width of the band in reward margin where there is little gradient contributions from tied pairs whose difference in reward falls within the band. However, for tied pairs whose difference in reward fall outside the band, the gradient updates work to reduce the margin. We find experimentally that performance is not sensitive to the choice of $\alpha_{RK}$ and $\nu_D$ so long as they are in sensible range (Appendix \ref{app:Ablation Study on the Value of alpha for DPO-RK and DPO-D}) and so keep to $\alpha_{RK}=\log 3$ and $\nu_D=1$ as motivated in Sec.\ref{ssec:bte}. 









\section{Experiments in Adding Ties to DPO, DPO-RK, DPO-D}

DPO in its original formulation relies on a static dataset of comparisons $\mathcal{D} = \{x^{(i)}, y_w^{(i)}, y_l^{(i)}\}_{i=1}^N$ where $y_w^{(i)}$ and $y_l^{(i)}$ are  preferred and dispreferred responses to a prompt $x^{(i)}$~\citep{DPO-Rafael}.  These preferences are assumed to be sampled from some latent reward model and we refer to this dataset as {\bf Clear Preference Pairs} ({\bf CPs}, for short) because they are typically selected to reflect a clear preference between winner and loser as assessed either by human judges or by some trusted automatic metric.   We distinguish CPs from 
{\bf Tied Pairs} ({\bf TPs)}.  TPs also consist of a winner and a loser,  but are very similar in quality.   Human judges might be less consistent, or have less confidence,  in selecting the winner in a tied pair, and automatic metrics will assign more similar or even conflicting quality scores to TPs than to CPs. We study conflicting preferences in Sec.\ref{sec:Performance Improvement through DPO-RK and DPO-D}.

\subsection{Extending Preference Datasets to Include Ties}
\label{sec:adding ties}

As noted,  DPO datasets typically are constructed to include only CPs.  We develop data selection procedures to generate TPs along with CPs so that we can investigate how DPO changes when Tied Pairs are included in the training data. We follow previous work~\citep{DPO-SelfTraining-GSM8K, IterativeDPOBuildingMA, StatisticalRejectionSamplingForDPO-RSO} to collect preference pairs by sampling and then ranking model responses. We pair the top- and bottom-ranked responses as CPs, and select TPs as pairs that are close in the ranking. For Neural Machine Translation (NMT) on WMT-21 ZH-EN~\citep{WMT21-dataset-paper} and IWSLT-17 FR-EN~\citep{IWSLT17-dataset-paper}, we rank responses by BLEURT, a widely-used reference-based quality metric~\citep{BLEURT-paper, WMT23_Metric_paper}. For Summarization on TL;DR~\citep{TLDR-dataset-OpenAI}, we rank responses using the implicit reward function learned by DPO itself, without an external reward model. Appendix \ref{app:experimental details and full results} gives experiment details. Studies of these selection strategies can be found in Appendix \ref{app:Verifying tied pair selection strategy} and \ref{app:Using GPT-4 to validate ties on NMT}.





\subsection{Adding Ties to DPO - The Regularizing Effects of Ties}
\label{ss:tpvkl}

Following prior work \citep{DPO-Rafael, ODPO-DirectPreferenceOptimizationWithAnOffset, DisentanglingLengthDPO}, we evaluate DPO in terms of task performance versus KL divergence to the reference policy. For each of the three tasks we form two training sets: CP, which contains the Clear Preference Pairs; and CP+TP, which contains both the Clear Preference Pairs and the Tied Pairs.  We refer to DPO training on these sets as DPO(CP) and DPO(CP+TP). We note IPO~\citep{IPO-AGeneralTheoreticalParadigmtoUnderstandLearningfromHumanPreferences} yields similar results (Appendix \ref{app:IPO Baselines and KIWI-22 Evaluation on WMT}).



\begin{figure*}[!htbp]
    \centering
    \begin{subfigure}{0.32\textwidth}
        \includegraphics[width=1.0\textwidth]{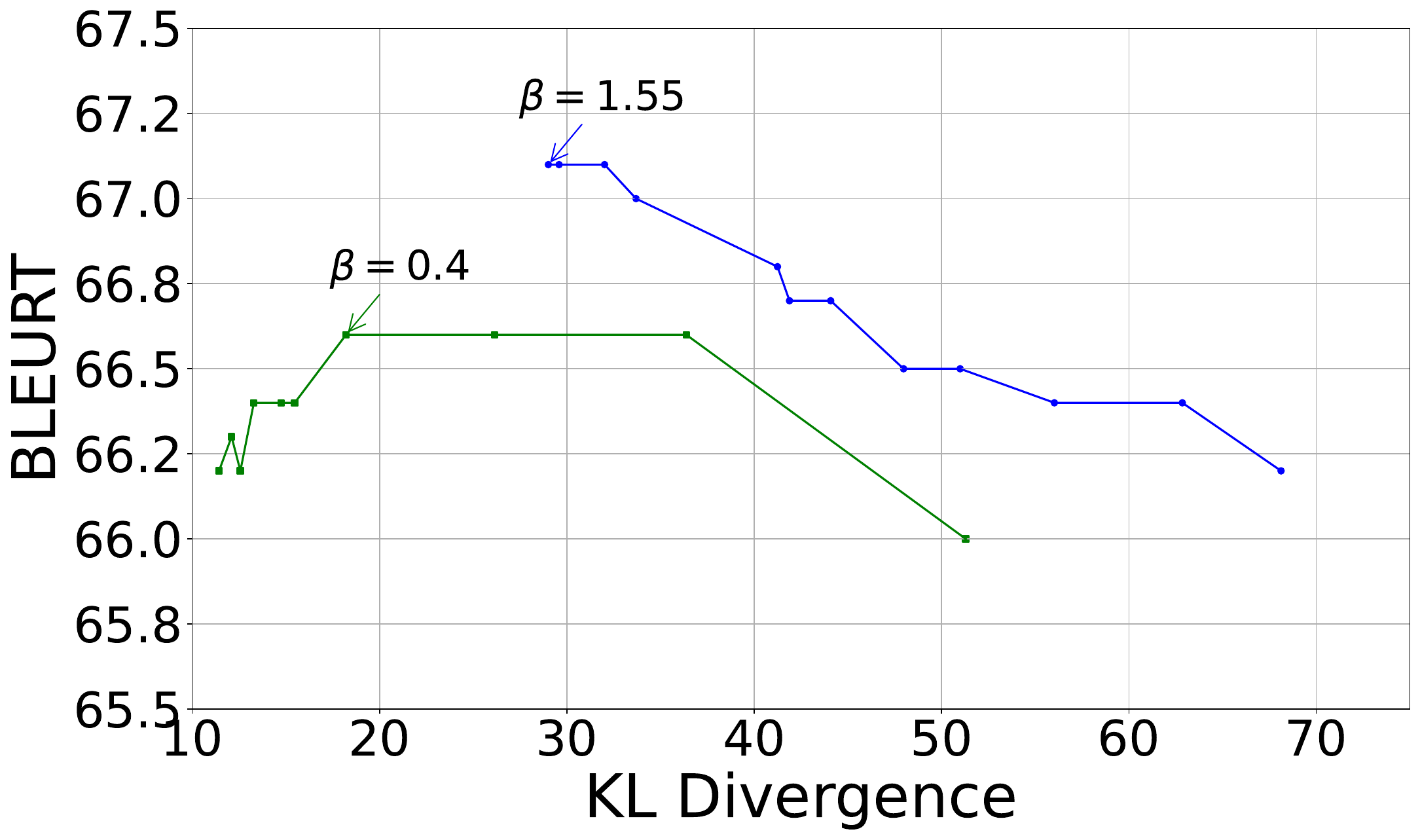}
        \caption{WMT21 ZH-EN. \\ Baseline  = 66.0}
        \label{fig:WMT21 - DPO(CP) vs DPO(CP+TP) KL-Performance frontier}
    \end{subfigure}
    \hfill
    \begin{subfigure}{0.32\textwidth}
        \centering
        \includegraphics[width=1.0\textwidth]{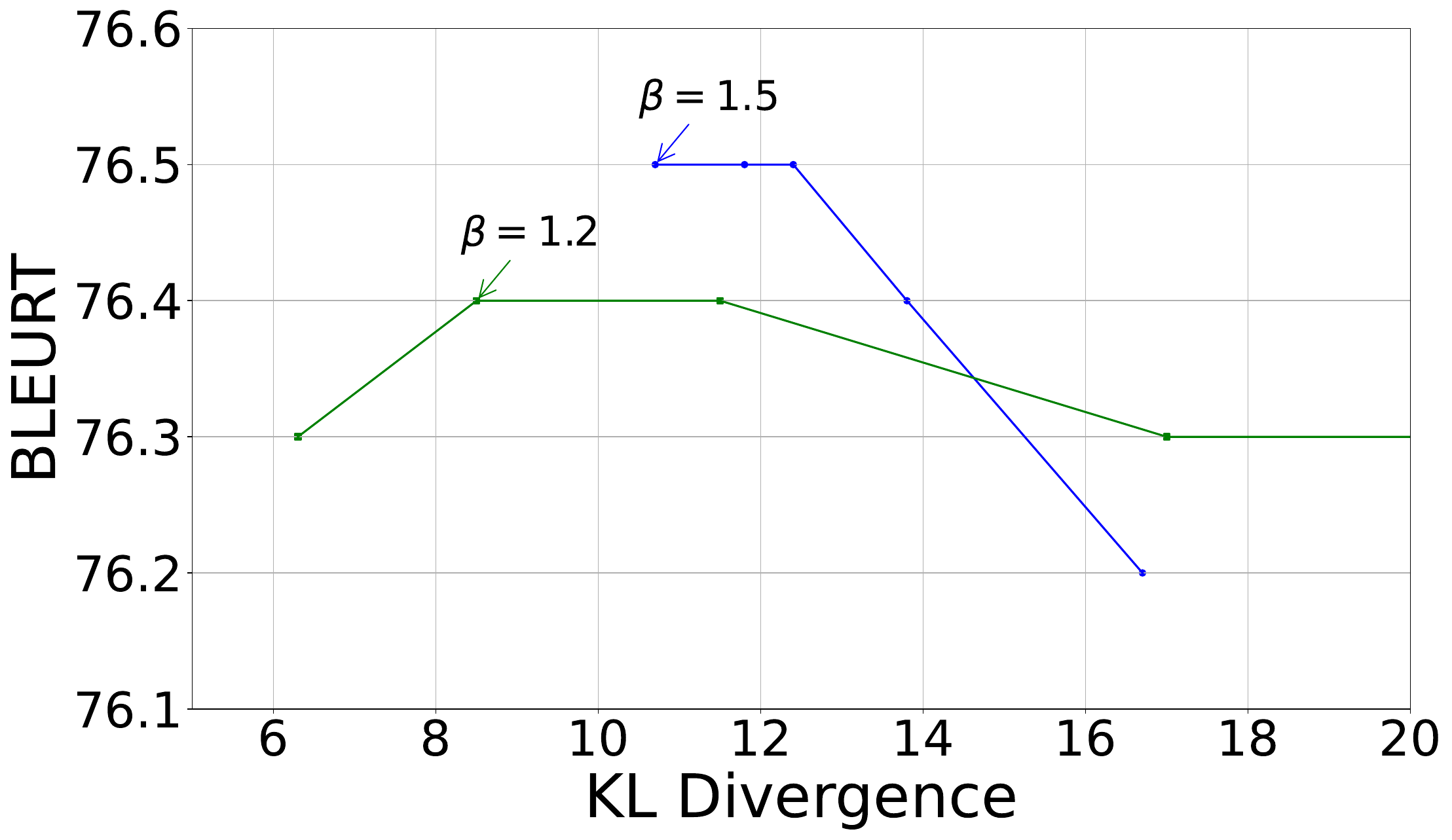}
        \caption{IWSLT17 FR-EN.\\Baseline $=75.7$}
        \label{fig:IWSLT17 - DPO(CP) vs DPO(CP+TP) KL-Performance frontier}
    \end{subfigure}
    \hfill
    \begin{subfigure}{0.32\textwidth}
        \centering
        \includegraphics[width=1.0\textwidth]{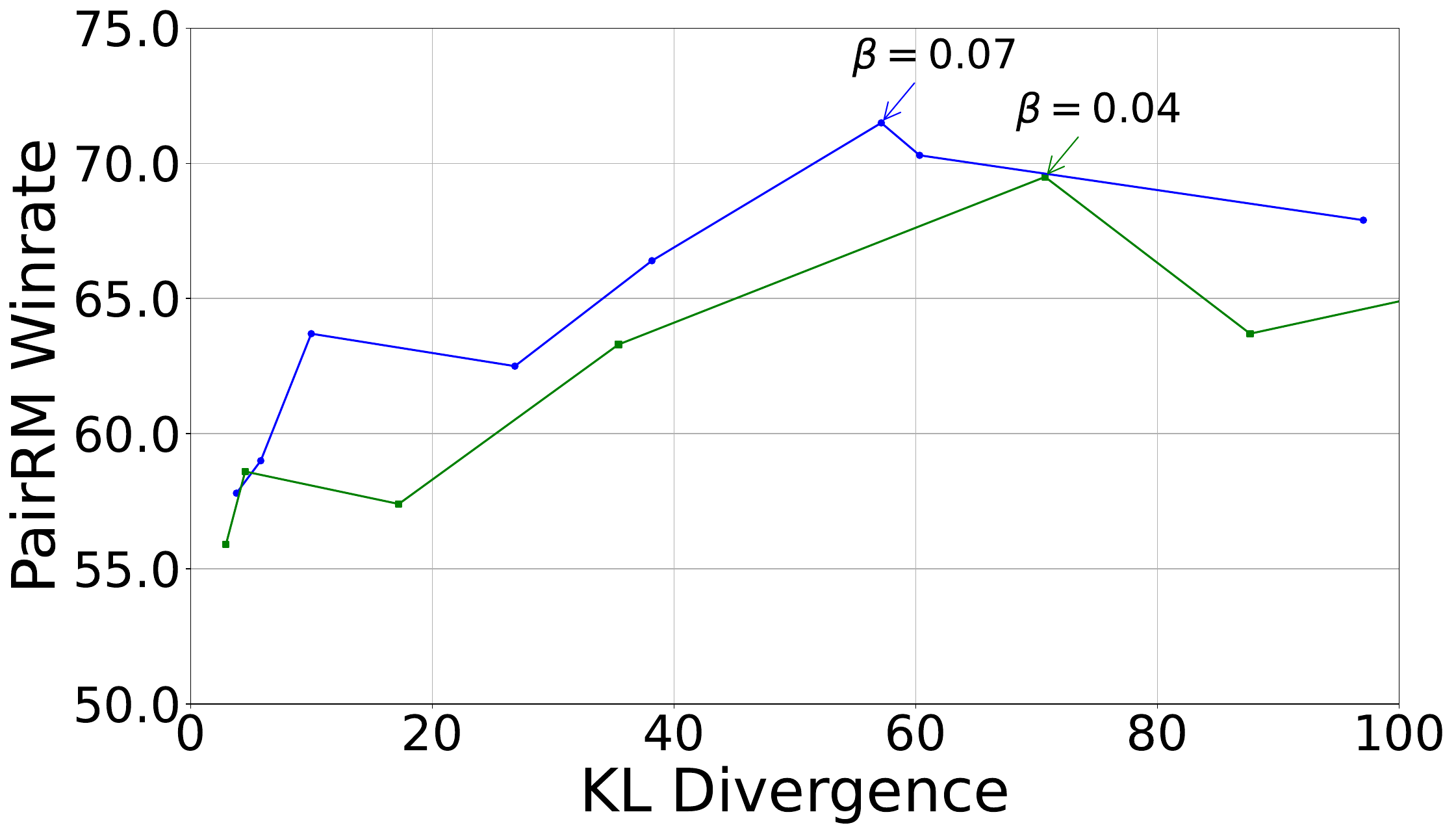}
        \caption{TL;DR PairRM Win-Rate;\\ Baseline = 37.5\%}
        \label{fig:TLDR - DPO(CP) vs DPO(CP+TP) KL-Performance frontier}
    \end{subfigure}
    \caption{Task Performance vs. KL to the reference policy for DPO systems trained on Clear Preference Pairs  (DPO(CP), \textcolor{customblue}{blue}) and on Clear Preference Pairs and Tied Pairs (DPO(CP+TP),  \textcolor{customgreen}{green}).  
    Task Performance is reported in BLEURT for translation tasks on WMT21 ZH-EN and IWSLT17 FR-EN. Summarization performance is reported on TL;DR in terms of PairRM win-rate against human-written summaries.
    KL is estimated over 256 test set policy samples; $\beta$ is noted for best performing systems. Full details are in Appendix \ref{app:experimental details and full results}. }
    \label{fig:DPO(CP) vs DPO(CP+TP) KL-Performance frontier}
\end{figure*}

The obvious conclusion from these experiments (Figure~\ref{fig:DPO(CP) vs DPO(CP+TP) KL-Performance frontier}) is that including tied pairs in  DPO  hurts task performance.  All best performing systems are obtained by DPO(CP), with DPO(CP+TP) underperforming for nearly all values of KL relative to the reference policy.  This performance degradation from including ties is consistent with common practice in the DPO literature which only keeps pairs with clear preference, filtering others to obtain the best-performing system~\citep{yang2024qwen2technicalreport, dubey2024llama3herdmodels}.  Consistent with this,  the TL;DR results show that removing tied pairs from the DPO dataset leads to improved summarization performance, even when ties are identified by a DPO model in an unsupervised manner. 
However these results also suggest that tied pairs in the DPO datasets can enhance regularization.  By this we mean that including TPs causes DPO to find models that are closer to the reference policy as measured by KL divergence.   The overall effect of the reduced task performance and more regularization is to shift the frontier `down and to the left'.


\subsection{Explaining the Regularization Effect of Ties via Ideal DPO Policy Theory}
\label{sec:Explaining the regularization effect of ties via ideal DPO policy theory}

Theorem 3.1 of~\cite{PL-paper-chen} suggests how these regularization effects might arise.  The ideal DPO policy $\pi^*$ should follow (Appendix \ref{app:Simulating Ideal DPO-Davidon Model}):
\begin{equation}
    \frac{\pi^*(y_w|x)}{\pi^*(y_l|x)} = \frac{\pi_{\text{ref}}(y_w|x)}{\pi_{\text{ref}}(y_l|x)}\Big(\frac{\gamma(x,y_w,y_l)}{1-\gamma(x, y_w, y_l)}\Big)^{1/\beta}
    \label{eq:ideal DPO}
\end{equation}
where $\gamma(x, y_w, y_l)$ is the true preference probability of $y_w \succ y_l$ under prompt $x$. 
If we assume that tied pairs have a true preference probability $\gamma(x, y_w, y_l)$ of 0.5,  from Equation~\ref{eq:ideal DPO} we have 
$\frac{\pi^*(y_w|x)}{\pi^*(y_l|x)} = \frac{\pi_{\text{ref}}(y_w|x)}{\pi_{\text{ref}}(y_l|x)} $, where $\pi^\ast$ is the ideal DPO policy\footnote{In Appendix \ref{app:Simulating Ideal DPO-Davidon Model}, we show that the ideal policy can also be derived for DPO-D which includes the ideal DPO policy as a special case.}.  By this analysis, the ideal DPO model should maintain the same chosen/rejected likelihood ratio as the reference model on tied pairs, and this constraint serves as a form of regularization. In our NMT experiments (Figures \ref{fig:WMT21 - DPO(CP) vs DPO(CP+TP) KL-Performance frontier}, \ref{fig:IWSLT17 - DPO(CP) vs DPO(CP+TP) KL-Performance frontier}), where half of the pairs are constructed to be ties, the regularization effect is especially pronounced 
as the DPO model should keep to the reference model likelihood ratio on 50\% of the training data. Regularization  is less pronounced on TL;DR (Figure \ref{fig:TLDR - DPO(CP) vs DPO(CP+TP) KL-Performance frontier}) where only 1/8 of the pairs are ties.  

Another way to view this is to rearrange Eq \ref{eq:ideal DPO} as follows: 
\begin{align}
    d^*_\theta(x, y_w, y_l) &=\beta\Big(\log\frac{\pi^*(y_w|x)}{\pi_{ref}(y_w|x)}-\log\frac{\pi^*(y_l|x)}{\pi_{ref}(y_l|x)}\Big) = \log \frac{\gamma(x,y_w,y_l)}{1-\gamma(x, y_w, y_l)}
    \label{eq:ideal DPO target margin}
\end{align}
From this it follows that the reward margin on tied pairs should ideally be close to zero, which is a form of regularization. We verify this experimentally in Table \ref{tab:reward_margin_mean_and_variance_combined}. 

\subsection{Adding Ties to DPO-RK and DPO-D - Regularization without Performance Degradation}
\label{sec:Adding Ties to DPO-RK and DPO-D - Regularization without Performance Degradation}
Using the same data as in Sec.\ref{ss:tpvkl}, we now evaluate DPO-RK and DPO-D as DPO variants that  explicitly model both ties and clear preferences.  
We use the DPO datasets CP+TP (Sec. \ref{ss:tpvkl}) with the DPO-D and DPO-RK algorithms to produce models DPO-D(CP+TP) and DPO-RK(CP+TP).   
We follow the protocols of Sec.~\ref{sec:adding ties} so that results are directly comparable to earlier DPO(CP) and DPO(CP+TP) results.  We find that the choice of $\nu$ has only a minor effect on task performance (Appendix \ref{app:Ablation Study on the Value of alpha for DPO-RK and DPO-D}). For all experiments we set $\nu^{RK}=3$ and $\nu^D=1$ for DPO-RK and DPO-D as described in Sec.~\ref{ssec:bte}. Training dynamics are provided in Appendix \ref{app:convergence behaviour}.

\begin{figure*}[ht!]
    \centering
    \begin{subfigure}[b]{0.32\textwidth}
        \centering
        \includegraphics[width=1.0\textwidth]{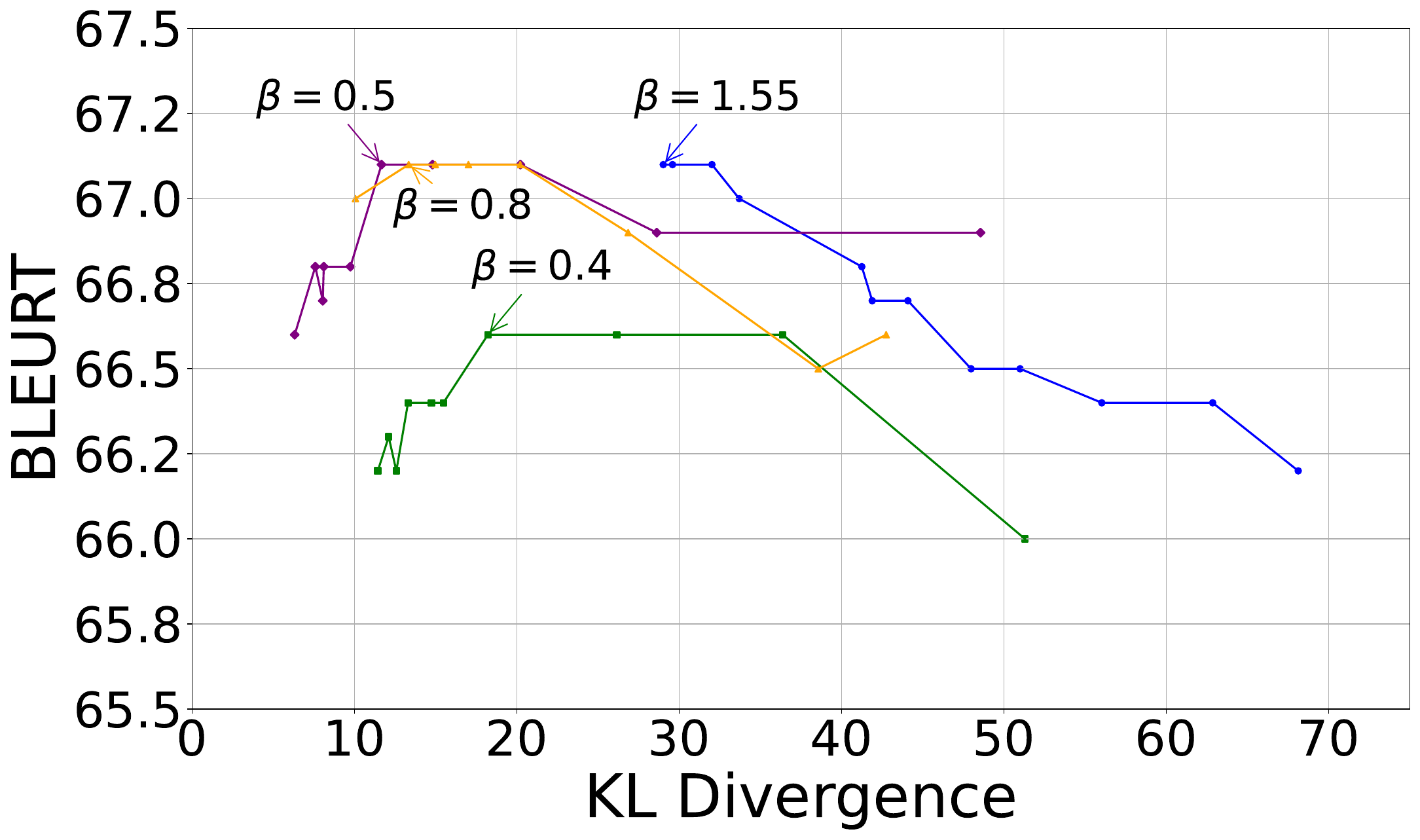}
        \caption{WMT21 ZH-EN frontiers. \\ Baseline BLEURT = 66.0}
        \label{fig:wmt21 zh-en kl-bleurt plot}
    \end{subfigure}
    \hfill
    \begin{subfigure}[b]{0.32\textwidth}
    \includegraphics[width=1.0\textwidth]{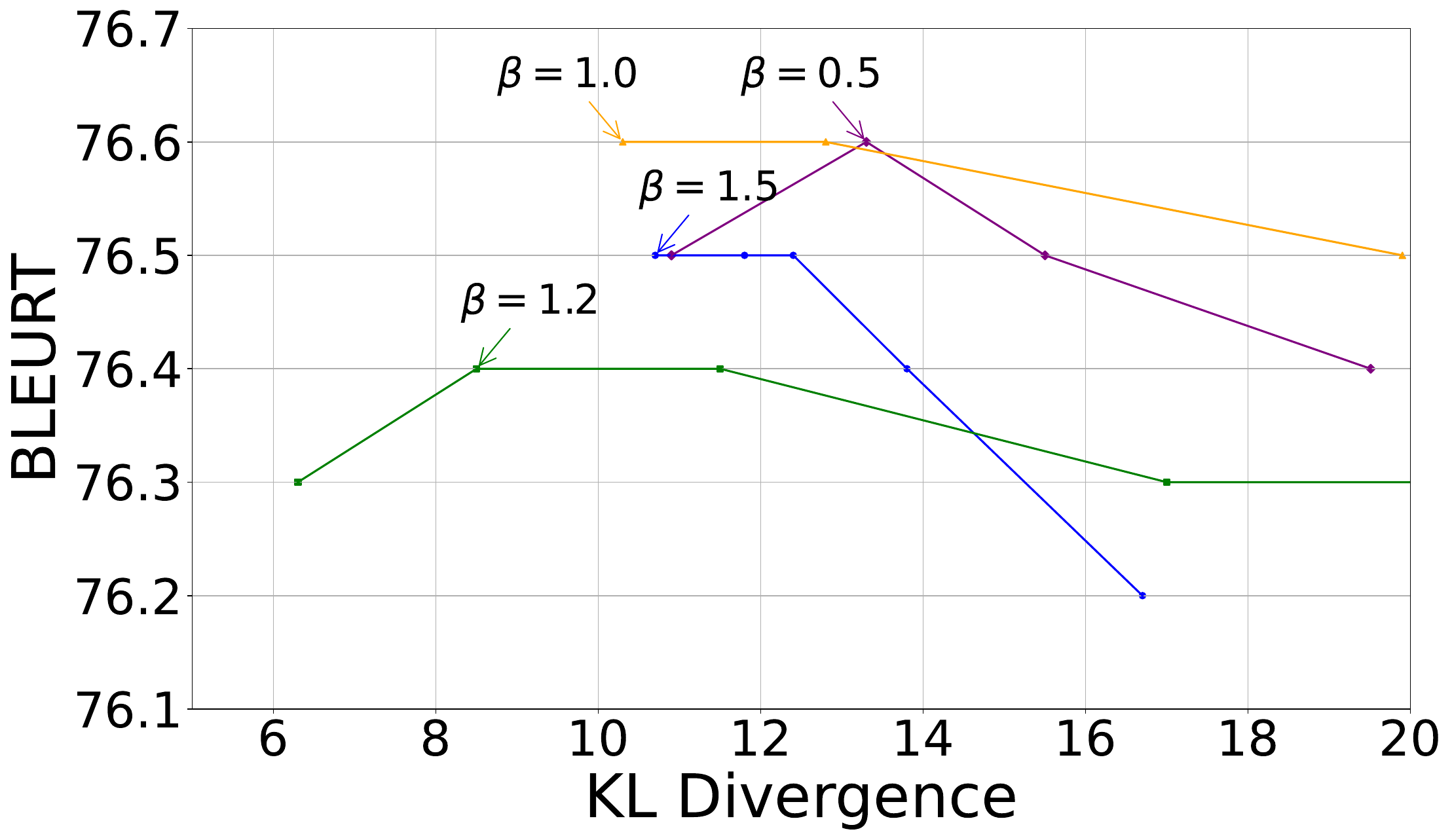}
         \caption{IWSLT FR-EN frontiers. \\ Baseline BLEURT = 75.7}
    \end{subfigure}
    \begin{subfigure}[b]{0.32\textwidth}
        \centering
        \includegraphics[width=1.0\textwidth]{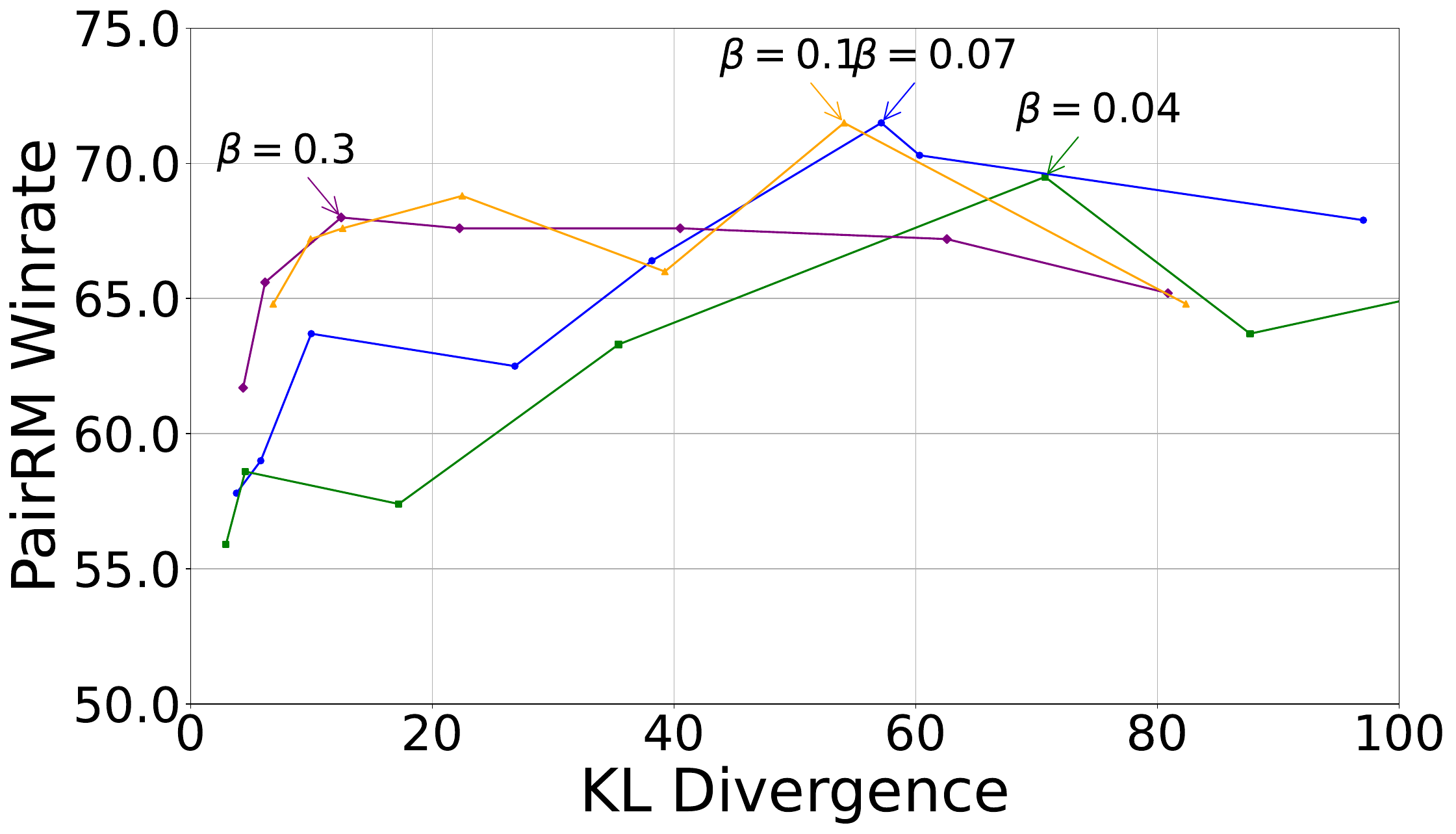}
        \caption{TL;DR frontiers. \\ Baseline win-rate = 37.5\%}
        \label{fig:iwslt-fr-en kl-bleurt plot}
    \end{subfigure}
    \caption{KL-Performance frontiers with DPO(CP) in \textcolor{customblue}{blue}, DPO(CP+TP) in \textcolor{customgreen}{green}, DPO-RK(CP+TP) in \textcolor{custompurple}{purple}, and DPO-D(CP+TP) in \textcolor{orange}{orange}. Full details in Appendix \ref{app:Tabulated KL-Performance Results on NMT and Summarization}. For TL;DR, we additionally report win-rate as judged by GPT-4 in Table \ref{tab:gpt4-evaluation-summarization-app}.}
    \label{fig:rkdavidon-pareto}
\end{figure*}


We observe consistent patterns across three datasets: DPO-D(CP+TP) and DPO-RK(CP+TP) do not suffer the same drops in task performance that DPO(CP+TP) exhibits (Fig.~\ref{fig:rkdavidon-pareto}, orange and purple vs. green). Both reach similar levels of task performance to DPO(CP), but do so at smaller KL values than DPO (Fig.~\ref{fig:rkdavidon-pareto}, orange and purple vs. blue).  
For a given level of KL to reference policy, DPO-D(CP+TP) and DPO-RK(DP+TP) yield higher task performance than DPO(CP). 
Compared to DPO as it is usually done, DPO-RK and DPO-D frontiers are shifted leftwards, showing similar task performance but stronger regularization. Following the analysis in Sec.\ref{sec:Explaining the regularization effect of ties via ideal DPO policy theory}, we empirically show the percentage of TPs in the preference dataset is proportional to the regularization effect in Appendix \ref{app:varying the percentage of ties}.

\subsection{Performance Improvement through DPO-RK and DPO-D}
\label{sec:Performance Improvement through DPO-RK and DPO-D}

In this section, we show two realistic scenarios in machine translation and mathematical reasoning where DPO-RK and DPO-D makes use of otherwise discarded data to achieve performance improvement over strong DPO baselines. We also find DPO-RK and DPO-D leads to overall higher reward accuracy compared to DPO (Appendix~\ref{app:Using DPO-RK and DPO-D model as Preference Classifier}).

\subsubsection{Modeling Conflicting Preferences as Ties}
\label{sec:Modeling Conflicting Preferences}

In curating preference datasets, it is common to employ multiple human or machine annotators to rank candidate responses. Conflicts between annotators can easily arise. This is the case for the ALMA-R-Preference dataset~\citep{ALMA-R_paper}, which contains 3 translations sampled from human, GPT-4, and a base model for each source sentence. These translations are ranked by the average of their XCOMET and KIWI-XXL scores, which are reliable reference-less metrics shown to correlate well with human expert judgment~\citep{WMT23_Metric_paper}. In \citet{ALMA-R_paper}'s original work, the pair of translations with the largest average score difference is selected as a CP and the third, middle-scoring, translation is simply discarded. This yields 3070 CPs, one pair for each source sentence.

We propose a simple scheme to treat translations with conflicting preferences as ties. For pairs not selected as CPs, we identify pairs for which XCOMET and KIWI-XXL disagree about which translation is better as pairs of conflicting preferences. For each source sentence where such pair exists, we add one to DPO-RK/DPO-D training as TP, resulting in 2302 TPs. 
Table \ref{tab:conflicting preference ALMA-R} shows DPO-RK(CP+TP) and DPO-D(CP+TP) yield better performance across all evaluation metrics compared to DPO(CP). This shows the ability of our DPO variants to learn from pairs of conflicting preferences.

\begin{table}[h]
\centering
\label{tab:results}
\begin{tabular}{lccccc}
\toprule
Model  & COMET & KIWI-22 & XCOMET & KIWI-XXL \\
\midrule

ALMA-7B-LoRA \cite{ALMA-R_paper} & 79.78 & 76.81 & 83.94 & 73.65 \\ \midrule
~+ DPO(CP) & 79.66 & 77.73 & 88.87 & 74.12 \\ 
~+ DPO-RK(CP+TP) & \textbf{80.63} & \textbf{78.91} & \textbf{90.40} & \textbf{75.77} \\
~+ DPO-D(CP+TP) & 80.38 & 78.27 & 90.09 & 75.54 \\
\bottomrule
\end{tabular}
\caption{ZH-EN translation performance on ALMA-R-Preference test set. The best result is reported for DPO(CP), DPO-RK(CP+TP) and DPO-D(CP+TP) over a beta sweep in [0.1, 0.3, 0.5, 0.7, 0.9].}
\label{tab:conflicting preference ALMA-R}
\vspace{-1.5em}
\end{table}


\subsubsection{Preserving good behaviour in the reference model through regularization}
\label{sec:Making use of No-Preference Data}

We have shown that including ties leads to a strong regularization effect with respect to the reference model (Sec.\ref{ss:tpvkl}). We now exploit this property to preserve good performance observed in the reference model while leading to stronger overall performance in mathematical reasoning. 

Following the DPO-augmented Self-Training  (DPO-ST) approach by \citet{DPO-SelfTraining-GSM8K}, we curate preference data by sampling model responses to questions in the GSM8K training set. 
For 2310 of 7470 questions, we find that the moderately-sized Qwen2.5-3B-Instruct model~\citep{QWen2.5-technical-report} answers correctly in all of its 8 sampled  responses (at temperature $0.7$). 
In DPO-ST, these 2310 questions would be excluded from preference data, even though they are a substantial portion ($30.9\%$) of the training set.  We show instead that correct responses to these instances can be treated as ties.

We label CPs and TPs as follows: for training instances where there is at least one correct and one incorrect model response, we randomly choose a correct-versus-incorrect pair as a CP; for examples where all model responses are correct, we randomly choose a pair as a TP. We then conduct DPO training on CP and DPO-RK and DPO-D training on CP+TP with a range of beta values. We also report SimPO and CPO training on the CP set for comparison.

\begin{table}[!ht]
\centering

\label{tab:beta_results}
\begin{tabular}{lccccc}
\toprule
$\beta$ & SimPO (CP) & CPO (CP) & DPO (CP) & DPO-RK (CP+TP) & DPO-D (CP+TP) \\
\midrule
0.1 & 82.5\% & 82.6\% & 76.4\% & \textbf{83.5\%} & 81.7\% \\
0.3 & 81.8\% & 83.1\% & 83.7\% & \textbf{84.4\%} & 83.2\% \\
0.5 & 81.8\% & 83.1\% & 83.6\% & 83.8\% & \textbf{84.5\%} \\
0.7 & 81.6\% & 82.6\% & 83.3\% & 83.7\% & \textbf{84.5\%} \\
1.0 & 82.2\% & 83.6\% & 83.5\% & \textbf{84.1\%} & 83.7\% \\
\bottomrule
\end{tabular}
\caption{GSM8K test set performance with greedy decoding after one-epoch of preference optimization for a range of beta values, evaluated by exact match after ``\#\#\#\#''. Tie-compatible variants yield better performance for every beta value. The base Qwen2.5-3B-Instruct model scores 70.9\%. Standard deviation is reported in Appendix \ref{app:GSM8K-variance}.}
\label{tab:GSM8K_DPORKD_results}
\vspace{-1.0em}
\end{table}

Table \ref{tab:GSM8K_DPORKD_results} shows that DPO-RK(CP+TP) and DPO-D(CP+TP) achieve better performance than the DPO(CP) model at every beta value. Intuitively, these TPs provide strong regularization signals for examples where the reference model already excels, thereby preserving good behaviour in DPO-RK/DPO-D training. Such a mechanism is not available in the original DPO formulation. To verify this intuition, we define Preservation Rate (PR) as the percentage of questions where the policy continues to answer the question correctly provided that the reference model gives the correct answer under greedy decoding. 
Lower PR indicates more instances where the answer flips from correct to incorrect after training, which is clearly undesirable.
Table \ref{tab:GSM8K_PR_CR_DR_of-DPO(CP)_and_DPORKD(CP+TP)} shows PR along with KL and overall accuracy on the GSM8K test set. We find that compared to DPO(CP), DPO-RK(CP+TP) and DPO-D(CP+TP) has lower KL and higher PR which leads to higher overall accuracy.

\begin{table}[!ht]
\centering
\label{tab:evaluation_metrics}
\begin{tabular}{lccc}
\toprule
Model (optimal $\beta$) & KL & Overall Acc\% & PR\%~$\uparrow$  \\
\midrule
Base model & 0 & 70.9\% & 100\% \\
DPO(CP), $\beta=0.3$ & 2.258 & 83.7\% & 95.19\% \\
DPO-RK(CP+TP), $\beta=0.3$ & 1.762 & 84.4\% & 97.11\%  \\
DPO-D(CP+TP), $\beta=0.7$ & 1.465 & \textbf{84.5\%} & \textbf{97.65\%} \\
\bottomrule
\end{tabular}
\caption{Preservation Rate (PR\%) of the best-performing models.}
\label{tab:GSM8K_PR_CR_DR_of-DPO(CP)_and_DPORKD(CP+TP)}
\vspace{-2em}
\end{table}

\section{Additional Supporting Experiments and Analyses}
\label{sec:Additional Supporting Experiments and Analyses}

Due to space constraints, we briefly highlight additional findings and provide pointers to supporting experiments in the appendices. 

\paragraph{\textbf{DPO-RK and DPO-D yields higher reward accuracy than DPO}} Like DPO, DPO-RK and DPO-D yields an implicit reward function formed by the trained policy and the reference model, which can be used to classify preference pairs. In Table \ref{tab:RK-based and D-based accuracy for DPO, DPO-RK, DPO-D}, we show that DPO-RK and DPO-D yields overall higher reward accuracy compared to DPO in classifying held-out clear preference pairs and ties. See Appendix \ref{app:Using DPO-RK and DPO-D model as Preference Classifier} for details.

\paragraph{\textbf{The proportion of TPs controls regularization}} We show empirically that the number of ties included in DPO-RK and DPO-D training is proportional to the strength of regularizaion (Table \ref{tab:tp-proportion-regularization}). This provides additional support for the regularization effect of ties shown in Sec.\ref{sec:Adding Ties to DPO-RK and DPO-D - Regularization without Performance Degradation}.

\paragraph{\textbf{Ideal DPO-D policy theory}} Following the ideal DPO policy theory by \citet{PL-paper-chen} used in explaining regularization (Sec.\ref{sec:Explaining the regularization effect of ties via ideal DPO policy theory}), we derive the ideal DPO-D policy theory starting from the ternary classification objective that accounts for ties in Appendix \ref{app:Simulating Ideal DPO-Davidon Model}. This can be view as an extension to \citet{PL-paper-chen}'s theory in that the ideal DPO policy emerges as special case. 

\textbf{Convergence behaviour} In Appendix \ref{app:convergence behaviour}, we investigate how reward margins and gradient scale factors (defined in Eqs.\ref{eq:dpo_grad} - \ref{eq:dgrad_tie}) evolve when tied pairs are included in training. We find that reward margins on TPs remain close to zero as desired, and that reward margins on CPs rise less sharply and to lower magnitude compared to DPO(CP). An interesting consequence is that, contrary to DPO(CP) training where gradient scale factor decreases to $<0.05$ after 1/4 epoch and stabilizes, the gradient scale factors remain non-negligible throughout the entirety of training for DPO(CP+TP), DPO-RK(CP+TP), and DPO-D(CP+TP), indicated prolonged learning.  

\paragraph{Qualitative Analyses} Example responses from DPO, DPO-RK, and DPO-D systems are provided and analyzed in Appendix \ref{app:Qualitative Cases from DPO, DPO-RK, and DPO-D Systems}.

\section{Related Work}
\label{sec:related work}
\paragraph{Related Variants of Direct Preference Optimization}  ODPO~\citep{ODPO-Direct-Preference-Optimization-with-an-offset-Amini} incorporates preference strength in the objective by introducing an offset parameter. In deriving ODPO, the offset parameter of \citet[Theorem 3]{ODPO-DirectPreferenceOptimizationWithAnOffset})  plays a role similar to 
 the sensitivity threshold of \citet{rao1967ties}. 
 We note that the ODPO objective with a fixed offset agrees with our proposed DPO-RK objective restricted to clear preference data, but does not extend to ties. We note DPO-RK is independently proposed by contemporaneous work~\cite{TODO-TernaryPreferences}. Compared to their work, we additionally introduce DPO-D, propose novel tie selection strategies, identify the regularization effects of ties and provide theoretical explanations. Our experimental setups are complementary: while they primarily focus on general chatbots, we conduct experiments in translation, summarization and mathematical reasoning.

\paragraph{Frameworks for Pair-wise Preference Optimization} Several works propose theoretical frameworks for understanding general Preference Optimization from which DPO can be obtained as a special case. \citet{DensityEstimationPerspectiveForPreferenceOptimization} formulate learning from pair-wise preference as learning the implicit preference generating distribution of the annotators. In this formalism, DPO is a well-specified model for the implicit preference distribution assuming that the human preference generative process follows the Bradley-Terry model. Our work can be viewed as assuming an annotator preference generating distribution that allows for the outcome of a tie (i.e. the Rao-Kupper or the Davidson model).
\citet{GPO-GeneralizedPreferenceOptimization-AUnifiedApproachtoOfflineAlignment} propose a generalized approach to deriving offline preference optimization losses through binary classification. In this work, we extend binary classification to ternary classification with the possibility of declaring a tie (Appendix \ref{app:Simulating Ideal DPO-Davidon Model}).

\paragraph{Pair-wise Comparison Models}  \citet{TheManyRoutesToTheUbiquitousbradleyterrymodel} review the  Bradley-Terry model, including its relation to the logistic distribution~\citep{BradleyTerryandLogisiticDistribution} and  Luce choice axiom~\cite{LuceChoice}. 
The Rao-Kupper~\citep{rao1967ties} and the Davidson model~\citep{David-TheMethodOfPairedComparisons} are two notable extensions to Bradley-Terry (Sec.~\ref{ssec:bte}). 
See the review by \citet{David-TheMethodOfPairedComparisons}  and  bibliography by \citet{A-Bibliography-on-the-Method-of-Paired-Comparisons}. Modeling ties remains an active research topic in fields such as sport team ranking~\citep{ImprovePageRankSport},  medical treatments~\citep{TieMedicalRecord}, and chatbots~\citep{RankingChatbots}.


\section{Conclusion}
We have derived and investigated two tie-compatible DPO variants, DPO-RK and DPO-D, by replacing the Bradley-Terry preference model with the Rao-Kupper and the Davidson models, respectively. 
Our experiments show that DPO-RK and DPO-D can accommodate tied pairs in preference data without the degradation in task performance that is observed when the same tied pairs are added to the original DPO. 
We find empirically that the inclusion of ties in preference learning leads to stronger regularization with respect to the reference model and provide theoretical explanations based on ideal DPO policy theory. We further show our DPO variants can improve model performance over DPO by making fuller use of the available data on translation and mathematical reasoning. These findings motivate the use of tied pairs in available preference data as opposed to wastefully discarding them.
 We discuss limitations in Appendix \ref{app:limitation}.

\newpage
\section{Acknowledgement}
Jinghong Chen is supported by the Warwick Postgraduate Studentship from Christ’s College and the Huawei Hisilicon Studentship for the undertaking of the PhD in Engineering at the University of Cambridge.

Guangyu Yang is supported by Cambridge Commonwealth, European and International Trust for the undertaking of the PhD in Engineering at the University of Cambridge.

Jingbiao Mei is supported by Cambridge Commonwealth, European and International Trust for the undertaking of the PhD in Engineering at the University of Cambridge.

Weizhe Lin is supported by a Research Studentship funded by Toyota Motor Europe (RG92562(24020)) for the undertaking of the PhD in Engineering at the University of Cambridge.

Prof. Bill Byrne holds concurrent appointments as a Professor of Information Engineering at Cambridge University and as an Amazon Scholar. This publication describes work performed at Cambridge University and is not associated with Amazon.

We would also like to thank all the reviewers
for their knowledgeable reviews that helped us
strengthen the contribution.







\bibliographystyle{unsrtnat}
\bibliography{reference}

\begin{thebibliography}{44}
\providecommand{\natexlab}[1]{#1}
\providecommand{\url}[1]{\texttt{#1}}
\expandafter\ifx\csname urlstyle\endcsname\relax
  \providecommand{\doi}[1]{doi: #1}\else
  \providecommand{\doi}{doi: \begingroup \urlstyle{rm}\Url}\fi

\bibitem[Rafailov et~al.(2023)Rafailov, Sharma, Mitchell, Manning, Ermon, and Finn]{DPO-Rafael}
Rafael Rafailov, Archit Sharma, Eric Mitchell, Christopher~D. Manning, Stefano Ermon, and Chelsea Finn.
\newblock {D}irect {P}reference {O}ptimization: {Y}our {L}anguage {M}odel is {S}ecretly a {R}eward {M}odel.
\newblock In Alice Oh, Tristan Naumann, Amir Globerson, Kate Saenko, Moritz Hardt, and Sergey Levine, editors, \emph{Advances in Neural Information Processing Systems 36: Annual Conference on Neural Information Processing Systems 2023, NeurIPS 2023, New Orleans, LA, USA, December 10 - 16, 2023}, 2023.
\newblock URL \url{http://papers.nips.cc/paper\_files/paper/2023/hash/a85b405ed65c6477a4fe8302b5e06ce7-Abstract-Conference.html}.

\bibitem[Dubey et~al.(2024)Dubey, Jauhri, Pandey, Kadian, Al-Dahle, Letman, Mathur, Schelten, Yang, Fan, Goyal, Hartshorn, Yang, Mitra, Sravankumar, Korenev, Hinsvark, Rao, Zhang, Rodriguez, Gregerson, Spataru, Roziere, Biron, Tang, Chern, Caucheteux, Nayak, Bi, Marra, McConnell, Keller, Touret, Wu, Wong, Ferrer, Nikolaidis, Allonsius, Song, Pintz, Livshits, Esiobu, Choudhary, Mahajan, Garcia-Olano, Perino, Hupkes, Lakomkin, AlBadawy, Lobanova, Dinan, Smith, Radenovic, Zhang, Synnaeve, Lee, Anderson, Nail, Mialon, Pang, Cucurell, Nguyen, Korevaar, Xu, Touvron, Zarov, Ibarra, Kloumann, Misra, Evtimov, Copet, Lee, Geffert, Vranes, Park, Mahadeokar, Shah, van~der Linde, Billock, Hong, Lee, Fu, Chi, Huang, Liu, Wang, Yu, Bitton, Spisak, Park, Rocca, Johnstun, Saxe, Jia, Alwala, Upasani, Plawiak, Li, Heafield, Stone, El-Arini, Iyer, Malik, Chiu, Bhalla, Rantala-Yeary, van~der Maaten, Chen, Tan, Jenkins, Martin, Madaan, Malo, Blecher, Landzaat, de~Oliveira, Muzzi, Pasupuleti, Singh, Paluri, Kardas, Oldham, Rita,
  Pavlova, Kambadur, Lewis, Si, Singh, Hassan, Goyal, Torabi, Bashlykov, Bogoychev, Chatterji, Duchenne, Çelebi, Alrassy, Zhang, Li, Vasic, Weng, Bhargava, Dubal, Krishnan, Koura, Xu, He, Dong, Srinivasan, Ganapathy, Calderer, Cabral, Stojnic, Raileanu, Girdhar, Patel, Sauvestre, Polidoro, Sumbaly, Taylor, Silva, Hou, Wang, Hosseini, Chennabasappa, Singh, Bell, Kim, Edunov, Nie, Narang, Raparthy, Shen, Wan, Bhosale, Zhang, Vandenhende, Batra, Whitman, Sootla, Collot, Gururangan, Borodinsky, Herman, Fowler, Sheasha, Georgiou, Scialom, Speckbacher, Mihaylov, Xiao, Karn, Goswami, Gupta, Ramanathan, Kerkez, Gonguet, Do, Vogeti, Petrovic, Chu, Xiong, Fu, Meers, Martinet, Wang, Tan, Xie, Jia, Wang, Goldschlag, Gaur, Babaei, Wen, Song, Zhang, Li, Mao, Coudert, Yan, Chen, Papakipos, Singh, Grattafiori, Jain, Kelsey, Shajnfeld, Gangidi, Victoria, Goldstand, Menon, Sharma, Boesenberg, Vaughan, Baevski, Feinstein, Kallet, Sangani, Yunus, Lupu, Alvarado, Caples, Gu, Ho, Poulton, Ryan, Ramchandani, Franco, Saraf,
  Chowdhury, Gabriel, Bharambe, Eisenman, Yazdan, James, Maurer, Leonhardi, Huang, Loyd, Paola, Paranjape, Liu, Wu, Ni, Hancock, Wasti, Spence, Stojkovic, Gamido, Montalvo, Parker, Burton, Mejia, Wang, Kim, Zhou, Hu, Chu, Cai, Tindal, Feichtenhofer, Civin, Beaty, Kreymer, Li, Wyatt, Adkins, Xu, Testuggine, David, Parikh, Liskovich, Foss, Wang, Le, Holland, Dowling, Jamil, Montgomery, Presani, Hahn, Wood, Brinkman, Arcaute, Dunbar, Smothers, Sun, Kreuk, Tian, Ozgenel, Caggioni, Guzmán, Kanayet, Seide, Florez, Schwarz, Badeer, Swee, Halpern, Thattai, Herman, Sizov, Guangyi, Zhang, Lakshminarayanan, Shojanazeri, Zou, Wang, Zha, Habeeb, Rudolph, Suk, Aspegren, Goldman, Damlaj, Molybog, Tufanov, Veliche, Gat, Weissman, Geboski, Kohli, Asher, Gaya, Marcus, Tang, Chan, Zhen, Reizenstein, Teboul, Zhong, Jin, Yang, Cummings, Carvill, Shepard, McPhie, Torres, Ginsburg, Wang, Wu, U, Saxena, Prasad, Khandelwal, Zand, Matosich, Veeraraghavan, Michelena, Li, Huang, Chawla, Lakhotia, Huang, Chen, Garg, A, Silva, Bell,
  Zhang, Guo, Yu, Moshkovich, Wehrstedt, Khabsa, Avalani, Bhatt, Tsimpoukelli, Mankus, Hasson, Lennie, Reso, Groshev, Naumov, Lathi, Keneally, Seltzer, Valko, Restrepo, Patel, Vyatskov, Samvelyan, Clark, Macey, Wang, Hermoso, Metanat, Rastegari, Bansal, Santhanam, Parks, White, Bawa, Singhal, Egebo, Usunier, Laptev, Dong, Zhang, Cheng, Chernoguz, Hart, Salpekar, Kalinli, Kent, Parekh, Saab, Balaji, Rittner, Bontrager, Roux, Dollar, Zvyagina, Ratanchandani, Yuvraj, Liang, Alao, Rodriguez, Ayub, Murthy, Nayani, Mitra, Li, Hogan, Battey, Wang, Maheswari, Howes, Rinott, Bondu, Datta, Chugh, Hunt, Dhillon, Sidorov, Pan, Verma, Yamamoto, Ramaswamy, Lindsay, Lindsay, Feng, Lin, Zha, Shankar, Zhang, Zhang, Wang, Agarwal, Sajuyigbe, Chintala, Max, Chen, Kehoe, Satterfield, Govindaprasad, Gupta, Cho, Virk, Subramanian, Choudhury, Goldman, Remez, Glaser, Best, Kohler, Robinson, Li, Zhang, Matthews, Chou, Shaked, Vontimitta, Ajayi, Montanez, Mohan, Kumar, Mangla, Albiero, Ionescu, Poenaru, Mihailescu, Ivanov, Li, Wang,
  Jiang, Bouaziz, Constable, Tang, Wang, Wu, Wang, Xia, Wu, Gao, Chen, Hu, Jia, Qi, Li, Zhang, Zhang, Adi, Nam, Yu, Wang, Hao, Qian, He, Rait, DeVito, Rosnbrick, Wen, Yang, and Zhao]{dubey2024llama3herdmodels}
Abhimanyu Dubey, Abhinav Jauhri, Abhinav Pandey, Abhishek Kadian, Ahmad Al-Dahle, Aiesha Letman, Akhil Mathur, Alan Schelten, Amy Yang, Angela Fan, Anirudh Goyal, Anthony Hartshorn, Aobo Yang, Archi Mitra, Archie Sravankumar, Artem Korenev, Arthur Hinsvark, Arun Rao, Aston Zhang, Aurelien Rodriguez, Austen Gregerson, Ava Spataru, Baptiste Roziere, Bethany Biron, Binh Tang, Bobbie Chern, Charlotte Caucheteux, Chaya Nayak, Chloe Bi, Chris Marra, Chris McConnell, Christian Keller, Christophe Touret, Chunyang Wu, Corinne Wong, Cristian~Canton Ferrer, Cyrus Nikolaidis, Damien Allonsius, Daniel Song, Danielle Pintz, Danny Livshits, David Esiobu, Dhruv Choudhary, Dhruv Mahajan, Diego Garcia-Olano, Diego Perino, Dieuwke Hupkes, Egor Lakomkin, Ehab AlBadawy, Elina Lobanova, Emily Dinan, Eric~Michael Smith, Filip Radenovic, Frank Zhang, Gabriel Synnaeve, Gabrielle Lee, Georgia~Lewis Anderson, Graeme Nail, Gregoire Mialon, Guan Pang, Guillem Cucurell, Hailey Nguyen, Hannah Korevaar, Hu~Xu, Hugo Touvron, Iliyan Zarov,
  Imanol~Arrieta Ibarra, Isabel Kloumann, Ishan Misra, Ivan Evtimov, Jade Copet, Jaewon Lee, Jan Geffert, Jana Vranes, Jason Park, Jay Mahadeokar, Jeet Shah, Jelmer van~der Linde, Jennifer Billock, Jenny Hong, Jenya Lee, Jeremy Fu, Jianfeng Chi, Jianyu Huang, Jiawen Liu, Jie Wang, Jiecao Yu, Joanna Bitton, Joe Spisak, Jongsoo Park, Joseph Rocca, Joshua Johnstun, Joshua Saxe, Junteng Jia, Kalyan~Vasuden Alwala, Kartikeya Upasani, Kate Plawiak, Ke~Li, Kenneth Heafield, Kevin Stone, Khalid El-Arini, Krithika Iyer, Kshitiz Malik, Kuenley Chiu, Kunal Bhalla, Lauren Rantala-Yeary, Laurens van~der Maaten, Lawrence Chen, Liang Tan, Liz Jenkins, Louis Martin, Lovish Madaan, Lubo Malo, Lukas Blecher, Lukas Landzaat, Luke de~Oliveira, Madeline Muzzi, Mahesh Pasupuleti, Mannat Singh, Manohar Paluri, Marcin Kardas, Mathew Oldham, Mathieu Rita, Maya Pavlova, Melanie Kambadur, Mike Lewis, Min Si, Mitesh~Kumar Singh, Mona Hassan, Naman Goyal, Narjes Torabi, Nikolay Bashlykov, Nikolay Bogoychev, Niladri Chatterji, Olivier
  Duchenne, Onur Çelebi, Patrick Alrassy, Pengchuan Zhang, Pengwei Li, Petar Vasic, Peter Weng, Prajjwal Bhargava, Pratik Dubal, Praveen Krishnan, Punit~Singh Koura, Puxin Xu, Qing He, Qingxiao Dong, Ragavan Srinivasan, Raj Ganapathy, Ramon Calderer, Ricardo~Silveira Cabral, Robert Stojnic, Roberta Raileanu, Rohit Girdhar, Rohit Patel, Romain Sauvestre, Ronnie Polidoro, Roshan Sumbaly, Ross Taylor, Ruan Silva, Rui Hou, Rui Wang, Saghar Hosseini, Sahana Chennabasappa, Sanjay Singh, Sean Bell, Seohyun~Sonia Kim, Sergey Edunov, Shaoliang Nie, Sharan Narang, Sharath Raparthy, Sheng Shen, Shengye Wan, Shruti Bhosale, Shun Zhang, Simon Vandenhende, Soumya Batra, Spencer Whitman, Sten Sootla, Stephane Collot, Suchin Gururangan, Sydney Borodinsky, Tamar Herman, Tara Fowler, Tarek Sheasha, Thomas Georgiou, Thomas Scialom, Tobias Speckbacher, Todor Mihaylov, Tong Xiao, Ujjwal Karn, Vedanuj Goswami, Vibhor Gupta, Vignesh Ramanathan, Viktor Kerkez, Vincent Gonguet, Virginie Do, Vish Vogeti, Vladan Petrovic, Weiwei Chu,
  Wenhan Xiong, Wenyin Fu, Whitney Meers, Xavier Martinet, Xiaodong Wang, Xiaoqing~Ellen Tan, Xinfeng Xie, Xuchao Jia, Xuewei Wang, Yaelle Goldschlag, Yashesh Gaur, Yasmine Babaei, Yi~Wen, Yiwen Song, Yuchen Zhang, Yue Li, Yuning Mao, Zacharie~Delpierre Coudert, Zheng Yan, Zhengxing Chen, Zoe Papakipos, Aaditya Singh, Aaron Grattafiori, Abha Jain, Adam Kelsey, Adam Shajnfeld, Adithya Gangidi, Adolfo Victoria, Ahuva Goldstand, Ajay Menon, Ajay Sharma, Alex Boesenberg, Alex Vaughan, Alexei Baevski, Allie Feinstein, Amanda Kallet, Amit Sangani, Anam Yunus, Andrei Lupu, Andres Alvarado, Andrew Caples, Andrew Gu, Andrew Ho, Andrew Poulton, Andrew Ryan, Ankit Ramchandani, Annie Franco, Aparajita Saraf, Arkabandhu Chowdhury, Ashley Gabriel, Ashwin Bharambe, Assaf Eisenman, Azadeh Yazdan, Beau James, Ben Maurer, Benjamin Leonhardi, Bernie Huang, Beth Loyd, Beto~De Paola, Bhargavi Paranjape, Bing Liu, Bo~Wu, Boyu Ni, Braden Hancock, Bram Wasti, Brandon Spence, Brani Stojkovic, Brian Gamido, Britt Montalvo, Carl
  Parker, Carly Burton, Catalina Mejia, Changhan Wang, Changkyu Kim, Chao Zhou, Chester Hu, Ching-Hsiang Chu, Chris Cai, Chris Tindal, Christoph Feichtenhofer, Damon Civin, Dana Beaty, Daniel Kreymer, Daniel Li, Danny Wyatt, David Adkins, David Xu, Davide Testuggine, Delia David, Devi Parikh, Diana Liskovich, Didem Foss, Dingkang Wang, Duc Le, Dustin Holland, Edward Dowling, Eissa Jamil, Elaine Montgomery, Eleonora Presani, Emily Hahn, Emily Wood, Erik Brinkman, Esteban Arcaute, Evan Dunbar, Evan Smothers, Fei Sun, Felix Kreuk, Feng Tian, Firat Ozgenel, Francesco Caggioni, Francisco Guzmán, Frank Kanayet, Frank Seide, Gabriela~Medina Florez, Gabriella Schwarz, Gada Badeer, Georgia Swee, Gil Halpern, Govind Thattai, Grant Herman, Grigory Sizov, Guangyi, Zhang, Guna Lakshminarayanan, Hamid Shojanazeri, Han Zou, Hannah Wang, Hanwen Zha, Haroun Habeeb, Harrison Rudolph, Helen Suk, Henry Aspegren, Hunter Goldman, Ibrahim Damlaj, Igor Molybog, Igor Tufanov, Irina-Elena Veliche, Itai Gat, Jake Weissman, James
  Geboski, James Kohli, Japhet Asher, Jean-Baptiste Gaya, Jeff Marcus, Jeff Tang, Jennifer Chan, Jenny Zhen, Jeremy Reizenstein, Jeremy Teboul, Jessica Zhong, Jian Jin, Jingyi Yang, Joe Cummings, Jon Carvill, Jon Shepard, Jonathan McPhie, Jonathan Torres, Josh Ginsburg, Junjie Wang, Kai Wu, Kam~Hou U, Karan Saxena, Karthik Prasad, Kartikay Khandelwal, Katayoun Zand, Kathy Matosich, Kaushik Veeraraghavan, Kelly Michelena, Keqian Li, Kun Huang, Kunal Chawla, Kushal Lakhotia, Kyle Huang, Lailin Chen, Lakshya Garg, Lavender A, Leandro Silva, Lee Bell, Lei Zhang, Liangpeng Guo, Licheng Yu, Liron Moshkovich, Luca Wehrstedt, Madian Khabsa, Manav Avalani, Manish Bhatt, Maria Tsimpoukelli, Martynas Mankus, Matan Hasson, Matthew Lennie, Matthias Reso, Maxim Groshev, Maxim Naumov, Maya Lathi, Meghan Keneally, Michael~L. Seltzer, Michal Valko, Michelle Restrepo, Mihir Patel, Mik Vyatskov, Mikayel Samvelyan, Mike Clark, Mike Macey, Mike Wang, Miquel~Jubert Hermoso, Mo~Metanat, Mohammad Rastegari, Munish Bansal, Nandhini
  Santhanam, Natascha Parks, Natasha White, Navyata Bawa, Nayan Singhal, Nick Egebo, Nicolas Usunier, Nikolay~Pavlovich Laptev, Ning Dong, Ning Zhang, Norman Cheng, Oleg Chernoguz, Olivia Hart, Omkar Salpekar, Ozlem Kalinli, Parkin Kent, Parth Parekh, Paul Saab, Pavan Balaji, Pedro Rittner, Philip Bontrager, Pierre Roux, Piotr Dollar, Polina Zvyagina, Prashant Ratanchandani, Pritish Yuvraj, Qian Liang, Rachad Alao, Rachel Rodriguez, Rafi Ayub, Raghotham Murthy, Raghu Nayani, Rahul Mitra, Raymond Li, Rebekkah Hogan, Robin Battey, Rocky Wang, Rohan Maheswari, Russ Howes, Ruty Rinott, Sai~Jayesh Bondu, Samyak Datta, Sara Chugh, Sara Hunt, Sargun Dhillon, Sasha Sidorov, Satadru Pan, Saurabh Verma, Seiji Yamamoto, Sharadh Ramaswamy, Shaun Lindsay, Shaun Lindsay, Sheng Feng, Shenghao Lin, Shengxin~Cindy Zha, Shiva Shankar, Shuqiang Zhang, Shuqiang Zhang, Sinong Wang, Sneha Agarwal, Soji Sajuyigbe, Soumith Chintala, Stephanie Max, Stephen Chen, Steve Kehoe, Steve Satterfield, Sudarshan Govindaprasad, Sumit Gupta,
  Sungmin Cho, Sunny Virk, Suraj Subramanian, Sy~Choudhury, Sydney Goldman, Tal Remez, Tamar Glaser, Tamara Best, Thilo Kohler, Thomas Robinson, Tianhe Li, Tianjun Zhang, Tim Matthews, Timothy Chou, Tzook Shaked, Varun Vontimitta, Victoria Ajayi, Victoria Montanez, Vijai Mohan, Vinay~Satish Kumar, Vishal Mangla, Vítor Albiero, Vlad Ionescu, Vlad Poenaru, Vlad~Tiberiu Mihailescu, Vladimir Ivanov, Wei Li, Wenchen Wang, Wenwen Jiang, Wes Bouaziz, Will Constable, Xiaocheng Tang, Xiaofang Wang, Xiaojian Wu, Xiaolan Wang, Xide Xia, Xilun Wu, Xinbo Gao, Yanjun Chen, Ye~Hu, Ye~Jia, Ye~Qi, Yenda Li, Yilin Zhang, Ying Zhang, Yossi Adi, Youngjin Nam, Yu, Wang, Yuchen Hao, Yundi Qian, Yuzi He, Zach Rait, Zachary DeVito, Zef Rosnbrick, Zhaoduo Wen, Zhenyu Yang, and Zhiwei Zhao.
\newblock {T}he {L}lama 3 {H}erd of {M}odels, 2024.
\newblock URL \url{https://arxiv.org/abs/2407.21783}.

\bibitem[Yang et~al.(2024{\natexlab{a}})Yang, Yang, Hui, Zheng, Yu, Zhou, Li, Li, Liu, Huang, Dong, Wei, Lin, Tang, Wang, Yang, Tu, Zhang, Ma, Yang, Xu, Zhou, Bai, He, Lin, Dang, Lu, Chen, Yang, Li, Xue, Ni, Zhang, Wang, Peng, Men, Gao, Lin, Wang, Bai, Tan, Zhu, Li, Liu, Ge, Deng, Zhou, Ren, Zhang, Wei, Ren, Liu, Fan, Yao, Zhang, Wan, Chu, Liu, Cui, Zhang, Guo, and Fan]{yang2024qwen2technicalreport}
An~Yang, Baosong Yang, Binyuan Hui, Bo~Zheng, Bowen Yu, Chang Zhou, Chengpeng Li, Chengyuan Li, Dayiheng Liu, Fei Huang, Guanting Dong, Haoran Wei, Huan Lin, Jialong Tang, Jialin Wang, Jian Yang, Jianhong Tu, Jianwei Zhang, Jianxin Ma, Jianxin Yang, Jin Xu, Jingren Zhou, Jinze Bai, Jinzheng He, Junyang Lin, Kai Dang, Keming Lu, Keqin Chen, Kexin Yang, Mei Li, Mingfeng Xue, Na~Ni, Pei Zhang, Peng Wang, Ru~Peng, Rui Men, Ruize Gao, Runji Lin, Shijie Wang, Shuai Bai, Sinan Tan, Tianhang Zhu, Tianhao Li, Tianyu Liu, Wenbin Ge, Xiaodong Deng, Xiaohuan Zhou, Xingzhang Ren, Xinyu Zhang, Xipin Wei, Xuancheng Ren, Xuejing Liu, Yang Fan, Yang Yao, Yichang Zhang, Yu~Wan, Yunfei Chu, Yuqiong Liu, Zeyu Cui, Zhenru Zhang, Zhifang Guo, and Zhihao Fan.
\newblock Qwen2 {T}echnical {R}eport, 2024{\natexlab{a}}.
\newblock URL \url{https://arxiv.org/abs/2407.10671}.

\bibitem[Rao and Kupper(1967)]{rao1967ties}
PV~Rao and Lawrence~L Kupper.
\newblock {T}ies in paired-comparison experiments: {A} generalization of the {B}radley-{T}erry model.
\newblock \emph{Journal of the American Statistical Association}, 62\penalty0 (317):\penalty0 194--204, 1967.

\bibitem[Davidson(1970)]{Davidson-On-Extending-Bradley-Terry}
Roger~R. Davidson.
\newblock {O}n {E}xtending the {B}radley-{T}erry {M}odel to {A}ccommodate {T}ies in {P}aired {C}omparison {E}xperiments.
\newblock \emph{Journal of the American Statistical Association}, 65\penalty0 (329):\penalty0 317--328, 1970.
\newblock ISSN 01621459, 1537274X.
\newblock URL \url{http://www.jstor.org/stable/2283595}.

\bibitem[Chen et~al.(2024)Chen, Malladi, Zhang, Chen, Zhang, Ranganath, and Cho]{PL-paper-chen}
Angelica Chen, Sadhika Malladi, Lily~H. Zhang, Xinyi Chen, Qiuyi Zhang, Rajesh Ranganath, and Kyunghyun Cho.
\newblock {P}reference {L}earning {A}lgorithms {D}o {N}ot {L}earn {P}reference {R}ankings.
\newblock \emph{CoRR}, abs/2405.19534, 2024.
\newblock \doi{10.48550/ARXIV.2405.19534}.
\newblock URL \url{https://doi.org/10.48550/arXiv.2405.19534}.

\bibitem[Bradley(1953)]{bradley_statistical_1953}
Ralph~Allan Bradley.
\newblock Some {Statistical} {Methods} in {Taste} {Testing} and {Quality} {Evaluation}.
\newblock \emph{Biometrics}, 9\penalty0 (1):\penalty0 22--38, 1953.
\newblock ISSN 0006-341X.
\newblock \doi{10.2307/3001630}.
\newblock URL \url{https://www.jstor.org/stable/3001630}.
\newblock Publisher: [Wiley, International Biometric Society].

\bibitem[Luce(1959{\natexlab{a}})]{luce1959individual}
R~Duncan Luce.
\newblock \emph{{I}ndividual choice behavior}, volume~4.
\newblock Wiley New York, 1959{\natexlab{a}}.

\bibitem[Wang et~al.(2024)Wang, Li, and Lu]{DPO-SelfTraining-GSM8K}
Tianduo Wang, Shichen Li, and Wei Lu.
\newblock Self-training with direct preference optimization improves chain-of-thought reasoning.
\newblock In Lun-Wei Ku, Andre Martins, and Vivek Srikumar, editors, \emph{Proceedings of the 62nd Annual Meeting of the Association for Computational Linguistics (Volume 1: Long Papers)}, pages 11917--11928, Bangkok, Thailand, August 2024. Association for Computational Linguistics.
\newblock \doi{10.18653/v1/2024.acl-long.643}.
\newblock URL \url{https://aclanthology.org/2024.acl-long.643/}.

\bibitem[Xiong et~al.(2024)Xiong, Shi, Shen, Rosenberg, Qin, Calandriello, Khalman, Joshi, Piot, Saleh, Jin, Zhang, and Liu]{IterativeDPOBuildingMA}
Wei Xiong, Chengshuai Shi, Jiaming Shen, Aviv Rosenberg, Zhen Qin, Daniele Calandriello, Misha Khalman, Rishabh Joshi, Bilal Piot, Mohammad Saleh, Chi Jin, Tong Zhang, and Tianqi Liu.
\newblock Building math agents with multi-turn iterative preference learning.
\newblock \emph{ArXiv}, abs/2409.02392, 2024.
\newblock URL \url{https://api.semanticscholar.org/CorpusID:272398154}.

\bibitem[Liu et~al.(2024)Liu, Zhao, Joshi, Khalman, Saleh, Liu, and Liu]{StatisticalRejectionSamplingForDPO-RSO}
Tianqi Liu, Yao Zhao, Rishabh Joshi, Misha Khalman, Mohammad Saleh, Peter~J. Liu, and Jialu Liu.
\newblock Statistical rejection sampling improves preference optimization.
\newblock In \emph{The Twelfth International Conference on Learning Representations, {ICLR} 2024, Vienna, Austria, May 7-11, 2024}. OpenReview.net, 2024.
\newblock URL \url{https://openreview.net/forum?id=xbjSwwrQOe}.

\bibitem[Akhbardeh et~al.(2021)Akhbardeh, Arkhangorodsky, Biesialska, Bojar, Chatterjee, Chaudhary, Costa-jussa, Espa{\~n}a-Bonet, Fan, Federmann, Freitag, Graham, Grundkiewicz, Haddow, Harter, Heafield, Homan, Huck, Amponsah-Kaakyire, Kasai, Khashabi, Knight, Kocmi, Koehn, Lourie, Monz, Morishita, Nagata, Nagesh, Nakazawa, Negri, Pal, Tapo, Turchi, Vydrin, and Zampieri]{WMT21-dataset-paper}
Farhad Akhbardeh, Arkady Arkhangorodsky, Magdalena Biesialska, Ond{\v{r}}ej Bojar, Rajen Chatterjee, Vishrav Chaudhary, Marta~R. Costa-jussa, Cristina Espa{\~n}a-Bonet, Angela Fan, Christian Federmann, Markus Freitag, Yvette Graham, Roman Grundkiewicz, Barry Haddow, Leonie Harter, Kenneth Heafield, Christopher Homan, Matthias Huck, Kwabena Amponsah-Kaakyire, Jungo Kasai, Daniel Khashabi, Kevin Knight, Tom Kocmi, Philipp Koehn, Nicholas Lourie, Christof Monz, Makoto Morishita, Masaaki Nagata, Ajay Nagesh, Toshiaki Nakazawa, Matteo Negri, Santanu Pal, Allahsera~Auguste Tapo, Marco Turchi, Valentin Vydrin, and Marcos Zampieri.
\newblock Findings of the 2021 conference on machine translation ({WMT}21).
\newblock In Loic Barrault, Ondrej Bojar, Fethi Bougares, Rajen Chatterjee, Marta~R. Costa-jussa, Christian Federmann, Mark Fishel, Alexander Fraser, Markus Freitag, Yvette Graham, Roman Grundkiewicz, Paco Guzman, Barry Haddow, Matthias Huck, Antonio~Jimeno Yepes, Philipp Koehn, Tom Kocmi, Andre Martins, Makoto Morishita, and Christof Monz, editors, \emph{Proceedings of the Sixth Conference on Machine Translation}, pages 1--88, Online, November 2021. Association for Computational Linguistics.
\newblock URL \url{https://aclanthology.org/2021.wmt-1.1/}.

\bibitem[Cettolo et~al.(2017)Cettolo, Federico, Bentivogli, Niehues, St{\"u}ker, Sudoh, Yoshino, and Federmann]{IWSLT17-dataset-paper}
Mauro Cettolo, Marcello Federico, Luisa Bentivogli, Jan Niehues, Sebastian St{\"u}ker, Katsuhito Sudoh, Koichiro Yoshino, and Christian Federmann.
\newblock Overview of the {IWSLT} 2017 evaluation campaign.
\newblock In Sakriani Sakti and Masao Utiyama, editors, \emph{Proceedings of the 14th International Conference on Spoken Language Translation}, pages 2--14, Tokyo, Japan, December 14-15 2017. International Workshop on Spoken Language Translation.
\newblock URL \url{https://aclanthology.org/2017.iwslt-1.1/}.

\bibitem[Sellam et~al.(2020)Sellam, Das, and Parikh]{BLEURT-paper}
Thibault Sellam, Dipanjan Das, and Ankur~P. Parikh.
\newblock {BLEURT:} {L}earning {R}obust {M}etrics for {T}ext {G}eneration.
\newblock In Dan Jurafsky, Joyce Chai, Natalie Schluter, and Joel~R. Tetreault, editors, \emph{Proceedings of the 58th Annual Meeting of the Association for Computational Linguistics, {ACL} 2020, Online, July 5-10, 2020}, pages 7881--7892. Association for Computational Linguistics, 2020.
\newblock \doi{10.18653/V1/2020.ACL-MAIN.704}.
\newblock URL \url{https://doi.org/10.18653/v1/2020.acl-main.704}.

\bibitem[Freitag et~al.(2023)Freitag, Mathur, Lo, Avramidis, Rei, Thompson, Kocmi, Blain, Deutsch, Stewart, Zerva, Castilho, Lavie, and Foster]{WMT23_Metric_paper}
Markus Freitag, Nitika Mathur, Chi{-}kiu Lo, Eleftherios Avramidis, Ricardo Rei, Brian Thompson, Tom Kocmi, Fr{\'{e}}d{\'{e}}ric Blain, Daniel Deutsch, Craig Stewart, Chrysoula Zerva, Sheila Castilho, Alon Lavie, and George~F. Foster.
\newblock Results of {WMT23} metrics shared task: Metrics might be guilty but references are not innocent.
\newblock In Philipp Koehn, Barry Haddon, Tom Kocmi, and Christof Monz, editors, \emph{Proceedings of the Eighth Conference on Machine Translation, {WMT} 2023, Singapore, December 6-7, 2023}, pages 578--628. Association for Computational Linguistics, 2023.
\newblock \doi{10.18653/V1/2023.WMT-1.51}.
\newblock URL \url{https://doi.org/10.18653/v1/2023.wmt-1.51}.

\bibitem[Stiennon et~al.(2020)Stiennon, Ouyang, Wu, Ziegler, Lowe, Voss, Radford, Amodei, and Christiano]{TLDR-dataset-OpenAI}
Nisan Stiennon, Long Ouyang, Jeff Wu, Daniel~M. Ziegler, Ryan Lowe, Chelsea Voss, Alec Radford, Dario Amodei, and Paul~F. Christiano.
\newblock Learning to summarize from human feedback.
\newblock \emph{CoRR}, abs/2009.01325, 2020.
\newblock URL \url{https://arxiv.org/abs/2009.01325}.

\bibitem[Amini et~al.(2024{\natexlab{a}})Amini, Vieira, and Cotterell]{ODPO-DirectPreferenceOptimizationWithAnOffset}
Afra Amini, Tim Vieira, and Ryan Cotterell.
\newblock {D}irect {P}reference {O}ptimization with an {O}ffset.
\newblock In Lun{-}Wei Ku, Andre Martins, and Vivek Srikumar, editors, \emph{Findings of the Association for Computational Linguistics, {ACL} 2024, Bangkok, Thailand and virtual meeting, August 11-16, 2024}, pages 9954--9972. Association for Computational Linguistics, 2024{\natexlab{a}}.
\newblock URL \url{https://aclanthology.org/2024.findings-acl.592}.

\bibitem[Park et~al.(2024)Park, Rafailov, Ermon, and Finn]{DisentanglingLengthDPO}
Ryan Park, Rafael Rafailov, Stefano Ermon, and Chelsea Finn.
\newblock {D}isentangling {L}ength from {Q}uality in {D}irect {P}reference {O}ptimization.
\newblock In Lun{-}Wei Ku, Andre Martins, and Vivek Srikumar, editors, \emph{Findings of the Association for Computational Linguistics, {ACL} 2024, Bangkok, Thailand and virtual meeting, August 11-16, 2024}, pages 4998--5017. Association for Computational Linguistics, 2024.
\newblock URL \url{https://aclanthology.org/2024.findings-acl.297}.

\bibitem[Azar et~al.(2024)Azar, Guo, Piot, Munos, Rowland, Valko, and Calandriello]{IPO-AGeneralTheoreticalParadigmtoUnderstandLearningfromHumanPreferences}
Mohammad~Gheshlaghi Azar, Zhaohan~Daniel Guo, Bilal Piot, R{\'{e}}mi Munos, Mark Rowland, Michal Valko, and Daniele Calandriello.
\newblock {A} {G}eneral {T}heoretical {P}aradigm to {U}nderstand {L}earning from {H}uman {P}references.
\newblock In Sanjoy Dasgupta, Stephan Mandt, and Yingzhen Li, editors, \emph{International Conference on Artificial Intelligence and Statistics, 2-4 May 2024, Palau de Congressos, Valencia, Spain}, volume 238 of \emph{Proceedings of Machine Learning Research}, pages 4447--4455. {PMLR}, 2024.
\newblock URL \url{https://proceedings.mlr.press/v238/gheshlaghi-azar24a.html}.

\bibitem[Xu et~al.(2024)Xu, Sharaf, Chen, Tan, Shen, Durme, Murray, and Kim]{ALMA-R_paper}
Haoran Xu, Amr Sharaf, Yunmo Chen, Weiting Tan, Lingfeng Shen, Benjamin~Van Durme, Kenton Murray, and Young~Jin Kim.
\newblock Contrastive preference optimization: Pushing the boundaries of {LLM} performance in machine translation.
\newblock In \emph{Forty-first International Conference on Machine Learning, {ICML} 2024, Vienna, Austria, July 21-27, 2024}. OpenReview.net, 2024.
\newblock URL \url{https://openreview.net/forum?id=51iwkioZpn}.

\bibitem[Yang et~al.(2024{\natexlab{b}})Yang, Yang, Zhang, Hui, Zheng, Yu, Li, Liu, Huang, Wei, Lin, Yang, Tu, Zhang, Yang, Yang, Zhou, Lin, Dang, Lu, Bao, Yang, Yu, Li, Xue, Zhang, Zhu, Men, Lin, Li, Xia, Ren, Ren, Fan, Su, Zhang, Wan, Liu, Cui, Zhang, and Qiu]{QWen2.5-technical-report}
An~Yang, Baosong Yang, Beichen Zhang, Binyuan Hui, Bo~Zheng, Bowen Yu, Chengyuan Li, Dayiheng Liu, Fei Huang, Haoran Wei, Huan Lin, Jian Yang, Jianhong Tu, Jianwei Zhang, Jianxin Yang, Jiaxi Yang, Jingren Zhou, Junyang Lin, Kai Dang, Keming Lu, Keqin Bao, Kexin Yang, Le~Yu, Mei Li, Mingfeng Xue, Pei Zhang, Qin Zhu, Rui Men, Runji Lin, Tianhao Li, Tingyu Xia, Xingzhang Ren, Xuancheng Ren, Yang Fan, Yang Su, Yichang Zhang, Yu~Wan, Yuqiong Liu, Zeyu Cui, Zhenru Zhang, and Zihan Qiu.
\newblock Qwen2.5 technical report.
\newblock \emph{CoRR}, abs/2412.15115, 2024{\natexlab{b}}.
\newblock \doi{10.48550/ARXIV.2412.15115}.
\newblock URL \url{https://doi.org/10.48550/arXiv.2412.15115}.

\bibitem[Amini et~al.(2024{\natexlab{b}})Amini, Vieira, and Cotterell]{ODPO-Direct-Preference-Optimization-with-an-offset-Amini}
Afra Amini, Tim Vieira, and Ryan Cotterell.
\newblock {D}irect {P}reference {O}ptimization with an {O}ffset.
\newblock \emph{CoRR}, abs/2402.10571, 2024{\natexlab{b}}.
\newblock \doi{10.48550/ARXIV.2402.10571}.
\newblock URL \url{https://doi.org/10.48550/arXiv.2402.10571}.

\bibitem[Guo et~al.(2024)Guo, Yin, Jiang, and Zhang]{TODO-TernaryPreferences}
Yuxiang Guo, Lu~Yin, Bo~Jiang, and Jiaqi Zhang.
\newblock {TODO:} enhancing {LLM} alignment with ternary preferences.
\newblock \emph{CoRR}, abs/2411.02442, 2024.
\newblock \doi{10.48550/ARXIV.2411.02442}.
\newblock URL \url{https://doi.org/10.48550/arXiv.2411.02442}.

\bibitem[Dumoulin et~al.(2024)Dumoulin, Johnson, Castro, Larochelle, and Dauphin]{DensityEstimationPerspectiveForPreferenceOptimization}
Vincent Dumoulin, Daniel~D. Johnson, Pablo~Samuel Castro, Hugo Larochelle, and Yann~N. Dauphin.
\newblock A density estimation perspective on learning from pairwise human preferences.
\newblock \emph{Trans. Mach. Learn. Res.}, 2024, 2024.
\newblock URL \url{https://openreview.net/forum?id=YH3oERVYjF}.

\bibitem[Tang et~al.(2024)Tang, Guo, Zheng, Calandriello, Munos, Rowland, Richemond, Valko, Pires, and Piot]{GPO-GeneralizedPreferenceOptimization-AUnifiedApproachtoOfflineAlignment}
Yunhao Tang, Zhaohan~Daniel Guo, Zeyu Zheng, Daniele Calandriello, R{\'{e}}mi Munos, Mark Rowland, Pierre~Harvey Richemond, Michal Valko, Bernardo~{\'{A}}vila Pires, and Bilal Piot.
\newblock {G}eneralized {P}reference {O}ptimization: {A} {U}nified {A}pproach to {O}ffline {A}lignment.
\newblock In \emph{Forty-first International Conference on Machine Learning, {ICML} 2024, Vienna, Austria, July 21-27, 2024}. OpenReview.net, 2024.
\newblock URL \url{https://openreview.net/forum?id=gu3nacA9AH}.

\bibitem[Hamilton et~al.(2023)Hamilton, Tawn, and Firth]{TheManyRoutesToTheUbiquitousbradleyterrymodel}
Ian Hamilton, Nick Tawn, and David Firth.
\newblock The many routes to the ubiquitous {B}radley-{T}erry model, 2023.
\newblock URL \url{https://arxiv.org/abs/2312.13619}.

\bibitem[Bradley and Gart(1962)]{BradleyTerryandLogisiticDistribution}
Ralph~A. Bradley and John~J. Gart.
\newblock {T}he {A}symptotic {P}roperties of {M}{L} {E}stimators when {S}ampling from {A}ssociated {P}opulations.
\newblock \emph{Biometrika}, 49\penalty0 (1/2):\penalty0 205--214, 1962.
\newblock ISSN 00063444, 14643510.
\newblock URL \url{http://www.jstor.org/stable/2333482}.

\bibitem[Luce(1959{\natexlab{b}})]{LuceChoice}
R~Duncan Luce.
\newblock \emph{{I}ndividual choice behavior}, volume~4.
\newblock Wiley New York, 1959{\natexlab{b}}.

\bibitem[David(1988)]{David-TheMethodOfPairedComparisons}
H.~A. David.
\newblock \emph{{T}he {M}ethod of {P}aired {C}omparisons}.
\newblock Number No. 41 in Griffin's Statistical Monographs and Courses. Charles Griffin and Company Ltd., London, 2nd edition, 1988.

\bibitem[Davidson and Farquhar(1976)]{A-Bibliography-on-the-Method-of-Paired-Comparisons}
Roger~R. Davidson and Peter~H. Farquhar.
\newblock {A} {B}ibliography on the {M}ethod of {P}aired {C}omparisons.
\newblock \emph{Biometrics}, 32\penalty0 (2):\penalty0 241--252, 1976.
\newblock ISSN 0006341X, 15410420.
\newblock URL \url{http://www.jstor.org/stable/2529495}.

\bibitem[Zhou et~al.(2022)Zhou, Wang, Zhang, Zeng, and Medo]{ImprovePageRankSport}
Yuhao Zhou, Ruijie Wang, Yi-Cheng Zhang, An~Zeng, and Mat{\'u}{\v s} Medo.
\newblock Improving {P}agerank using sports results modeling.
\newblock \emph{Knowledge-Based Systems}, 241:\penalty0 108168, 2022.
\newblock ISSN 0950-7051.
\newblock \doi{https://doi.org/10.1016/j.knosys.2022.108168}.
\newblock URL \url{https://www.sciencedirect.com/science/article/pii/S0950705122000314}.

\bibitem[Dong et~al.(2020)Dong, Hoaglin, Qiu, Matsouaka, Chang, Wang, and Vandemeulebroecke]{TieMedicalRecord}
Gaohong Dong, David~C. Hoaglin, Junshan Qiu, Roland~A. Matsouaka, Yu-Wei Chang, Jiuzhou Wang, and Marc Vandemeulebroecke.
\newblock {T}he {W}in {R}atio: {O}n {I}nterpretation and {H}andling of {T}ies.
\newblock 12\penalty0 (1):\penalty0 99--106, 2020.
\newblock \doi{10.1080/19466315.2019.1575279}.
\newblock URL \url{https://doi.org/10.1080/19466315.2019.1575279}.

\bibitem[Ameli et~al.(2025)Ameli, Zhuang, Stoica, and Mahoney]{RankingChatbots}
Siavash Ameli, Siyuan Zhuang, Ion Stoica, and Michael~W. Mahoney.
\newblock A statistical framework for ranking {LLM}-based chatbots.
\newblock In \emph{The Thirteenth International Conference on Learning Representations}, 2025.
\newblock URL \url{https://openreview.net/forum?id=rAoEub6Nw2}.

\bibitem[Yang et~al.(2024{\natexlab{c}})Yang, Chen, Lin, and Byrne]{bruce-dpo-mbr}
Guangyu Yang, Jinghong Chen, Weizhe Lin, and Bill Byrne.
\newblock Direct {P}reference {O}ptimization for {N}eural {M}achine {T}ranslation with {M}inimum {B}ayes {R}isk {D}ecoding.
\newblock In Kevin Duh, Helena Gomez, and Steven Bethard, editors, \emph{Proceedings of the 2024 Conference of the North American Chapter of the Association for Computational Linguistics: Human Language Technologies (Volume 2: Short Papers)}, pages 391--398, Mexico City, Mexico, June 2024{\natexlab{c}}. Association for Computational Linguistics.
\newblock \doi{10.18653/v1/2024.naacl-short.34}.
\newblock URL \url{https://aclanthology.org/2024.naacl-short.34}.

\bibitem[Muennighoff et~al.(2023)Muennighoff, Wang, Sutawika, Roberts, Biderman, Scao, Bari, Shen, Yong, Schoelkopf, Tang, Radev, Aji, Almubarak, Albanie, Alyafeai, Webson, Raff, and Raffel]{Bloomz-model-paper}
Niklas Muennighoff, Thomas Wang, Lintang Sutawika, Adam Roberts, Stella Biderman, Teven~Le Scao, M.~Saiful Bari, Sheng Shen, Zheng~Xin Yong, Hailey Schoelkopf, Xiangru Tang, Dragomir Radev, Alham~Fikri Aji, Khalid Almubarak, Samuel Albanie, Zaid Alyafeai, Albert Webson, Edward Raff, and Colin Raffel.
\newblock {C}rosslingual {G}eneralization through {M}ultitask {F}inetuning.
\newblock In Anna Rogers, Jordan~L. Boyd{-}Graber, and Naoaki Okazaki, editors, \emph{Proceedings of the 61st Annual Meeting of the Association for Computational Linguistics (Volume 1: Long Papers), {ACL} 2023, Toronto, Canada, July 9-14, 2023}, pages 15991--16111. Association for Computational Linguistics, 2023.
\newblock \doi{10.18653/V1/2023.ACL-LONG.891}.
\newblock URL \url{https://doi.org/10.18653/v1/2023.acl-long.891}.

\bibitem[Rei et~al.(2020)Rei, Stewart, Farinha, and Lavie]{comet-paper}
Ricardo Rei, Craig Stewart, Ana~C. Farinha, and Alon Lavie.
\newblock {COMET:} {A} {N}eural {F}ramework for {MT} {E}valuation.
\newblock In Bonnie Webber, Trevor Cohn, Yulan He, and Yang Liu, editors, \emph{Proceedings of the 2020 Conference on Empirical Methods in Natural Language Processing, {EMNLP} 2020, Online, November 16-20, 2020}, pages 2685--2702. Association for Computational Linguistics, 2020.
\newblock \doi{10.18653/V1/2020.EMNLP-MAIN.213}.
\newblock URL \url{https://doi.org/10.18653/v1/2020.emnlp-main.213}.

\bibitem[Biderman et~al.(2023)Biderman, Schoelkopf, Anthony, Bradley, O'Brien, Hallahan, Khan, Purohit, Prashanth, Raff, Skowron, Sutawika, and van~der Wal]{Pythia-model-paper}
Stella Biderman, Hailey Schoelkopf, Quentin~Gregory Anthony, Herbie Bradley, Kyle O'Brien, Eric Hallahan, Mohammad~Aflah Khan, Shivanshu Purohit, USVSN~Sai Prashanth, Edward Raff, Aviya Skowron, Lintang Sutawika, and Oskar van~der Wal.
\newblock {P}ythia: {A} {S}uite for {A}nalyzing {L}arge {L}anguage {M}odels {A}cross {T}raining and {S}caling.
\newblock In Andreas Krause, Emma Brunskill, Kyunghyun Cho, Barbara Engelhardt, Sivan Sabato, and Jonathan Scarlett, editors, \emph{International Conference on Machine Learning, {ICML} 2023, 23-29 July 2023, Honolulu, Hawaii, {USA}}, volume 202 of \emph{Proceedings of Machine Learning Research}, pages 2397--2430. {PMLR}, 2023.
\newblock URL \url{https://proceedings.mlr.press/v202/biderman23a.html}.

\bibitem[Jiang et~al.(2023)Jiang, Ren, and Lin]{PairRM-paper}
Dongfu Jiang, Xiang Ren, and Bill~Yuchen Lin.
\newblock {L}{L}{M}-{B}lender: {E}nsembling {L}arge {L}anguage {M}odels with {P}airwise {C}omparison and {G}enerative {F}usion.
\newblock In \emph{Proceedings of the 61th Annual Meeting of the Association for Computational Linguistics (ACL 2023)}, 2023.

\bibitem[Post(2018)]{bleu_score}
Matt Post.
\newblock {A} {C}all for {C}larity in {R}eporting {{B}{L}{E}{U}} {S}cores.
\newblock In \emph{Proceedings of the Third Conference on Machine Translation: Research Papers}, pages 186--191, Belgium, Brussels, October 2018. Association for Computational Linguistics.
\newblock URL \url{https://www.aclweb.org/anthology/W18-6319}.

\bibitem[Rei et~al.(2022)Rei, Treviso, Guerreiro, Zerva, Farinha, Maroti, de~Souza, Glushkova, Alves, Coheur, Lavie, and Martins]{KIWI-22_paper}
Ricardo Rei, Marcos~V. Treviso, Nuno~Miguel Guerreiro, Chrysoula Zerva, Ana~C. Farinha, Christine Maroti, Jos{\'{e}} G.~C. de~Souza, Taisiya Glushkova, Duarte~M. Alves, Lu{\'{\i}}sa Coheur, Alon Lavie, and Andr{\'{e}} F.~T. Martins.
\newblock Cometkiwi: Ist-unbabel 2022 submission for the quality estimation shared task.
\newblock In Philipp Koehn, Lo{\"{\i}}c Barrault, Ondrej Bojar, Fethi Bougares, Rajen Chatterjee, Marta~R. Costa{-}juss{\`{a}}, Christian Federmann, Mark Fishel, Alexander Fraser, Markus Freitag, Yvette Graham, Roman Grundkiewicz, Paco Guzman, Barry Haddow, Matthias Huck, Antonio Jimeno{-}Yepes, Tom Kocmi, Andr{\'{e}} Martins, Makoto Morishita, Christof Monz, Masaaki Nagata, Toshiaki Nakazawa, Matteo Negri, Aur{\'{e}}lie N{\'{e}}v{\'{e}}ol, Mariana Neves, Martin Popel, Marco Turchi, and Marcos Zampieri, editors, \emph{Proceedings of the Seventh Conference on Machine Translation, {WMT} 2022, Abu Dhabi, United Arab Emirates (Hybrid), December 7-8, 2022}, pages 634--645. Association for Computational Linguistics, 2022.
\newblock URL \url{https://aclanthology.org/2022.wmt-1.60}.

\bibitem[Hoang et~al.(2023)Hoang, Chris, and Braden]{snorkel-iterative-dpo-with-pairrm}
Tran Hoang, Glaze Chris, and Hancock Braden.
\newblock Iterative {DPO} {A}lignment.
\newblock Technical report, Snorkel AI, 2023.

\bibitem[OpenAI et~al.(2024)OpenAI, Achiam, Adler, Agarwal, Ahmad, Akkaya, Aleman, Almeida, Altenschmidt, Altman, Anadkat, Avila, Babuschkin, Balaji, Balcom, Baltescu, Bao, Bavarian, Belgum, Bello, Berdine, Bernadett-Shapiro, Berner, Bogdonoff, Boiko, Boyd, Brakman, Brockman, Brooks, Brundage, Button, Cai, Campbell, Cann, Carey, Carlson, Carmichael, Chan, Chang, Chantzis, Chen, Chen, Chen, Chen, Chen, Chess, Cho, Chu, Chung, Cummings, Currier, Dai, Decareaux, Degry, Deutsch, Deville, Dhar, Dohan, Dowling, Dunning, Ecoffet, Eleti, Eloundou, Farhi, Fedus, Felix, Fishman, Forte, Fulford, Gao, Georges, Gibson, Goel, Gogineni, Goh, Gontijo-Lopes, Gordon, Grafstein, Gray, Greene, Gross, Gu, Guo, Hallacy, Han, Harris, He, Heaton, Heidecke, Hesse, Hickey, Hickey, Hoeschele, Houghton, Hsu, Hu, Hu, Huizinga, Jain, Jain, Jang, Jiang, Jiang, Jin, Jin, Jomoto, Jonn, Jun, Kaftan, Łukasz Kaiser, Kamali, Kanitscheider, Keskar, Khan, Kilpatrick, Kim, Kim, Kim, Kirchner, Kiros, Knight, Kokotajlo, Łukasz Kondraciuk, Kondrich,
  Konstantinidis, Kosic, Krueger, Kuo, Lampe, Lan, Lee, Leike, Leung, Levy, Li, Lim, Lin, Lin, Litwin, Lopez, Lowe, Lue, Makanju, Malfacini, Manning, Markov, Markovski, Martin, Mayer, Mayne, McGrew, McKinney, McLeavey, McMillan, McNeil, Medina, Mehta, Menick, Metz, Mishchenko, Mishkin, Monaco, Morikawa, Mossing, Mu, Murati, Murk, Mély, Nair, Nakano, Nayak, Neelakantan, Ngo, Noh, Ouyang, O'Keefe, Pachocki, Paino, Palermo, Pantuliano, Parascandolo, Parish, Parparita, Passos, Pavlov, Peng, Perelman, de~Avila Belbute~Peres, Petrov, de~Oliveira~Pinto, Michael, Pokorny, Pokrass, Pong, Powell, Power, Power, Proehl, Puri, Radford, Rae, Ramesh, Raymond, Real, Rimbach, Ross, Rotsted, Roussez, Ryder, Saltarelli, Sanders, Santurkar, Sastry, Schmidt, Schnurr, Schulman, Selsam, Sheppard, Sherbakov, Shieh, Shoker, Shyam, Sidor, Sigler, Simens, Sitkin, Slama, Sohl, Sokolowsky, Song, Staudacher, Such, Summers, Sutskever, Tang, Tezak, Thompson, Tillet, Tootoonchian, Tseng, Tuggle, Turley, Tworek, Uribe, Vallone, Vijayvergiya,
  Voss, Wainwright, Wang, Wang, Wang, Ward, Wei, Weinmann, Welihinda, Welinder, Weng, Weng, Wiethoff, Willner, Winter, Wolrich, Wong, Workman, Wu, Wu, Wu, Xiao, Xu, Yoo, Yu, Yuan, Zaremba, Zellers, Zhang, Zhang, Zhao, Zheng, Zhuang, Zhuk, and Zoph]{openai2024gpt4technicalreport}
OpenAI, Josh Achiam, Steven Adler, Sandhini Agarwal, Lama Ahmad, Ilge Akkaya, Florencia~Leoni Aleman, Diogo Almeida, Janko Altenschmidt, Sam Altman, Shyamal Anadkat, Red Avila, Igor Babuschkin, Suchir Balaji, Valerie Balcom, Paul Baltescu, Haiming Bao, Mohammad Bavarian, Jeff Belgum, Irwan Bello, Jake Berdine, Gabriel Bernadett-Shapiro, Christopher Berner, Lenny Bogdonoff, Oleg Boiko, Madelaine Boyd, Anna-Luisa Brakman, Greg Brockman, Tim Brooks, Miles Brundage, Kevin Button, Trevor Cai, Rosie Campbell, Andrew Cann, Brittany Carey, Chelsea Carlson, Rory Carmichael, Brooke Chan, Che Chang, Fotis Chantzis, Derek Chen, Sully Chen, Ruby Chen, Jason Chen, Mark Chen, Ben Chess, Chester Cho, Casey Chu, Hyung~Won Chung, Dave Cummings, Jeremiah Currier, Yunxing Dai, Cory Decareaux, Thomas Degry, Noah Deutsch, Damien Deville, Arka Dhar, David Dohan, Steve Dowling, Sheila Dunning, Adrien Ecoffet, Atty Eleti, Tyna Eloundou, David Farhi, Liam Fedus, Niko Felix, Simón~Posada Fishman, Juston Forte, Isabella Fulford, Leo
  Gao, Elie Georges, Christian Gibson, Vik Goel, Tarun Gogineni, Gabriel Goh, Rapha Gontijo-Lopes, Jonathan Gordon, Morgan Grafstein, Scott Gray, Ryan Greene, Joshua Gross, Shixiang~Shane Gu, Yufei Guo, Chris Hallacy, Jesse Han, Jeff Harris, Yuchen He, Mike Heaton, Johannes Heidecke, Chris Hesse, Alan Hickey, Wade Hickey, Peter Hoeschele, Brandon Houghton, Kenny Hsu, Shengli Hu, Xin Hu, Joost Huizinga, Shantanu Jain, Shawn Jain, Joanne Jang, Angela Jiang, Roger Jiang, Haozhun Jin, Denny Jin, Shino Jomoto, Billie Jonn, Heewoo Jun, Tomer Kaftan, Łukasz Kaiser, Ali Kamali, Ingmar Kanitscheider, Nitish~Shirish Keskar, Tabarak Khan, Logan Kilpatrick, Jong~Wook Kim, Christina Kim, Yongjik Kim, Jan~Hendrik Kirchner, Jamie Kiros, Matt Knight, Daniel Kokotajlo, Łukasz Kondraciuk, Andrew Kondrich, Aris Konstantinidis, Kyle Kosic, Gretchen Krueger, Vishal Kuo, Michael Lampe, Ikai Lan, Teddy Lee, Jan Leike, Jade Leung, Daniel Levy, Chak~Ming Li, Rachel Lim, Molly Lin, Stephanie Lin, Mateusz Litwin, Theresa Lopez, Ryan
  Lowe, Patricia Lue, Anna Makanju, Kim Malfacini, Sam Manning, Todor Markov, Yaniv Markovski, Bianca Martin, Katie Mayer, Andrew Mayne, Bob McGrew, Scott~Mayer McKinney, Christine McLeavey, Paul McMillan, Jake McNeil, David Medina, Aalok Mehta, Jacob Menick, Luke Metz, Andrey Mishchenko, Pamela Mishkin, Vinnie Monaco, Evan Morikawa, Daniel Mossing, Tong Mu, Mira Murati, Oleg Murk, David Mély, Ashvin Nair, Reiichiro Nakano, Rajeev Nayak, Arvind Neelakantan, Richard Ngo, Hyeonwoo Noh, Long Ouyang, Cullen O'Keefe, Jakub Pachocki, Alex Paino, Joe Palermo, Ashley Pantuliano, Giambattista Parascandolo, Joel Parish, Emy Parparita, Alex Passos, Mikhail Pavlov, Andrew Peng, Adam Perelman, Filipe de~Avila Belbute~Peres, Michael Petrov, Henrique~Ponde de~Oliveira~Pinto, Michael, Pokorny, Michelle Pokrass, Vitchyr~H. Pong, Tolly Powell, Alethea Power, Boris Power, Elizabeth Proehl, Raul Puri, Alec Radford, Jack Rae, Aditya Ramesh, Cameron Raymond, Francis Real, Kendra Rimbach, Carl Ross, Bob Rotsted, Henri Roussez,
  Nick Ryder, Mario Saltarelli, Ted Sanders, Shibani Santurkar, Girish Sastry, Heather Schmidt, David Schnurr, John Schulman, Daniel Selsam, Kyla Sheppard, Toki Sherbakov, Jessica Shieh, Sarah Shoker, Pranav Shyam, Szymon Sidor, Eric Sigler, Maddie Simens, Jordan Sitkin, Katarina Slama, Ian Sohl, Benjamin Sokolowsky, Yang Song, Natalie Staudacher, Felipe~Petroski Such, Natalie Summers, Ilya Sutskever, Jie Tang, Nikolas Tezak, Madeleine~B. Thompson, Phil Tillet, Amin Tootoonchian, Elizabeth Tseng, Preston Tuggle, Nick Turley, Jerry Tworek, Juan Felipe~Cerón Uribe, Andrea Vallone, Arun Vijayvergiya, Chelsea Voss, Carroll Wainwright, Justin~Jay Wang, Alvin Wang, Ben Wang, Jonathan Ward, Jason Wei, CJ~Weinmann, Akila Welihinda, Peter Welinder, Jiayi Weng, Lilian Weng, Matt Wiethoff, Dave Willner, Clemens Winter, Samuel Wolrich, Hannah Wong, Lauren Workman, Sherwin Wu, Jeff Wu, Michael Wu, Kai Xiao, Tao Xu, Sarah Yoo, Kevin Yu, Qiming Yuan, Wojciech Zaremba, Rowan Zellers, Chong Zhang, Marvin Zhang, Shengjia
  Zhao, Tianhao Zheng, Juntang Zhuang, William Zhuk, and Barret Zoph.
\newblock Gpt-4 technical report, 2024.
\newblock URL \url{https://arxiv.org/abs/2303.08774}.

\bibitem[Meng et~al.(2024)Meng, Xia, and Chen]{SimPO}
Yu~Meng, Mengzhou Xia, and Danqi Chen.
\newblock {S}im{P}{O}: {S}imple {P}reference {O}ptimization with a {R}eference-{F}ree {R}eward.
\newblock \emph{CoRR}, abs/2405.14734, 2024.
\newblock \doi{10.48550/ARXIV.2405.14734}.
\newblock URL \url{https://doi.org/10.48550/arXiv.2405.14734}.

\bibitem[Ethayarajh et~al.(2024)Ethayarajh, Xu, Muennighoff, Jurafsky, and Kiela]{KTOpaper}
Kawin Ethayarajh, Winnie Xu, Niklas Muennighoff, Dan Jurafsky, and Douwe Kiela.
\newblock {KTO:} {M}odel {A}lignment as {P}rospect {T}heoretic {O}ptimization.
\newblock \emph{CoRR}, abs/2402.01306, 2024.
\newblock \doi{10.48550/ARXIV.2402.01306}.
\newblock URL \url{https://doi.org/10.48550/arXiv.2402.01306}.

\end{thebibliography}

\newpage

\appendix



\section{Limitations}
\label{app:limitation}
 The effect of accommodating ties in preference learning can be further investigated using human-annotated tied pairs. However, at the time of writing, there is no substantial preference dataset with annotated ties; notably, current annotation guidelines are typically written to explicitly exclude ties. We note that this enforcement of win/lose judgments has likely conditioned the generative process of human preference towards the Bradley-Terry model. A meaningful extension of this work would be to assess the effectiveness of DPO-RK and DPO-D on preference datasets where the annotators are asked to identify ties.
 As explained in Sec \ref{ssec:bte}, the hyper-parameter $\nu_{RK}$ and $\nu_D$ can be tuned which would require either grid search or estimation given ground-truth preference/tie probabilities. We find that the choice of $\nu_{RK}=3$ and $\nu_D=1$ as motivated in Sec \ref{ssec:bte} works well and we did not need to tune the parameter to obtain good performance. It is likely that better performance and more efficient frontiers can be obtained by tuning $\nu$ to better fit the underlying preference generative process for both DPO-RK and DPO-D. Given our focus on accommodating ties from a modeling perspective, we leave performance optimization to future works concerning applications.

\section{Mathematical Derivations}
\label{app:Mathematical Derivations}

\subsection{Rao-Kupper and Davidson Preference and Tie Probabilities}
\label{app:Rao-Kupper and Davidson Preference/Tie Probabilities}

We derive the win and tie probabilities as functions of the reward margin $d_\theta(x, y_w, y_l)=r_\theta(x,y_w) - r_\theta(x,y_l)$ (Eq \ref{eq:rewarddiff}) under the Rao-Kupper (Eq \ref{eq:rkwinpprob}, \ref{eq:rktiepprob}) and  Davidson formulations (Eq \ref{eq:dwinpprob}, \ref{eq:dtiepprob}).

The Rao-Kupper win and tie probabilities can be obtained by substituting $\lambda_w = e^{r_\theta(x,y_w)}$, $\lambda_l = e^{r_\theta(x,y_l)}$ and $\nu_{RK}=e^{\alpha_{RK}}$ into Eq \ref{eq:rkwin} and Eq \ref{eq:rktie}, respectively:
\begin{align*}
    &p_\theta^{RK}(y_w \succ y_l)  =  \frac{\lambda_w}{\lambda_w + \nu_{RK} \lambda_l} = \frac{e^{r_\theta(x,y_w)}}{e^{r_\theta(x,y_w)}+\nu_{RK}e^{r_\theta(x,y_l)}} 
    \\ &= \frac{1}{1+e^{r_\theta(x,y_l)-r_\theta(x,y_w)+\alpha_{RK}}} = \sigma(d_\theta(x, y_w, y_l)-\alpha_{RK})  
    \end{align*}
\begin{align*}
    &p_\theta^{RK}(y_w \sim y_l) =  \frac{(\nu_{RK}^2-1) \lambda_w \lambda_l}{(\lambda_w + \nu_{RK} \lambda_l)(\lambda_l + \nu_{RK} \lambda_w)} \\ &= \frac{(\nu_{RK}^2-1) e^{r_\theta(x,y_w)} e^{r_\theta(x,y_l)}}{(e^{r_\theta(x,y_w)} + \nu_{RK} e^{r_\theta(x,y_l)})(e^{r_\theta(x,y_l)} + \nu_{RK} e^{r_\theta(x,y_w)})} 
    \\ &=  (\nu_{RK}^2-1)\big( \frac{e^{r_\theta(x,y_l)}}{e^{r_\theta(x,y_l)} + \nu_{RK} e^{r_\theta(x,y_w)}}\big) \big(\frac{e^{r_\theta(x,y_w)}}{e^{r_\theta(x,y_w)} + \nu_{RK} e^{r_\theta(x,y_l)}}\big) 
    \\ &= (\nu_{RK}^2-1)\,   \sigma(- d_\theta(x, y_w, y_l) - \alpha_{RK})\, \sigma(d_\theta(x, y_w, y_l) - \alpha_{RK})  \\
    &=   (\nu_{RK}^2-1)\,  \sigma(- d_\theta(x, y_w, y_l) - \alpha_{RK})\,    p_\theta^{RK}(y_w \succ y_l) 
\end{align*}

The Davidson win and tie probabilities can be obtained with the same substitution into Eq \ref{eq:davidsonwin} and Eq \ref{eq:davidsontie}, respectively:
\begin{align*}
    &p_\theta^D(y_w \succ_x y_l) 
    = \frac{\lambda_w}{\lambda_w + \lambda_l + 2 \nu_D \sqrt{\lambda_w\lambda_l}}
    \\ &= \frac{e^{r_\theta(x,y_w)}}{e^{r_\theta(x,y_w)} + e^{r_\theta(x,y_l)} + 2 \nu_D \sqrt{e^{r_\theta(x,y_w)+r_\theta(x,y_l)}}} \\
    &= \frac{ 1 }{ 1 + e^{-d_\theta(x,y_w,y_l)} + 2 \nu_D e^{-d_\theta(x,y_w,y_l)/2}} 
    \\
    &p_\theta^D(y_w \sim_x y_l) =  \frac{ 2 \nu_D \sqrt{\lambda_w\lambda_l}}{\lambda_w + \lambda_l + 2 \nu_D  \sqrt{\lambda_w\lambda_l}} \\ &= (2\nu_D \lambda_w^{-\frac{1}{2}} \lambda_l^{\frac{1}{2}})   \frac{\lambda_w}{\lambda_w + \lambda_l + 2 \nu_D \sqrt{\lambda_w\lambda_l}} \\
    &= 2\, \nu_D\, e^{-\frac{1}{2}(r_\theta(x,y_w)-r_\theta(x,y_l))}\, p_\theta^{D}(y_w\succ_x y_l) \\
    &= 2\, \nu_D\, e^{-d_\theta(x,y_w,y_l)/2}\, p_\theta^{D}(y_w\succ_x y_l) 
\end{align*}
In Figure \ref{fig:probability analytical} we plot the preference and tie probabilities as a function of reward margin $d_\theta$ under Bradley-Terry (as used in DPO), Rao-Kupper (as used in DPO-RK), and Davidson (as used in DPO-D). 


\subsection{Gradients for DPO-RK and DPO-D}
\label{app:Gradients for DPO-RK and DPO-D}
The gradients of the Rao-Kupper log probabilities (Eq \ref{eq:rkgrad_preference}, \ref{eq:rkgrad_tie}) are as follows. For convenience, we use the short-hand $d_\theta$ for $d_\theta(x, y_w, y_l)$. 
\begin{align*}
\nabla_\theta & \log p_\theta^{RK}(y_w \succ_x y_l) \\ &=  \nabla_\theta \log \sigma(d_\theta-\alpha_{RK}) \\ 
&= \sigma( \alpha_{RK} - d_\theta ) \nabla_\theta d_\theta(x, y_w, y_l) \\
&= \underbrace{\sigma( \alpha_{RK} - d_\theta )}_{  \Delta^{RK}_{win}(d_\theta) }
   \Bigl[ \nabla_\theta \log \pi_\theta(y_w | x) - \nabla_\theta \log\pi_\theta(y_l | x) \Bigr]  \\
   & =  \Delta^{RK}_{win}(d_\theta) \nabla_\theta \log \frac{\pi_\theta(y_w | x)}{\pi_\theta(y_l | x)}
   \\
   \nabla_\theta & \log p_\theta^{RK}(y_w \sim_x y_l)  \\
   &= \nabla_\theta \Bigl[ \log \sigma(-d_\theta - \alpha_{RK}) + \log \sigma(d_\theta - \alpha_{RK})    \Bigr] 
   \\
& = - \sigma(d_\theta + \alpha_{RK})\, \nabla_\theta d_\theta + 
\sigma(-d_\theta + \alpha_{RK})\, \nabla_\theta d_\theta \\
& = \underbrace{\Big(\sigma(\alpha_{RK} -d_\theta) - \sigma(\alpha_{RK}+d_\theta)\Big)}_{\Delta^{RK}_{tie}(d_\theta)} 
\nabla_\theta d_\theta   \\
& = \Delta^{RK}_{tie}(d_\theta) \nabla_\theta \log \frac{\pi_\theta(y_w | x)}{\pi_\theta(y_l | x)}
\end{align*}
The gradients of the Davidson log-probabilities (Eq \ref{eq:dgrad_preference}, \ref{eq:dgrad_tie}) follow similarly. 
\begin{align*}
\nabla_\theta & \log p^{D}_\theta(y_w\succ_x y_l)   =  \frac{\nabla_\theta  p^{D}_\theta(y_w\succ_x y_l) }{p^{D}_\theta(y_w\succ_x y_l) } \\
&= \frac{\nabla_\theta  (1+e^{-d_\theta}+ 2\nu e^{-d_\theta/2})^{-1} }{p^{D}_\theta(y_w\succ_x y_l) } \\
&  =
(-1) \frac{(1+e^{-d_\theta}+ 2\nu e^{-d_\theta/2})^{-2}}{p^{D}_\theta(y_w\succ_x y_l) }  (- e^{d_\theta} - \nu e^{d_\theta/2})  \nabla_\theta d_\theta  \\
& = \frac{p^{D}_\theta(y_w\succ_x y_l)^2}{p^{D}_\theta(y_w\succ_x y_l)}(e^{-d_\theta} + \nu e^{-d_\theta/2}) \nabla_\theta d_\theta \\ 
& =
p^{D}_\theta(y_w\succ_x y_l)(e^{-d_\theta} + \nu e^{-d_\theta/2}) \nabla_\theta d_\theta  \\
 & = \underbrace{\frac{  e^{-d_\theta} + \nu e^{-d_\theta/2} }{  1+e^{-d_\theta}+ 2\nu e^{-d_\theta/2}}}_{  \Delta_{win}^{D}(d_\theta) }\nabla_\theta d_\theta \\
&= \Delta_{win}^{D}(d_\theta) \nabla_\theta \log \frac{\pi_\theta(y_w | x)}{\pi_\theta(y_l | x)}
\end{align*}
\begin{align*}
\nabla_\theta  & \log p^{D}_\theta(y_w\sim_x y_l)   = \nabla_\theta \log \big( 2 \nu e^{-d_\theta/2} p^{D}_\theta(y_w\succ_x y_l) \big)   \\ &= \nabla_\theta  \Bigl[ \log p^{D}_\theta(y_w\succ_x y_l) -d_\theta/2  \Bigr] \\
& = \Bigl[\frac{  e^{-d_\theta} + \nu e^{-d_\theta/2} }{  1+e^{-d_\theta}+ 2\nu e^{-d_\theta/2}}  -\frac{1}{2}  \Bigr] \nabla_\theta d_\theta \\
&= \underbrace{\Bigl[  \Delta_{win}^{D}(d_\theta) - \frac{1}{2} \bigr]}_{ \Delta_{tie}^{D}(d_\theta)} \Bigl[   \nabla_\theta \log \pi_\theta(y_w | x) - \nabla_\theta \log\pi_\theta(y_w | x)     \Bigr]   \\
& = \Delta_{tie}^{D}(d_\theta) \nabla_\theta \log \frac{\pi_\theta(y_w | x)}{\pi_\theta(y_l | x)}
\end{align*}
For illustration, we plot $\Delta_{win}$ and $\Delta_{tie}$ as a function of reward margin $d_\theta$ in Figure \ref{fig:graident scale factor analytical}. 

The quantities $\nabla_\theta \mathcal L_D(\pi_\theta; \pi_{ref})$ and $\nabla_\theta \mathcal L_{RK}(\pi_\theta; \pi_{ref})$ follow by substituting the above results into the gradient of Eq~\ref{eq:extended_obj_with_tie}
\begin{align}
    \nabla_\theta & \mathcal L(\pi_\theta; \pi_{ref}) =  \notag \\
    &- \nabla_\theta\mathbb{E}_{x,y_w,y_l,t=0} \log p_\theta(y_w \succ_x y_l) \notag \\
    &- \nabla_\theta\mathbb{E}_{x,y_w,y_l,t=1} \log p_\theta(y_w \sim_x y_l) 
    \label{eq:grad_extended_obj_with_tie}
\end{align}

\newpage 

\begin{figure}[!t]
    \begin{subfigure}{0.49\textwidth}
        \includegraphics[width=1.0\textwidth]{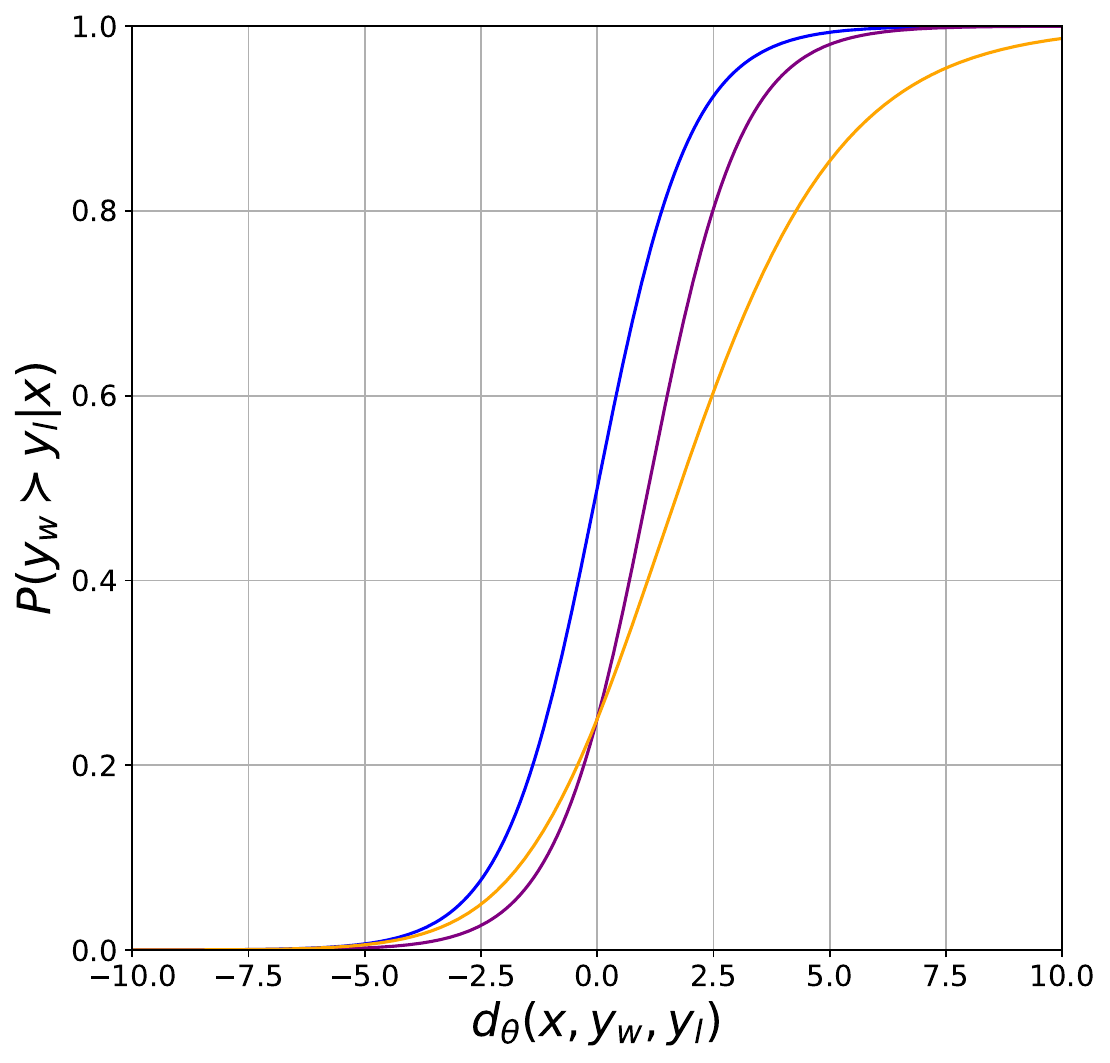}
    \end{subfigure}
    \hfill
    \begin{subfigure}{0.49\textwidth}
        \includegraphics[width=1.0\linewidth]{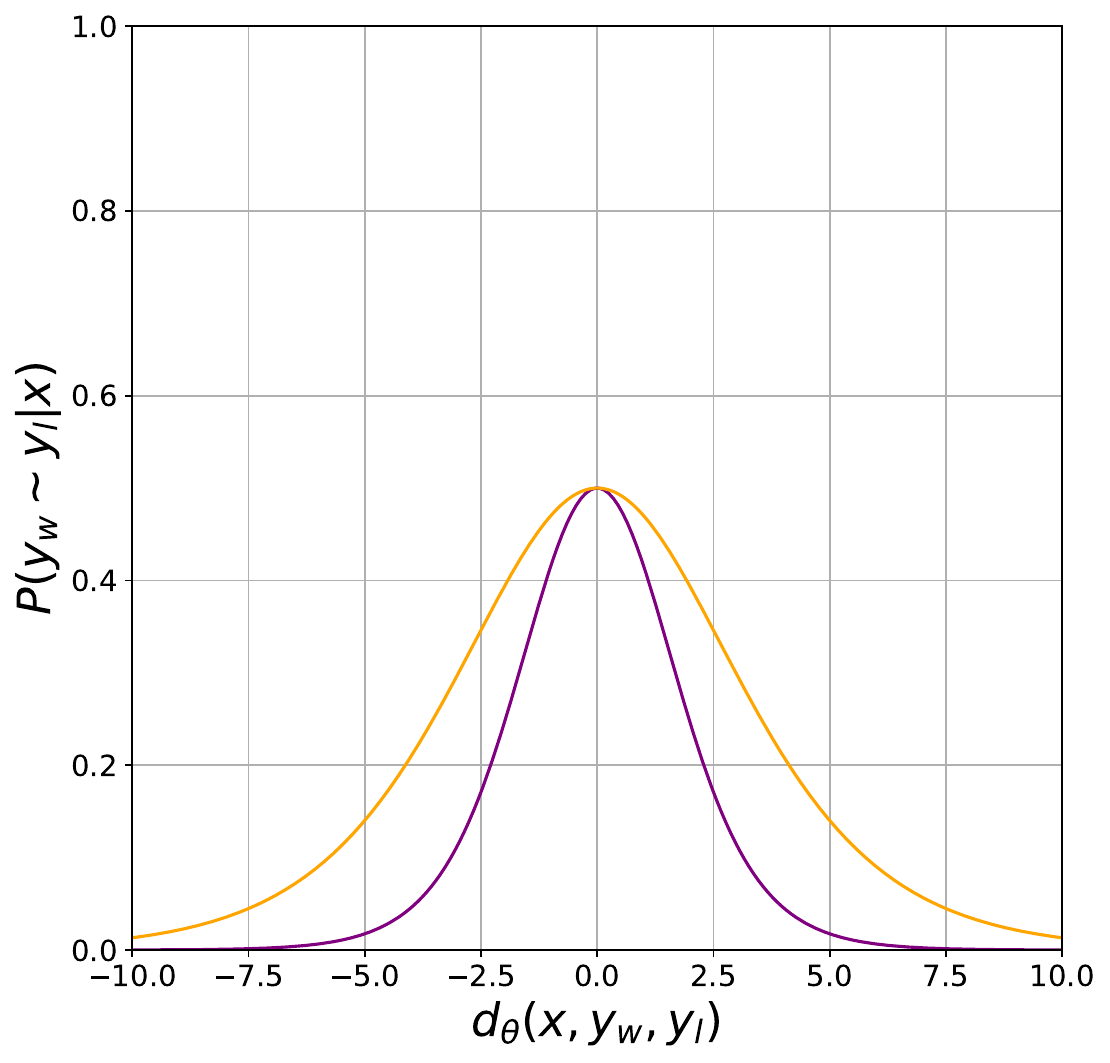}
    \end{subfigure}
    \caption{The clear preference probabilities $P(y_w\succ y_l|x)$ (left) and tie probabilities $P(y_w\sim y_l | x)$ (right)  as a function of reward margins $d_\theta(x, y_w, y_l)$ for Bradley-Terry (as used in DPO)  (\textcolor{customblue}{blue}), Rao-Kupper (\textcolor{custompurple}{purple}) (as used in DPO-RK),  and Davidson (\textcolor{orange}{orange}) (as used in DPO-D). $\alpha_{RK}=\log3$ and $\nu_D=1$ are used in producing these plots. }
    \label{fig:probability analytical}
\end{figure}
\begin{figure}[!ht]
    \begin{subfigure}{0.49\textwidth}
        \includegraphics[width=1.0\textwidth]{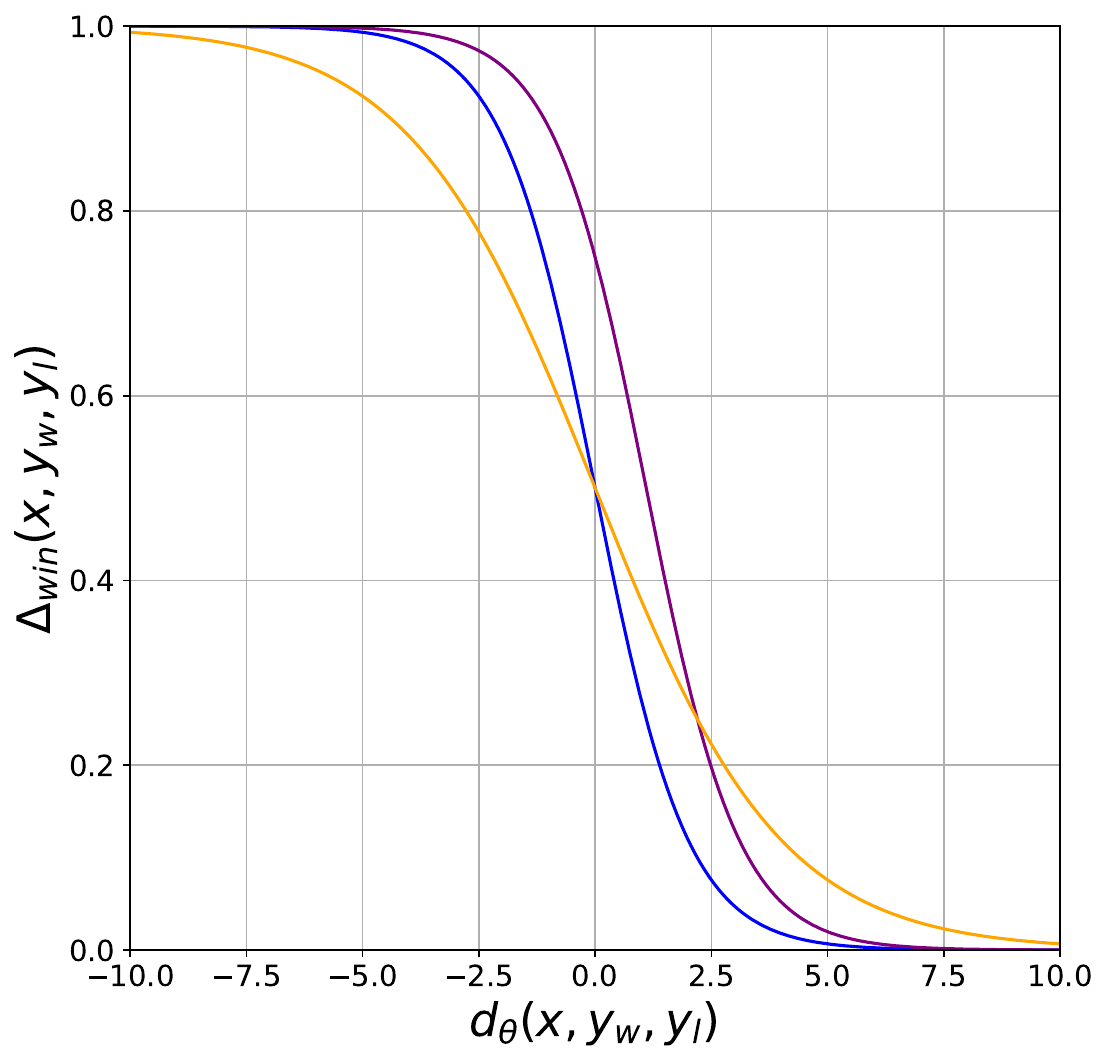}
    \end{subfigure}
    \hfill
    \begin{subfigure}{0.49\textwidth}
        \includegraphics[width=1.0\linewidth]{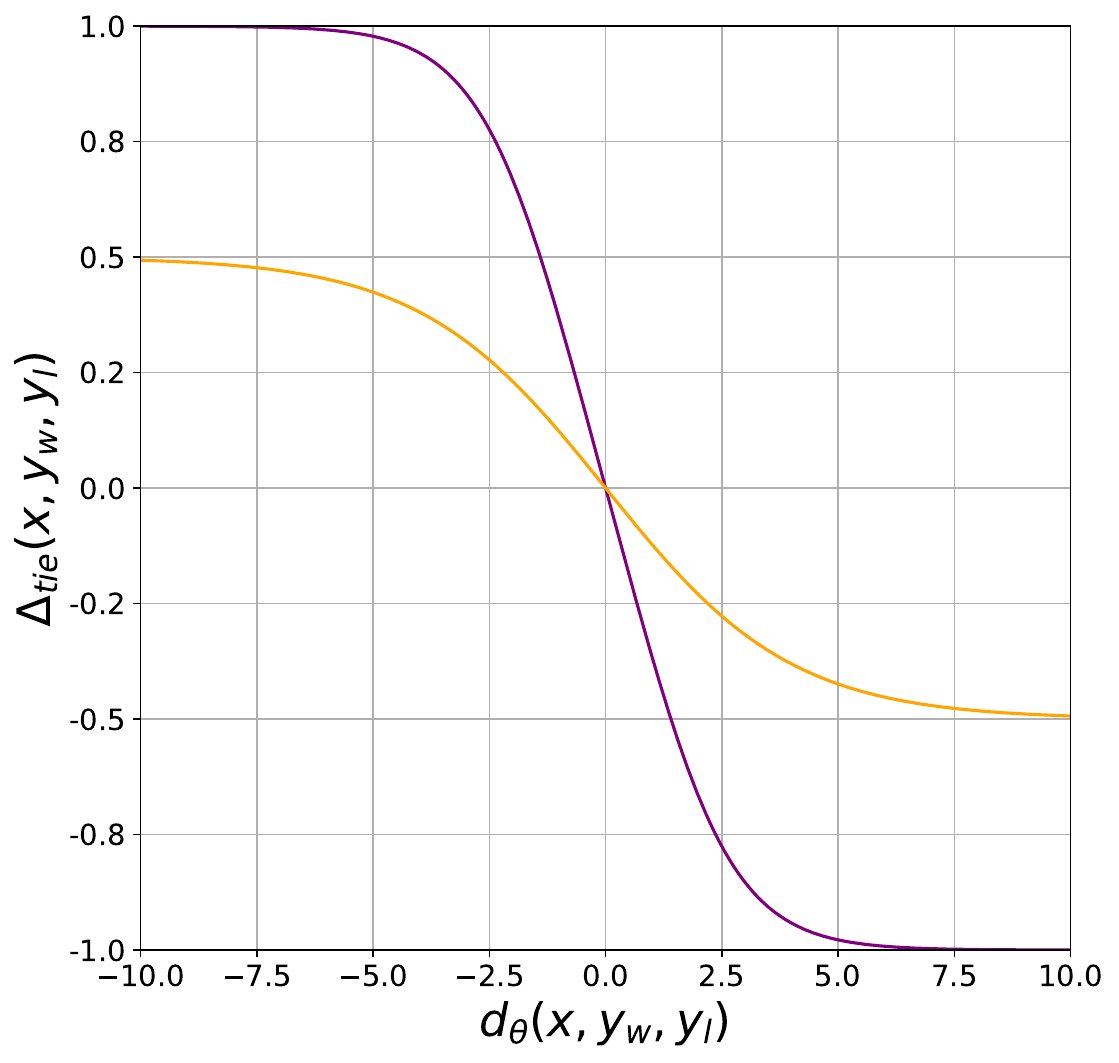}
    \end{subfigure}
    \caption{The gradient scale factors for DPO (\textcolor{customblue}{blue}) and  DPO-RK (\textcolor{custompurple}{purple}) and DPO-D (\textcolor{orange}{orange}) as a function of reward margins $d_\theta(x, y_w, y_l)$ on clear preference pairs (up) and tied pairs (down).$\alpha_{RK}=\log3$ and $\nu_D=1$ are used in producing these plots. }
    \label{fig:graident scale factor analytical}
\end{figure}

\section{Using DPO-RK and DPO-D model as Preference Classifier}
\label{app:Using DPO-RK and DPO-D model as Preference Classifier}

\subsection{Rao-Kupper and Davidson Classifiers}
\label{sssec:classifiers}
The DPO-RK and DPO-D variants yield probability distributions  $p_\theta(y_w \succ_x y_l)$ and $p_\theta(y_w \sim_x y_l)$ in terms of the policy $\pi_\theta$ and the reference model $\pi_{ref}$. 
We can use these distributions as classifiers to label a pair $(x, y_1, y_2)$ as either a win ($y_1 \succ_x y_2$ or $y_2 \succ_x y_1$) or a tie ($y_1 \sim_x y_2$), whichever has the highest probability under either the Rao-Kupper or the Davidson model (Eqs.~\ref{eq:rkwinpprob},~\ref{eq:rktiepprob}, or ~\ref{eq:dwinpprob},~\ref{eq:dtiepprob}). We will evaluate  classification performance on held-out data not used in training to see if policies produced by our DPO variants learn to distinguish wins from ties.

\subsection{Preference Pair Classification Accuracy}
\label{sec:adding ties - Preference Pair Classification Accuracy}
We assess the performance of the Rao Kupper and Davidson classifiers introduced in Sec.\ref{sssec:classifiers} in terms of their ability to label preference pairs as either clear preferences or ties. 
Ideally, classification performance will improve: (1) as tied pairs are added to the clear preference data sets (CP vs CP+TP); and (2) with margins generated from models produced by DPO variants that emphasize the distinction between  tied pairs and clear preference pairs (DPO-D(CP+TP), DPO-RK(CP+TP)).     

We assess classifier performance on the held-out set 
created by collecting CPs and TPs from the WMT18 ZH-EN test set as was done for WMT20 ZH-EN (Appendix \ref{app:experimental details and full results}); this yields pairs with gold labels as either clear preference pairs or tied pairs. Classification and assessment proceeds as follows:  we generate reward margins for the WMT18 ZH-EN pairs using DPO(CP), DPO(CP+TP), DPO-RK(CP+TP), DPO-D(CP+TP) models; we use these reward margins to label the unseen pairs using the Davidson and Rao-Kupper classification rules (Sec.~\ref{sssec:classifiers}); and finally compute the classification accuracy relative to the gold  labels.  

Results are shown in Table~\ref{tab:RK-based and D-based accuracy for DPO, DPO-RK, DPO-D}.  
We find that smaller beta in training consistently leads to better overall RK-classification accuracy (+10\% overall Acc. from $\beta=1.0$ to $\beta=0.1$), suggesting heavy regularization with respect to the reference model impedes preference ranking. 
Classifiers based on reward margins generated from DPO(CP) models perform well in identifying clear preference pairs ($\text{Acc.}> 85\%$) but poorly in identifying tied pairs ($\text{Acc.}< 35\%$).  This imbalance is likely explained by the DPO(CP) model never having seen tied pairs in training.  Adding TPs to the DPO datasets (DPO(CP+TP)) significantly improves the classification accuracy of tied pairs ($+30\%$) with more balanced classification accuracies for CPs and TPs.   The best overall classification accuracies ($\approx73\%$) are obtained with reward margins generated by models trained to match its classifier.   Across all beta values, DPO-RK(CP+TP) and DPO-D(CP+TP) achieve better overall accuracy and more-balanced CP accuracy and TP accuracy under their respective decision rules. 

\begin{table*}[!ht]
\centering
\begin{tabular}{l|ccc} \toprule
Model & $\beta=0.1$ & $\beta=0.5$ & $\beta=1.0$ \\ \midrule
&\multicolumn{3}{c}{\underline{Rao-Kupper Classifier}} \\
DPO(CP) & 60.1\% (\textbf{87.1\%}, 33.1\%) & 52.8\% (87.3\%, 18.3\%) & 50.1\% (86.9\%, 13.3\%) \\
DPO(CP+TP) & 67.0\% (72.0\%, 62.1\%) & 57.5\% (69.3\%, 45.7\%) & 51.5\% (71.2\%, 31.9\%) \\ 
DPO-RK(CP+TP) & \textbf{73.1}\% (74.5\%, \textbf{71.7\%}) & 64.2\% (73.2\%, 55.3\%) & 58.5\% (73.4\%, 43.5\%) \\  
\midrule
&\multicolumn{3}{c}{\underline{Davidson Classifer}} \\ 
DPO(CP) & 65.3\% (\textbf{84.4\%}, 46.3\%) & 57.4\% (83.7\%, 31.0\%) & 53.6\% (84.6\%, 22.6\%) \\
DPO(CP+TP) & 71.0\% (59.1\%, \textbf{82.8\%}) & 62.1\% (58.3\%, 65.8\%) & 57.2\% (62.3\%, 52.2\%) \\
DPO-D(CP+TP) & \textbf{73.8\%} (79.6\%, 67.9\%) & 66.8\% (75.9\%, 57.8\%) & 62.7\% (75.2\%, 50.3\%) \\ \bottomrule
\end{tabular}
\caption{
Preference pair classification accuracies (Overall Acc. (CP Acc., TP Acc.)) for Rao-Kupper and Davidson classification rules based on reward margins computed using DPO(CP), DPO(CP+TP), DPO-RK(CP+TP), and DPO-D(CP+TP) models as evaluated on the WMT18 ZH-EN test set. 
}
\label{tab:RK-based and D-based accuracy for DPO, DPO-RK, DPO-D}
\end{table*}

\subsection{Empirical Reward Margin Distributions}
\label{sec:Empirical Reward Margin Distributions}
\label{app:Reward Margin Distributions on Held-out Clear Preference Pairs}
We now look at the reward margins on held-out pairs to determine how the DPO objective generalizes to unseen data.  
Ideally, a post-DPO model should assign reward margins that  are large for clear preference pairs but close to zero for tied pairs.   We assess this on the same held-out data as in the previous section (Sec.~\ref{app:Using DPO-RK and DPO-D model as Preference Classifier}).  

\begin{table*}[!ht]
\centering
\begin{tabular}{l|ccc|ccc} \toprule
Model & $\beta=0.1$ & $\beta=0.5$ & $\beta=1.0$ & $\beta=0.1$ & $\beta=0.5$ & $\beta=1.0$ \\ \midrule
&\multicolumn{3}{c}{\underline{Clear Preference Pairs}} &\multicolumn{3}{|c}{\underline{Tied Pairs} } \\
DPO(CP) & \meanpmstd{8.2}{143.4} & \meanpmstd{9.5}{174.9} & \meanpmstd{10.0}{123.4} & \meanpmstd{0.7}{174.6} &  \meanpmstd{0.6}{88.3} & \meanpmstd{0.4}{61.8} \\
DPO(CP+TP) & \meanpmstd{2.4}{10.6} & \meanpmstd{2.3}{10.1} & \meanpmstd{2.5}{11.0} &  \meanpmstd{0.4}{23.5} & \meanpmstd{0.3}{10.2} & \meanpmstd{0.2}{7.1} \\ 
DPO-RK(CP+TP) & \meanpmstd{2.9}{18.2} & \meanpmstd{2.8}{10.9} & \meanpmstd{3.0}{11.2} & \meanpmstd{0.0}{1.8} & \meanpmstd{0.0}{2.1} & \meanpmstd{0.0}{2.9} \\  
DPO-D(CP+TP) & \meanpmstd{4.6}{34.1} & \meanpmstd{4.8}{36.8} & \meanpmstd{4.9}{40.2} & \meanpmstd{0.0}{3.9} & \meanpmstd{0.1}{5.3} & \meanpmstd{0.0}{6.0} \\  
 \bottomrule
\end{tabular}
\caption{Reward margin statistics (mean $\pm$ std) for held-out Clear Preference Pairs and Tied Pairs collected from WMT18 ZH-EN.}
\label{tab:reward_margin_mean_and_variance_combined}
\end{table*}



In Table \ref{tab:reward_margin_mean_and_variance_combined}, reward margins of DPO(CP+TP), DPO-RK(CP+TP), and DPO-D(CP+TP) are similar and well-behaved, showing means close-to-zero on TPs ($<0.4$) and farther from zero for CPs ($>2.3$). Reward margin standard deviations are also similar and reasonably small. 
However the standard deviation for both tied pairs and clear preference pairs are much higher for DPO(CP) models ($\geq 11.1$ on CPs and $\geq7.9$ on TPs).

\begin{figure}[!ht]
    \centering
    \begin{subfigure}[b]{0.48\textwidth}
        \includegraphics[width=\textwidth]{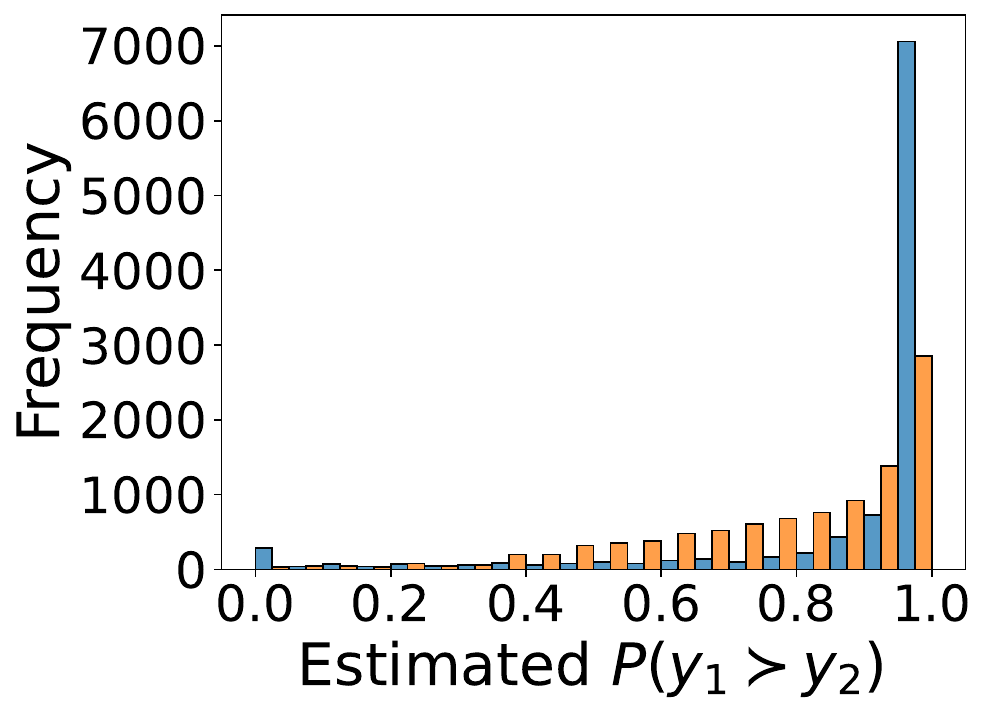}
        \caption{Held-out Clear Preferences Pairs (CP)}
        \label{fig:histogram-pref-prob-dpo}
    \end{subfigure}
    \hfill
    \begin{subfigure}[b]{0.48\textwidth}
        \includegraphics[width=\textwidth]{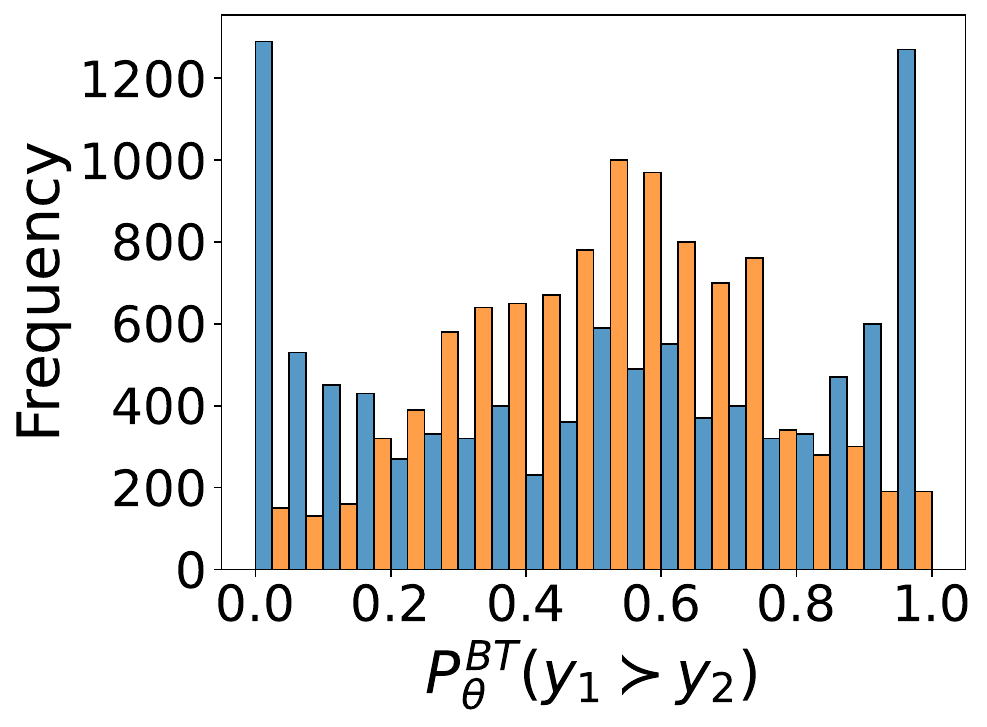}
        \caption{Held-out Tied Pairs (TP)}
        \label{fig:histogram-pref-prob-rk}
    \end{subfigure}
    \caption{Empirical distribution of preference probabilities under the Bradley-Terry model using the implicit reward function from the trained DPO policy on heldout CPs and TPs. DPO(CP) in \textcolor{customblue}{blue}, and DPO(CP+TP) in \textcolor{orange}{orange}.}
    \label{fig:DPO_PrefProbabilities_on_heldout_cps_and_tps}
\end{figure}


This can be explained by Figure~\ref{fig:DPO_PrefProbabilities_on_heldout_cps_and_tps} which shows that DPO(CP) models overwhelmingly assign preference probability values of either $\sim 1.0$ or $\sim 0.0$ to tied pairs, corresponding to very positive and very negative reward margins, respectively. This contributes to the high standard deviation and shows that for a tied pair $(y_1, y_2)$, DPO(CP) model exhibits a strong preference for either $y_1 \succ y_2$ or $y_2 \succ y_1$, even though these are tied pairs by construction ($y_1 \sim y_2$). In contrast, DPO(CP+TP) yields well-behaved estimated preference probability distribution more centered around 0.5 for tied pairs.

The DPO(CP) model correctly assigns high preference probability to most of the held-out CPs. This is consistent with its high classification accuracy on clear preference pairs in Table \ref{tab:RK-based and D-based accuracy for DPO, DPO-RK, DPO-D}. Similar to the estimated preference probability on held-out TPs, the DPO(CP) model tends to give confident, clear preference judgment with $>0.8$ probability in either direction. In comparison, the DPO(CP+TP) model is more conservative in making preference judgments, showing a less-sharp preference probability distribution over the held-out CP pairs. These results suggest that incorporating ties in DPO training leads to preference probability distributions that more evenly spread on both CPs and TPs as opposed to one concentrated on the two ends.

\begin{figure}[!ht]
    \centering
    \begin{subfigure}{0.49\textwidth}
        \includegraphics[width=1.0\textwidth]{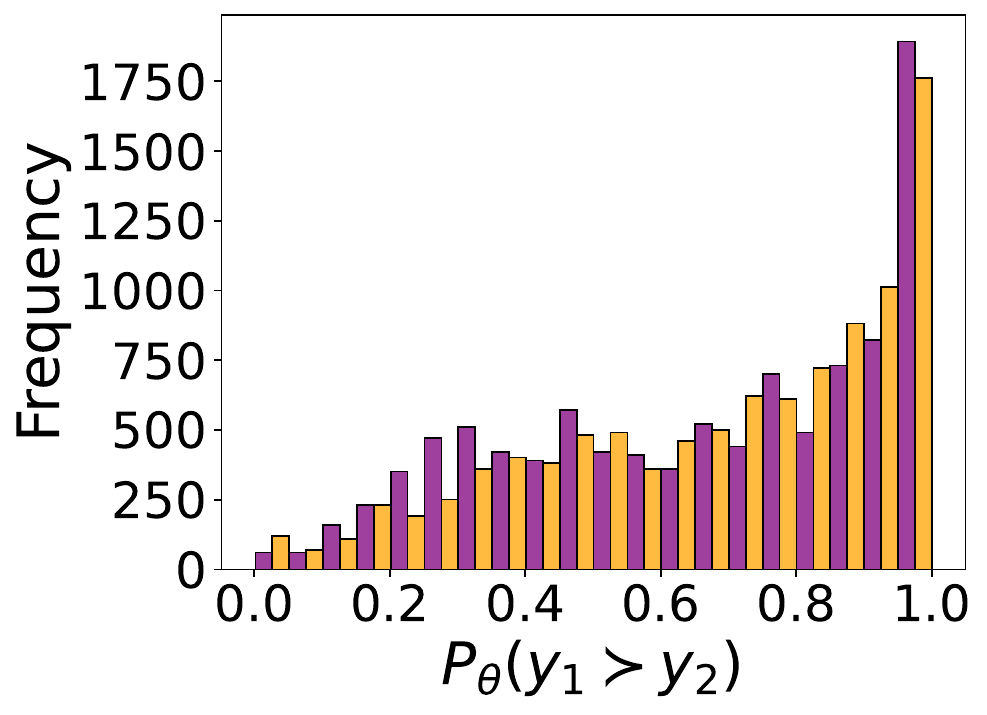}
        \caption{Preference probability under the models on held-out clear preference pairs.}
    \end{subfigure}
    \hfill
    \begin{subfigure}{0.49\textwidth}
        \centering
        \includegraphics[width=1.0\textwidth]{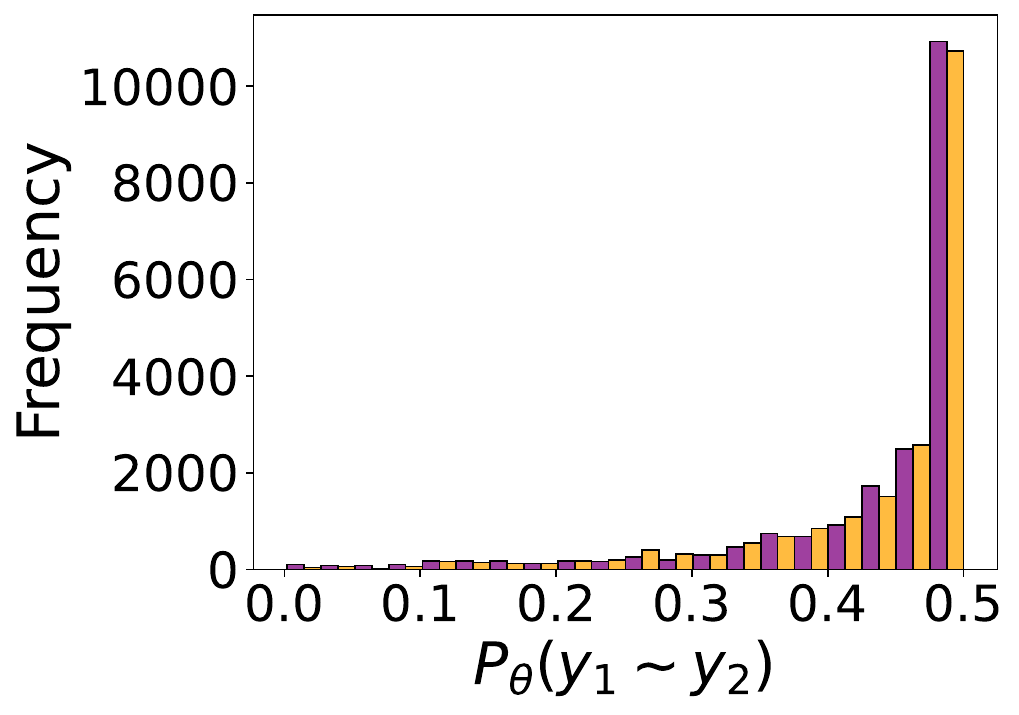}
        \caption{Tie probability under the models on held-out tied pairs.}
    \end{subfigure}
    \caption{DPO-D (\textcolor{orange}{orange}) and DPO-RK (\textcolor{custompurple}{purple}) preference/tie probability on held-out sets under the Davidson and Rao-Kupper models, respectively.}
    \label{fig:rkd-model-preference-tie-probability-histogram}
\end{figure}

For completeness, we also show the clear preference/tie probability distributions produced by models trained with DPO-RK(CP+TP) and DPO-D(CP+TP) on held-out clear preference pairs and tied pairs. Figure \ref{fig:rkd-model-preference-tie-probability-histogram} show that these distributions are well-behaved in that most of the probability mass are allocated to $P_\theta(y_1\succ y_2)>0.5$ on held-out clear preference pairs and to $P_\theta(y_1\sim y_2)\approx 0.5$ on held-out tied pairs. We note that under our hyper-parameter setting for the Rao-Kupper and Davidson models, the maximal tie probability is 0.5.

All models in this analysis are trained with $\beta=0.1$.

\newpage

\section{Training Dynamics and Convergence Behaviour}
\label{app:convergence behaviour}

We analyse how the inclusion of tied pairs affects the training dynamics of DPO, DPO-RK, and DPO-D. We study the BLOOMZ-mt-7b datasets with $\beta=0.7$ for WMT21 ZH-EN as these systems show both strong regularization effects and task performance degradation when tied pairs are added. Figure \ref{fig:WMT21 training statistics} shows the evolution of reward margins and gradient scale factors (Eqs.\ref{eq:dpo_grad} - \ref{eq:dgrad_tie}). 

\begin{figure}[!htbp]
    \centering
    \begin{subfigure}[b]{0.49\textwidth}
        \includegraphics[width=1.0\textwidth]{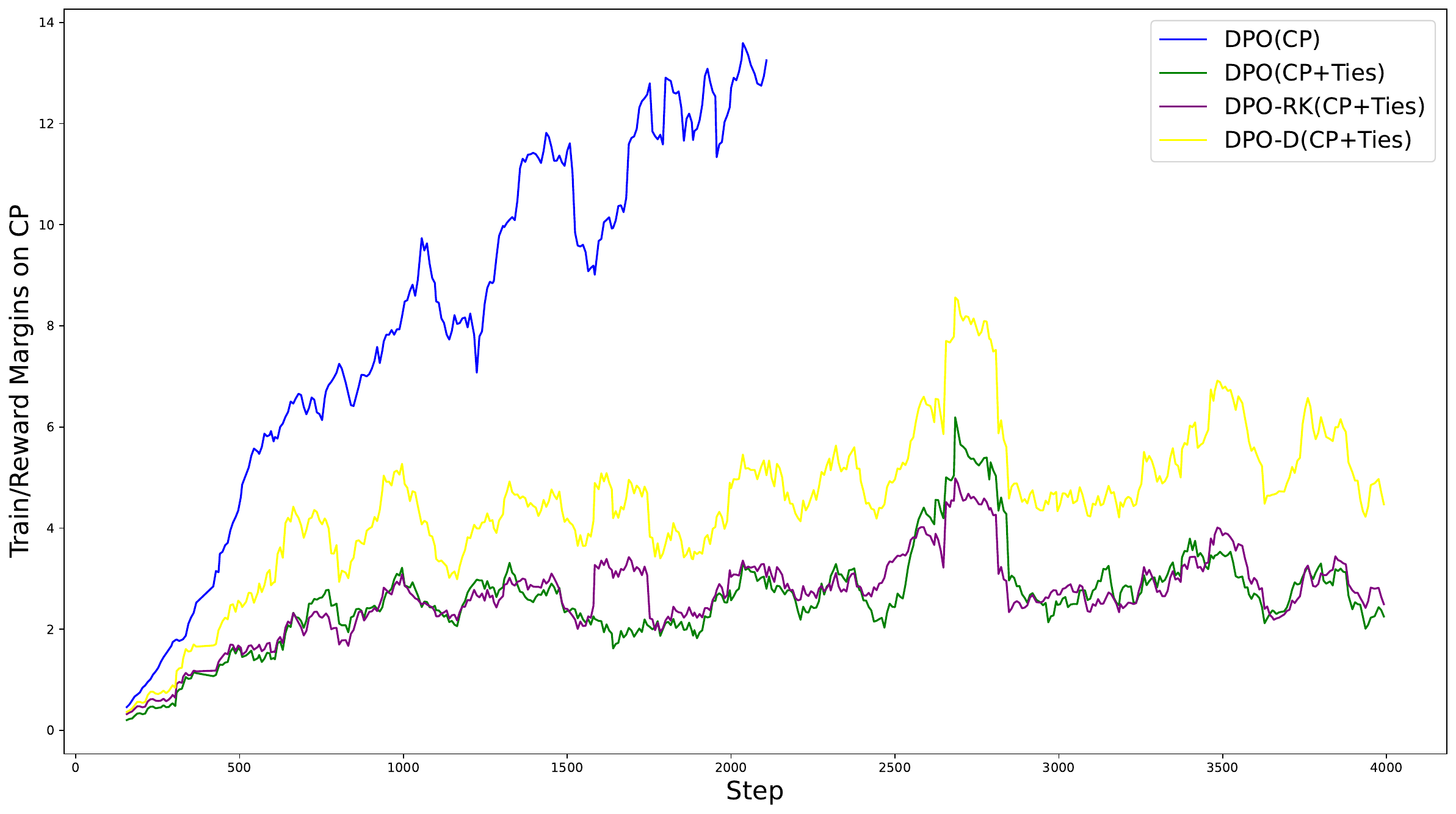}
        \caption{Reward margins on CP}
        \label{fig:all-train-reward-margins-on-cp}
    \end{subfigure}
    \hfill
    \begin{subfigure}[b]{0.49\textwidth}
        \centering
        \includegraphics[width=1.0\textwidth]{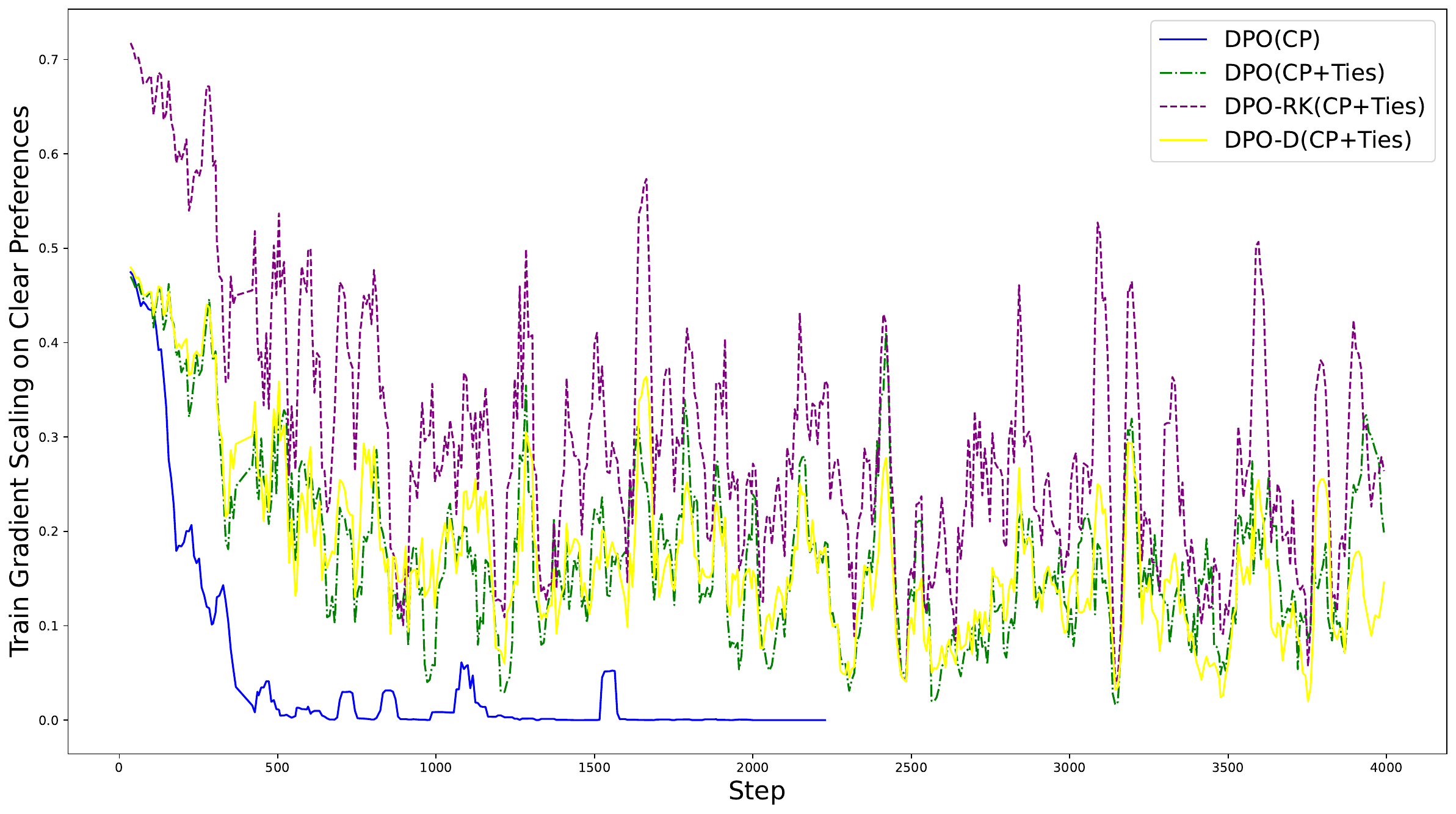}
        \caption{Gradient scaling on CP}
        \label{fig:all-train-gradient-scaling-on-cp}
    \end{subfigure}
    \begin{subfigure}[b]{0.49\textwidth}
        \centering
        \includegraphics[width=1.0\textwidth]{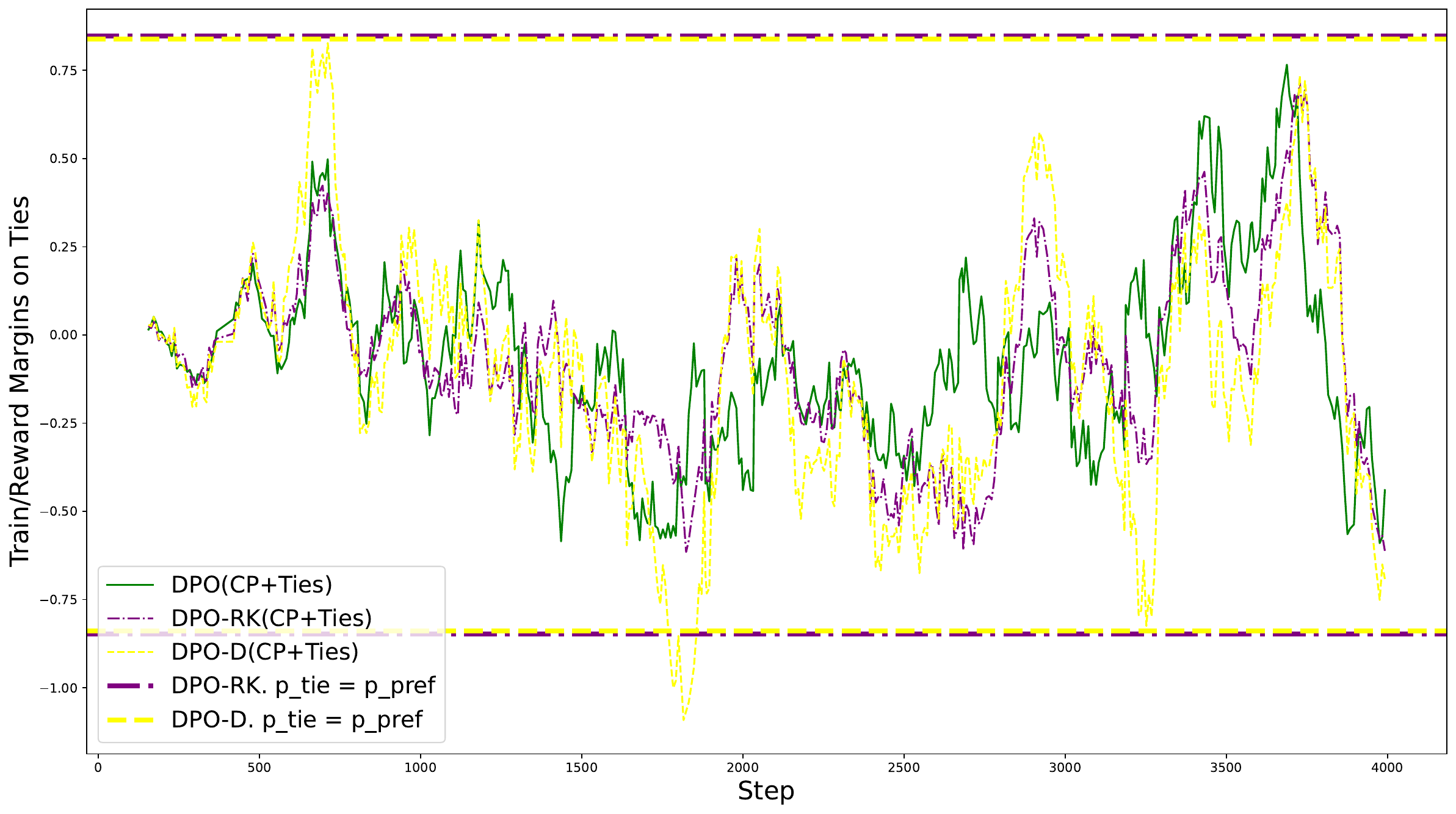}
        \caption{Reward margins on TP}
        \label{fig:all-train-reward-margins-on-ties}
    \end{subfigure}
    \begin{subfigure}[b]{0.49\textwidth}
        \centering
        \includegraphics[width=1.0\textwidth]{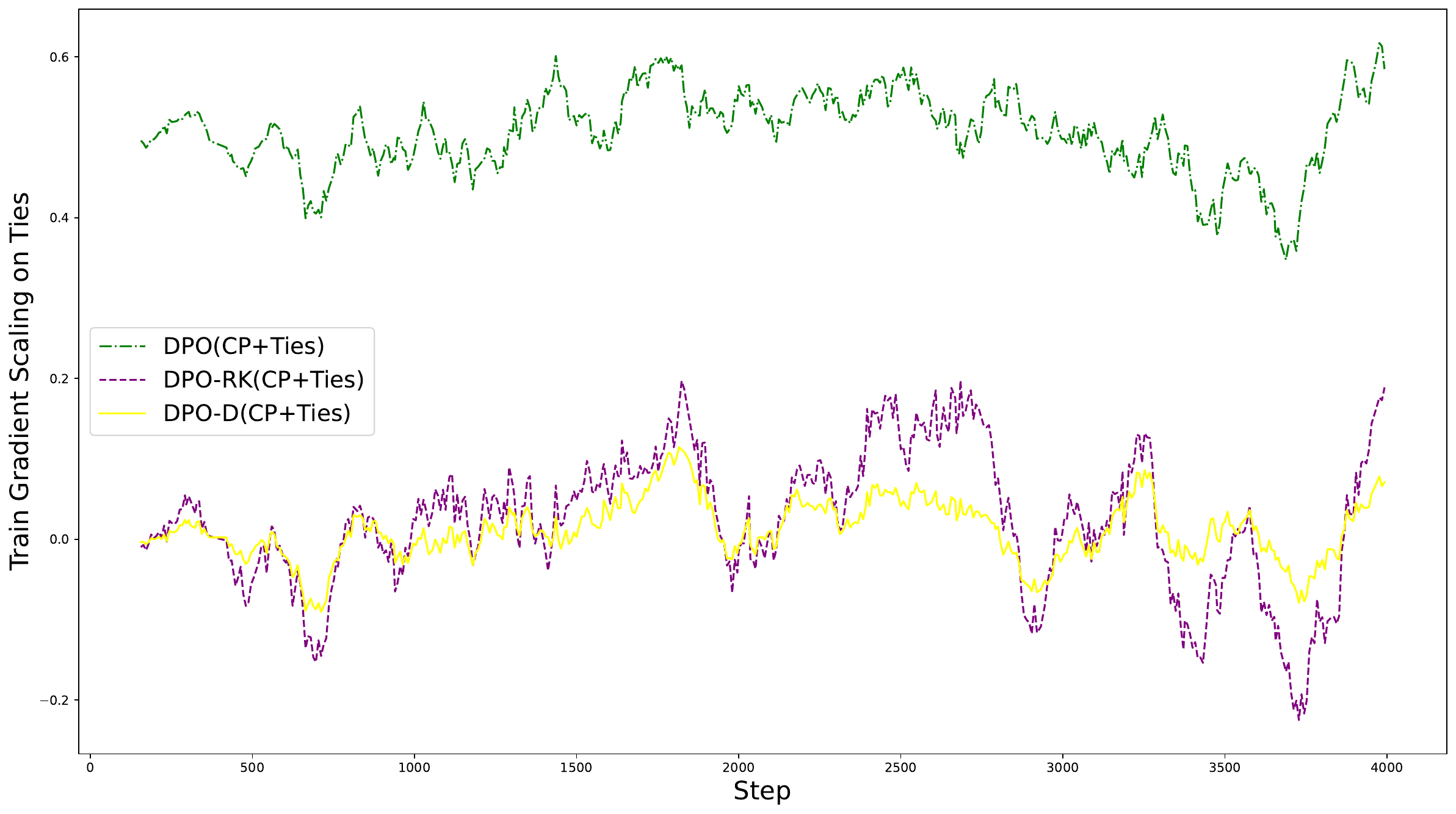}
        \caption{Gradient scaling on TP}
        \label{fig:all-train-reward-margins-on-ties}
    \end{subfigure}
    
    \caption{DPO(CP) (\textcolor{customblue}{blue}), DPO(CP+TP) (\textcolor{customgreen}{green}), DPO-RK(CP+TP) (\textcolor{purple}{purple}), and DPO-D(CP+TP) (\textcolor{yellow}{yellow}) training statistics on WMT21 ZH-EN.}
    \label{fig:WMT21 training statistics}
\end{figure}

DPO(CP) is well behaved: the reward margins on the CP data  increase over the epoch;   the DPO losses on the CP dataset decrease over the epoch;  
and the DPO gradient scale factor shows that learning slows and stabilizes after  the $500^{th}$ batch.   

Adding tied pairs to the DPO dataset alters this behaviour for both tied pairs and clear preference pairs.   DPO(CP+TP) does yield some gains in reward margins for clear preference pairs, but these are well below that of DPO(CP).  By contrast, DPO(CP+TP) fails almost entirely to find any improvement in the reward margins for its tied pair data.  While this is less than ideal from a modelling perspective, we note that it provides empirical support for the observation in the previous section that the reward margins on tied pairs should ideally remain close to zero.  Similar behaviour is observed in the DPO loss.  Decreases in loss over clear preference pairs are offset by loss increases on the tied pairs.   This is reflected in the gradient scale factors.  The gradient scale factors remain high as DPO(CP+TP) searches for a better policy. 

DPO-RK(CP+TP) and DPO-D(CP+TP) exhibit largely the same dynamics as DPO(CP+TP) except that the gradient scale factors on TPs have mean close to 0 instead of 0.5 and fluctuate between positive and negative values. This showcases that DPO-RK and DPO-D work to drive reward margin on TPs close to zero by possibly reversing optimization direction, a mechanism not present in the original DPO. 

\section{Experimental Details and Full Results}
\label{app:experimental details and full results}


We provide additional details of our experiments on Neural Machine Translation and Summarization with respect to the SFT models, the training configurations, and the decoding procedures. All experiments are run with the random seed set to 0. Codes are available at \url{https://github.com/EriChen0615/DPO-RKD}.

\subsection{Neural Machine Translation}
\label{ss:nmtedetails}
We largely follow \citet{bruce-dpo-mbr} in our experimental setup for NMT where the preference dataset is obtained via sampling and BLEURT-based ranking as explained in Sec.\ref{sec:adding ties}.

\paragraph{CPs and TPs in NMT.} We use DPO to improve translation quality similar to that done in \citet{bruce-dpo-mbr}.  
We apply DPO with BLOOMZ-mt-7b~\citep{Bloomz-model-paper} as the baseline model.  Translation quality is measured with BLEURT~\citep{BLEURT-paper} and COMET~\citep{comet-paper} on the WMT21 ZH-EN and IWSLT17 FR-EN  translation test sets.
We note that the WMT-23 metric overview paper reports high correlations (0.776 and 0.779, Table 1, \citet{WMT23_Metric_paper}) between BLEURT and COMET and human judgment. 
To construct a DPO preference dataset for the WMT21 ZH-EN test set,   we use BLOOMZ-mt-7b to generate 32 translations (via sampling) for each source sentence in the WMT20 ZH-EN test set. For each source sentence, the translations are ranked by their BLEURT scores computed with respect to the reference translations.  The highest and lowest scoring translations form the Clear Preference Pairs; for each source sentence, these are the two translations with the greatest difference in BLEURT score.  By contrast, we take the Tied Pairs as the two non-identical translations with the minimum absolute BLEURT difference; the translation with higher BLEURT is labeled as the winner of each Tied Pair. This yields ca. 16K CPs and TPs for use in DPO.   The same procedure is applied to the IWSLT17 validation set, yielding ca. 800 CPs and TPs for use as DPO preference datasets. We validate the selected TPs using GPT-4 in Appendix \ref{app:Using GPT-4 to validate ties on NMT}.

\paragraph{SFT Models} On WMT-21 ZH-EN, we performed supervised fine-tuning on the BLOOMZ-mt-7b~\cite{Bloomz-model-paper} using previous WMT test sets to obtain the SFT model from which we train with DPO/DPO-RK/DPO-D. The clear preference pairs and tied pairs are generated by sampling from this SFT model. On IWSLT-17 FR-EN, we use the pretrained BLOOMZ-mt-7b model directly in sampling clear preferences and tied pairs and in DPO fine-tuning, as we find further SFT leads to repetitive generation. 

\paragraph{Training Details} We use the RMSProp optimizer with the learning rate set to $5e^{-7}$ and the number of warm-up steps set to 150. All NMT experiments are run on two Nvidia A100-80G GPUs with an effective batch size of 4. We used FP32 for training the policy. The log-probabilities from the reference model are pre-computed with FP32 precision. Each training run  takes $\approx 2$ hours on WMT20 ZH-EN CP+TP data and $\approx 1$ hour on IWSLT17 FR-EN data.

\paragraph{Decoding} Following \citet{bruce-dpo-mbr}, we use beam search with a beam size $=4$ to decode all models.

\paragraph{Held-out Clear Preference Pairs and Tied Pairs} We curate held-out sets by generating translations by sampling on the WMT18 ZH-EN test set. Clear Preference Pairs and Tied Pairs are identified using their rankings under BLEURT exactly as done for WMT21 ZH-EN. This gives 3980 CPs and 3980 TPs for held-out evaluation in Appendix \ref{app:Using DPO-RK and DPO-D model as Preference Classifier}.

\subsection{Experiments with Stronger Translation Systems}
\label{app:ALMA_experiments}

We conducted additional experiments, replacing the BLOOMZ-mt-7B model with the ALMA-7B-LoRA model reported in \cite{ALMA-R_paper}, which has better translation performance.

We repeated the procedure described in Section~3.1 to collect 3000 Clear Preferences (CPs) and 3000 Tied Pairs (TPs) for ZH-EN translation. This was done by sampling candidate translations using ALMA-7B-LoRA on the ZH-EN training set of \cite{ALMA-R_paper}, followed by pair selection based on scores from automatic metrics. Following \cite{ALMA-R_paper}, we used the average score of KIWI-XXL and XCOMET to rank responses. We evaluated our models on the ZH-EN test set of \cite{ALMA-R_paper}, which consists of 2009 parallel sentences selected from FLORES-200. We report COMET, KIWI-XXL, and XCOMET as evaluation metrics. We also report KL divergence with respect to the reference model.

The table below shows the best-performing system evaluated on ZH-EN translation obtained from a $\beta$ sweep in $[0.1, 0.3, 0.5, 0.7, 0.9]$ under each configuration. For reference, we provide the performance of the base ALMA-7B-LoRA model as reported in \cite{ALMA-R_paper}. Additionally, ALMA-7B-R represents the best 7B model reported in \cite{ALMA-R_paper}, trained with Contrastive Preference Optimization (CPO) on a larger dataset than the one we used. While it is not directly comparable, it serves as a strong translation baseline.

\begin{table*}[!ht]
    \centering
    \begin{tabular}{lccccc}
        \hline
        \textbf{Model} & \textbf{KL} & \textbf{COMET} & \textbf{KIWI-XXL} & \textbf{XCOMET} & \textbf{Mean(X+K)} \\
        \hline
        ALMA-7B-LoRA \cite{ALMA-R_paper} & - & 79.8 & 73.7 & 83.9 & 78.8 \\
        ALMA-7B-R \cite{ALMA-R_paper}    & - & 81.0 & 75.7 & 90.7 & 83.2 \\
        \hline
        \textbf{Our Systems} & & & & & \\
        DPO(CP)              & 101.64 & 80.5 & 75.2 & 91.1 & 83.2 \\
        DPO(CP+TP)           & 60.37  & 80.2 & 75.0 & 90.8 & 82.9 \\
        DPO-RK(CP+TP)        & 75.57  & 80.2 & 75.0 & 90.8 & 82.9 \\
        DPO-D(CP+TP)         & 30.92  & 80.5 & 75.9 & 90.8 & 83.4 \\
        \hline
    \end{tabular}
    \caption{Performance metrics for ZH-EN translation systems. Results include KL divergence, COMET, KIWI-XXL, and XCOMET scores, along with the mean of XCOMET and KIWI-XXL. ALMA-7B-LoRA and ALMA-7B-R results are from \cite{ALMA-R_paper}.}
    \label{tab:stronger-translation-systems-app}
\end{table*}

We achieve the best performance under the metrics with DPO-D(CP+TP), which also has the lowest KL divergence, on top of a very strong baseline. Consistent with all our other results, including Tied Pairs in all training procedures yields a strong regularization effect.

\subsection{Summarization}

\paragraph{CPs and TPs in Summarization.} 
We follow \citet{ODPO-Direct-Preference-Optimization-with-an-offset-Amini} in DPO fine-tuning of Pythia-2.8B~\citep{Pythia-model-paper} on the TL;DR dataset~\citep{TLDR-dataset-OpenAI} with evaluation via win-rate against human-written summaries. Previous works use GPT-4 to compute the win-rate~\citep{DPO-Rafael, ODPO-DirectPreferenceOptimizationWithAnOffset}. We find that the judgments of PairRM~\citep{PairRM-paper} agree well with those of GPT-4 (Appendix \ref{app:PairRM as a proxy of GPT-4}) and opt to use PairRM win-rate as a cost-effective automatic metric. 
In the TL;DR task, each prompt is associated with a collection of paired summaries, with a winner and a loser identified for each pair.  There is no immediately obvious way to distinguish TPs from CPs in the collection and so we use DPO itself to select TPs.   We first apply DPO with $\beta=0.1$ on the full TL;DR training dataset. Using the reward model formed by this model and the reference model, we compute the reward margins of all pairs of summaries in the training split. 
For each prompt, the pair with minimal reward margin is treated as a TP, with all other pairs kept as CPs, yielding ca. 14k (15.3\%) TPs. 
See Appendix~\ref{app:Verifying tied pair selection strategy} for a study of this selection strategy.

We follow \citet{ODPO-Direct-Preference-Optimization-with-an-offset-Amini} in experimental setups. The preference dataset is obtained via sampling and ranking with a DPO model without requiring an external reward model as explained in Sec.\ref{sec:adding ties}. 

\paragraph{SFT Model} We follow \citet{ODPO-Direct-Preference-Optimization-with-an-offset-Amini} to supervise-finetune a Pythia-2.8B model~\cite{Pythia-model-paper} on the chosen responses in TL;DR train split for one epoch to obtain the initial checkpoint for preference learning. We use the summarization prompt provided in Appendix D.2 by \citet{DPO-Rafael}.

\paragraph{Training Details} We use the RMSProp optimizer with the learning rate set to $5e^{-7}$ and the number of warm-up steps set to 150. All summarization experiments are run on two Nvidia A100-40G GPUs with an effective batch size of 64. We used FP32 for the policy and FP16 for the reference model. Each training run  takes $\approx 7$ hours on TL;DR CP+TP data.

\paragraph{Decoding} We use greedy decoding for all models as we find it performs on-par or better than temperature sampling (Appendix \ref{app:PairRM as a proxy of GPT-4}). 

\subsection{Tabulated KL-Performance Results on NMT and Summarization}
\label{app:Tabulated KL-Performance Results on NMT and Summarization}

We tabulate the KL-Performance results shown in Figure \ref{fig:DPO(CP) vs DPO(CP+TP) KL-Performance frontier} and Figure \ref{fig:rkdavidon-pareto}.

\subsubsection{Neural Machine Translation}
\label{sssec:nmtfullresults}

\begin{table*}[!ht]
\centering
\begin{tabular}{@{}lrrrrrr@{}} \toprule
System & beta & KL Divergence & BLEU & Length Ratio & COMET & BLEURT \\ \midrule
Bloomz-mt-7b1-SFT & - & 0 & 17.6 &  & 77.9 & 61.6 \\
\midrule 
DPO(CP) & 0.1 & 174.13 & 7.23 & 3.01 & 70.2 & 67.7 \\
DPO(CP) & 0.2 & 68.12 & 20.8 & 1.10 & 80.8 & 66.2 \\
DPO(CP) & 0.3 & 62.85 & 20.7 & 1.13 & 80.6 & 66.4 \\
DPO(CP) & 0.4 & 56.02 & 21.4 & 1.09 & 80.7 & 66.4 \\
DPO(CP) & 0.5 & 50.99 & 21.2 & 1.11 & 80.8 & 66.5 \\
DPO(CP) & 0.6 & 47.97 & 21.5 & 1.09 & 80.9 & 66.5 \\
DPO(CP) & 0.7 & 44.08 & 21.5 & 1.11 & 81.0 & 66.7 \\
DPO(CP) & 0.8 & 41.88 & 21.3 & 1.14 & 80.8 & 66.7 \\
DPO(CP) & 0.9 & 41.24 & 21.5 & 1.14 & 80.8 & 66.8 \\
DPO(CP) & 1.9 & 33.69 & 22.3 & 1.09 & 81.2 & 67.0 \\
DPO(CP) & 1.2 & 32.01 & 22.4 & 1.09 & 81.3 & 67.1 \\
DPO(CP) & 1.5 & 29.58 & 21.7 & 1.13 & 81.1 & 67.1 \\
DPO(CP) & 1.55 & 29.01 & 21.9 & 1.13 & 81.1 & 67.1 \\ \midrule
DPO(CP+TP) & 0.1 & 51.29 & 20.3 & 1.16 & 80.0 & 66.0 \\
DPO(CP+TP) & 0.2 & 36.37 & 18.8 & 1.30 & 80.1 & 66.6 \\
DPO(CP+TP) & 0.3 & 26.15 & 19.5 & 1.24 & 80.2 & 66.6 \\
DPO(CP+TP) & 0.4 & 18.21 & 20.6 & 1.20 & 80.4 & 66.6 \\
DPO(CP+TP) & 0.5 & 15.47 & 21.2 & 1.15 & 80.4 & 66.4 \\
DPO(CP+TP) & 0.6 & 14.74 & 21.9 & 1.10 & 80.6 & 66.4 \\
DPO(CP+TP) & 0.7 & 13.29 & 22.1 & 1.11 & 80.5 & 66.4 \\
DPO(CP+TP) & 0.8 & 12.57 & 22.2 & 1.10 & 80.5 & 66.2 \\
DPO(CP+TP) & 0.9 & 12.10 & 21.9 & 1.10 & 80.5 & 66.3 \\
DPO(CP+TP) & 1.0 & 11.43 & 22.0 & 1.11 & 80.5 & 66.2 \\ \midrule
DPO-RK(CP+TP) & 0.1 & 48.55 & 19.3 & 1.22 & 80.2 & 66.9 \\
DPO-RK(CP+TP) & 0.2 & 28.61 & 22.1 & 1.11 & 80.9 & 66.9 \\
DPO-RK(CP+TP) & 0.3 & 20.21 & 22.5 & 1.11 & 81.0 & 67.1 \\
DPO-RK(CP+TP) & 0.4 & 14.80 & 22.4 & 1.12 & 81.1 & 67.1 \\
DPO-RK(CP+TP) & 0.5 & 11.66 & 22.8 & 1.10 & 81.0 & 67.1 \\
DPO-RK(CP+TP) & 0.6 & 9.74 & 22.2 & 1.13 & 80.8 & 66.8 \\
DPO-RK(CP+TP) & 0.7 & 8.04 & 22.3 & 1.12 & 80.8 & 66.7 \\
DPO-RK(CP+TP) & 0.8 & 8.10 & 22.1 & 1.13 & 80.8 & 66.8 \\
DPO-RK(CP+TP) & 0.9 & 7.58 & 21.8 & 1.15 & 80.7 & 66.8 \\
DPO-RK(CP+TP) & 1.0 & 6.31 & 22.3 & 1.11 & 80.7 & 66.6 \\ \midrule
DPO-D(CP+TP) & 0.2 & 42.74 & 21.4 & 1.13 & 80.8 & 66.6 \\
DPO-D(CP+TP) & 0.3 & 38.56 & 21.2 & 1.15 & 80.2 & 66.5 \\
DPO-D(CP+TP) & 0.4 & 17.01 & 22.5 & 1.11 & 81.0 & 67.1 \\
DPO-D(CP+TP) & 0.5 & 20.20 & 22.7 & 1.10 & 81.1 & 67.1 \\
DPO-D(CP+TP) & 0.6 & 26.85 & 22.3 & 1.10 & 81.1 & 66.9 \\
DPO-D(CP+TP) & 0.7 & 14.97 & 22.6 & 1.11 & 81.1 & 67.1 \\
DPO-D(CP+TP) & 0.8 & 13.33 & 22.7 & 1.11 & 81.1 & 67.1 \\
DPO-D(CP+TP) & 1.0 & 10.05 & 22.3 & 1.12 & 80.9 & 67.0 \\ \bottomrule \\
\end{tabular}
\caption{KL-Performance evaluated on WMT-21 ZH-EN.}
\label{tab:kl-performance-table-wmt21-zhen}
\end{table*}

In addition to KL Divergence and BLEURT, we also provide COMET~\citep{comet-paper} scores, BLEU~\citep{bleu_score} scores and BLEU's Length Ratio.

We observe the ``reward hacking'' phenomenon identified by \citet{bruce-dpo-mbr} on both WMT21 ZH-EN and IWSLT17 FR-EN where systems achieve good BLEURT but have large length ratio (>1.5) and lower COMET than the pre-DPO system. These systems learn to generate long, repetitive translations which BLEURT fails to recognize as low-quality. \citet{bruce-dpo-mbr} find that using small beta values (e.g. 0.1) in DPO training results in reward hacking models. Our results are consistent with their findings and further suggest that large KL divergence from the reference model is a good indicator for reward hacking. On WMT21 ZH-EN, the only model that exhibits reward hacking is trained by DPO(CP) with beta=0.1 which also yields the highest KL divergence (174.13) among all models, greatly exceeding the second-highest KL divergence (68.12). On IWSLT17 FR-EN, Almost all models with KL Divergence $> 30$ (DPO(CP), $\beta=0.1$, DPO-RK(CP+TP), $\beta=0.1$ and DPO-D(CP+TP) $\beta=0.1, 0.5$) show reward hacking behaviours. 

Reward hacking on NMT can be resolved by increasing regularization with respect to the reference model. We find that training with larger beta values or incorporating ties in DPO-RK/DPO-D training can provide such regularization without performance degradation. 


\begin{table*}[!ht]
\begin{tabular}{lrrrrrr} \toprule
System & beta & KL Divergence & BLEU & Length Ratio & COMET & BLEURT \\ \midrule
Bloomz-mt-7b1 & - &  & 17.6 &  & 85.4 & 74.8 \\
\midrule
DPO(CP) & 0.1 & 53.60 & 25.8 & 1.40 & 82.3 & 74.7 \\
DPO(CP) & 0.3 & 30.80 & 23.7 & 1.60 & 83.6 & 76.5 \\
DPO(CP) & 0.5 & 16.70 & 36.8 & 1.00 & 86.1 & 76.2 \\
DPO(CP) & 0.7 & 13.80 & 38.5 & 1.00 & 86.4 & 76.4 \\
DPO(CP) & 1.0 & 12.40 & 38.6 & 1.00 & 86.5 & 76.5 \\
DPO(CP) & 1.2 & 11.80 & 38.8 & 0.98 & 86.5 & 76.5 \\
DPO(CP) & 1.5 & 10.70 & 38.9 & 0.99 & 86.5 & 76.5 \\ \midrule
DPO(CP+TP) & 0.1 & 35.60 & 35.8 & 1.00 & 85.6 & 75.5 \\
DPO(CP+TP) & 0.3 & 25.80 & 35.7 & 1.10 & 85.4 & 75.9 \\
DPO(CP+TP) & 0.5 & 22.00 & 35.1 & 1.10 & 85.8 & 76.3 \\
DPO(CP+TP) & 0.7 & 17.00 & 38.7 & 1.00 & 86.3 & 76.3 \\
DPO(CP+TP) & 1.0 & 11.50 & 38.9 & 1.00 & 86.4 & 76.4 \\
DPO(CP+TP) & 1.2 & 8.50 & 39.1 & 0.98 & 86.5 & 76.4 \\
DPO(CP+TP) & 1.5 & 6.30 & 39.0 & 0.98 & 86.4 & 76.3 \\ \midrule
DPO-RK(CP+TP) & 0.1 & 46.70 & 23.0 & 1.60 & 78.7 & 76.3 \\
DPO-RK(CP+TP) & 0.2 & 19.51 & 35.9 & 1.05 & 85.9 & 76.4 \\
DPO-RK(CP+TP) & 0.3 & 15.50 & 36.1 & 1.10 & 86.1 & 76.5 \\
DPO-RK(CP+TP) & 0.5 & 13.30 & 31.4 & 1.20 & 85.7 & 76.6 \\
DPO-RK(CP+TP) & 0.7 & 10.90 & 31.3 & 1.20 & 85.8 & 76.5 \\
DPO-RK(CP+TP) & 0.8 & 10.90 & 29.9 & 1.28 & 85.6 & 76.5 \\
DPO-RK(CP+TP) & 0.9 & 11.60 & 27.2 & 1.40 & 85.3 & 76.4 \\
DPO-RK(CP+TP) & 1.0 & 11.60 & 26.1 & 1.50 & 85.1 & 76.3 \\
DPO-RK(CP+TP) & 1.2 & 11.80 & 24.4 & 1.57 & 84.8 & 76.3 \\ \midrule
DPO-D(CP+TP) & 0.1 & 48.60 & 25.3 & 1.41 & 82.6 & 76.3 \\
DPO-D(CP+TP) & 0.3 & 19.90 & 35.4 & 1.07 & 85.8 & 76.5 \\
DPO-D(CP+TP) & 0.5 & 51.90 & 8.4 & 4.35 & 75.1 & 76.1 \\
DPO-D(CP+TP) & 0.7 & 12.80 & 36.6 & 1.06 & 86.2 & 76.6 \\
DPO-D(CP+TP) & 1.0 & 10.30 & 37.8 & 1.03 & 86.3 & 76.6 \\
DPO-D(CP+TP) & 1.2 & 10.90 & 32.1 & 1.20 & 85.9 & 76.6 \\ \bottomrule
\end{tabular}
\caption{KL-Performance evaluated on IWSLT17 FR-EN}
\label{tab:KL-Performance Table IWSLT17}
\end{table*}

\subsubsection{Summarization}
Table \ref{tab:KL-Performance table for TL;DR} shows the KL-PairRM winrate on TL;DR summarization.

\begin{table*}[!ht]
\centering
\begin{tabular}{lrrr} \toprule
System & beta & KL Divergence & PairRM Winrate \\
Pythia-2.8B-SFT, Greedy & - & 0.00 & 37.5 \\ \midrule
DPO(CP) & 0.025 & 97.03 & 67.9 \\
DPO(CP) & 0.05 & 60.31 & 70.3 \\
DPO(CP) & 0.07 & 57.14 & 71.5 \\
DPO(CP) & 0.08 & 38.16 & 66.4 \\ 
DPO(CP) & 0.10 & 26.82 & 62.5 \\
DPO(CP) & 0.30 & 9.97 & 63.7 \\
DPO(CP) & 0.50 & 5.79 & 59.0 \\
DPO(CP) & 0.70 & 3.78 & 57.8 \\
\midrule
DPO(CP+TP) & 0.025 & 87.66 & 63.7 \\
DPO(CP+TP) & 0.03 & 119.60 & 66.8 \\ 
DPO(CP+TP) & 0.04 & 70.69 & 69.5 \\
DPO(CP+TP) & 0.05 & 35.39 & 63.3 \\
DPO(CP+TP) & 0.10 & 17.21 & 57.4 \\
DPO(CP+TP) & 0.30 & 4.50 & 58.6 \\
DPO(CP+TP) & 0.50 & 7.61 & 57.8 \\
DPO(CP+TP) & 0.70 & 2.91 & 55.9 \\
\midrule
DPO-RK(CP+TP) & 0.04 & 80.86 & 65.2 \\
DPO-RK(CP+TP) & 0.05 & 62.57 & 67.2 \\
DPO-RK(CP+TP) & 0.10 & 40.50 & 67.6 \\
DPO-RK(CP+TP) & 0.20 & 22.24 & 67.6 \\
DPO-RK(CP+TP) & 0.30 & 12.45 & 68.0 \\
DPO-RK(CP+TP) & 0.50 & 6.15 & 65.6 \\
DPO-RK(CP+TP) & 0.70 & 4.33 & 61.7 \\
\midrule
DPO-D(CP+TP) & 0.05 & 82.35 & 64.8 \\
DPO-D(CP+TP) & 0.10 & 54.06 & 71.5 \\
DPO-D(CP+TP) & 0.20 & 39.23 & 66.0 \\
DPO-D(CP+TP) & 0.30 & 22.46 & 68.8 \\
DPO-D(CP+TP) & 0.40 & 12.57 & 67.6 \\
DPO-D(CP+TP) & 0.50 & 9.92 & 67.2 \\
DPO-D(CP+TP) & 0.70 & 6.82 & 64.8 \\ \bottomrule
\end{tabular}
\caption{KL-PairRM winrate against 256 human-written summaries on TL;DR summarization}
\label{tab:KL-Performance table for TL;DR}
\end{table*}

\clearpage

\subsection{Standard Deviation on GSM8K Mathematical Reasoning}
\label{app:GSM8K-variance}

\begin{table}[!ht]
\centering
\begin{tabular}{lccc}
\toprule
$\beta$ & DPO (CP) & DPO-RK (CP+TP) & DPO-D (CP+TP) \\
\midrule
0.1 & 80.4\% $\pm$ 1.34\% & 82.9\% $\pm$ 0.90\% & 82.2\% $\pm$ 1.95\% \\
0.3 & 83.7\% $\pm$ 0.45\% & 84.3\% $\pm$ 0.18\% & 83.7\% $\pm$ 0.68\% \\
0.5 & 83.5\% $\pm$ 0.14\% & 84.1\% $\pm$ 0.08\% & 84.2\% $\pm$ 0.15\% \\
0.7 & 83.6\% $\pm$ 0.53\% & 83.4\% $\pm$ 0.19\% & 84.5\% $\pm$ 0.58\% \\
1.0 & 83.8\% $\pm$ 0.29\% & 84.0\% $\pm$ 0.08\% & 83.7\% $\pm$ 0.42\% \\
\bottomrule
\end{tabular}
\caption{Median accuracy ($\pm$ standard deviation) on GSM8K test set across five random seeds for each $\beta$ value. Tie-compatible variants (DPO-RK, DPO-D) consistently outperform the baseline DPO.}
\label{tab:beta_variance_results}
\end{table}

We run experiments for five random seeds for each variant at each $\beta$ value (75 training runs in total). In Table~\ref{tab:beta_variance_results}, we report the median accuracy and standard deviation. Consistent with our previous finding, tie-compatible variants achieve higher median accuracy at all $\beta$ values. Gains are generally greater than one standard deviation.

\section{Additional Supporting Experiments and Analyses}
\label{app:Additional Supporting Experiments and Analyses}

\subsection{Sensitivity Study on the Value of $\alpha$ for DPO-RK and DPO-D}
\label{app:Ablation Study on the Value of alpha for DPO-RK and DPO-D}

In our early experiments with a weaker translation model, we find that the choice of $\alpha$ does not have a strong impact on final task performance. In Table \ref{tab:alpha-ablation-study}, we vary the hyperparameter $\alpha$ and report the KL divergence, KIWI-22, BLEURT, and COMET scores with the BLOOMZ-mt-7B setup evaluated on WMT-21 ZH-EN.

\begin{table}[!ht]
    \centering
    \begin{tabular}{c c c c c}
        \hline
        \textbf{$\alpha$} & \textbf{KL} & \textbf{KIWI-22} & \textbf{BLEURT} & \textbf{COMET} \\
        \hline
        \multicolumn{5}{c}{\textbf{DPO-RK}} \\
        \hline
        0.5 & 11.23 & 78.9 & 66.9 & 81.0 \\
        \underline{ln3} & 14.80 & 79.0 & 67.1 & 81.1 \\
        2.0 & 21.84 & 79.1 & 67.1 & 81.1 \\
        3.0 & 30.41 & 79.0 & 66.9 & 80.9 \\
        5.0 & 76.66 & 69.9 & 68.9 & 74.6 \\
        \hline
        \multicolumn{5}{c}{\textbf{DPO-D}} \\
        \hline
        -2.5 & 11.56 & 79.0 & 67.0 & 81.0 \\
        -1.0 & 15.25 & 79.0 & 67.1 & 81.1 \\
        \underline{0.0} & 11.66 & 79.2 & 67.1 & 81.0 \\
        1.0 & 29.72 & 79.0 & 67.0 & 81.0 \\
        2.5 & 47.20 & 79.2 & 67.1 & 81.1 \\
        \hline
    \end{tabular}
    \caption{Effect of varying $\alpha$ on DPO-RK ($\beta=0.4$) and DPO-D ($\beta=0.5$) systems. For DPO-RK, $\alpha=\alpha_{RK}$ in Eq.\ref{eq:dpork_loss}; For DPO-D, $\alpha=\ln \nu_D$ in Eq.\ref{eq:dpod_loss}. Results are reported for KL divergence, KIWI-22, BLEURT, and COMET metrics. Default values of $\alpha$ are \underline{underlined}.}
    \label{tab:alpha-ablation-study}
\end{table}

We find that (1) The final task performance is stable over a range of $\alpha$ values (note that $\alpha < 0$ is illegal for DPO-RK) and (2) for large values of $\alpha$, there is relatively little regularization as measured by KL divergence.

As explained in Sec.\ref{sec:DPO-RK and DPO-D Updates}, $\alpha$ controls the width of the band in reward margins where there's little or no gradient contributions from the tied pairs whose difference in reward falls within the band. This explains the lack of regularization under large $\alpha$. Given large $\alpha$, the gradient on TPs approach zero regardless of the reward margin of the pair under the current model. As we show in the paper, tied pairs contribute to a regularization effect. This explains why large and hence little gradient contribution from TPs leads to relatively little regularization.

\subsection{Grid search of $\beta$ and $\alpha$ on WMT}

We further conduct a grid search over $\beta$ and $\alpha$ values on WMT to investigate their effects. The results are shown in Table~\ref{tab:wmt_grid_results}.

\begin{table*}[!ht]
\centering
\begin{tabular}{llrrr}
\toprule
Variant & $\beta$ & $\alpha$ & KL & COMET \\
\midrule
\multirow{5}{*}{DPO-RK (CP+TP)} 
 &  & 0.1 & 19.73 & 80.9 \\
 &      & 0.5 & 22.31 & 81.0 \\
 & 0.2     & $\ln 3$ & 28.61 & 80.9 \\
 &      & 2.0 & 37.05 & 81.1 \\
 &      & 3.0 & 49.00 & 81.0 \\[3pt]
\multirow{5}{*}{DPO-RK (CP+TP)} 
 &  & 0.1 & 9.54 & 80.9 \\
 &      & 0.5 & 14.80 & 81.0 \\
 & 0.4     & $\ln 3$ & 21.84 & 81.1 \\
 &      & 2.0 & 30.41 & 81.1 \\
 &      & 3.0 & 76.66 & 74.6 \\[3pt]
\multirow{5}{*}{DPO-RK (CP+TP)} 
 & & 0.1 & 6.32 & 80.7 \\
 &      & 0.5 & 5.91 & 80.6 \\
 & 0.8      & $\ln 3$ & 8.10 & 80.8 \\
 &      & 2.0 & 10.37 & 80.9 \\
 &      & 3.0 & 14.53 & 80.9 \\[6pt]
\midrule
\multirow{5}{*}{DPO-D (CP+TP)} 
 &  & $-2.5$ & 27.46 & 80.9 \\
 &     & $-1.0$ & 34.80 & 80.6 \\
 &  0.2     & 0 & 42.74 & 80.8 \\
 &      & 1.0 & 56.83 & 80.6 \\[3pt]
\multirow{5}{*}{DPO-D (CP+TP)} 
 &  & $-2.5$ & 14.56 & 81.0 \\
 &      & $-1.0$ & 18.71 & 81.0 \\
 & 0.4     & 0 & 17.01 & 81.0 \\
 &      & 1.0 & 33.15 & 81.0 \\
 &      & 2.5 & 56.18 & 80.7 \\[3pt]
\multirow{5}{*}{DPO-D (CP+TP)} 
 &  & $-2.5$ & 11.56 & 81.0 \\
 &      & $-1.0$ & 15.25 & 81.1 \\
 &  0.5    & 0 & 11.66 & 81.0 \\
 &      & 1.0 & 29.72 & 81.0 \\
 &      & 2.5 & 47.20 & 81.1 \\[3pt]
\multirow{5}{*}{DPO-D (CP+TP)} 
 &  & $-2.5$ & 8.16 & 80.7 \\
 &      & $-1.0$ & 9.10 & 80.7 \\
 & 0.8     & 0 & 13.33 & 81.1 \\
 &      & 1.0 & 19.54 & 81.0 \\
 &      & 2.5 & 30.48 & 80.8 \\
\bottomrule
\end{tabular}
\caption{Grid search results for $\beta$ and $\alpha$ on WMT showing the corresponding KL divergence and COMET scores.}
\label{tab:wmt_grid_results}
\end{table*}

Consistent with our findings in Appendix~\ref{app:Ablation Study on the Value of alpha for DPO-RK and DPO-D}, we find that task performance is stable over a range of $\beta$ and $\alpha$ values. These results add further empirical evidence supporting the role of $\beta$ and $\alpha$ as knobs for regularization strength. Controlled for a fixed value of $\beta$, increasing $\alpha$ and $\beta$ generally leads to higher KL divergence with respect to the reference model at the end of training. 

Based on these experiments, we recommend the following hyperparameter tuning scheme for DPO-RK and DPO-D:
\begin{enumerate}
    \item Run training across a range of $\beta$ values with the default $\alpha$ to identify the effective range of $\beta$.
    \item Adjust $\alpha$ (and $\beta$, if desired) at those values, using the final KL divergence as a guide. 
\end{enumerate}

As shown in experiments, there is typically a ``sweet range'' of KL where performance is optimal. This procedure allows practitioners to efficiently identify the best performance–KL trade-off for a given setup.

\subsection{GPT-4 Evaluation on Summarization}
\label{app:GPT-4 Evaluation on Summarization}

For TL;DR, we additionally report the win-rate as evaluated by GPT-4 of the best-performing systems in Table \ref{tab:gpt4-evaluation-summarization-app}.

\begin{table}[!ht]
    \centering
    \begin{tabular}{lcc}
        \hline
        \textbf{Model} & \textbf{PairRM} & \textbf{GPT-4} \\
        \hline
        Pythia-2.8-DPO(CP) & \textbf{71.5\%} & 62.1\% \\
        Pythia-2.8-DPO-D(CP+TP) & \textbf{71.5\% }& \textbf{64.1\%} \\
        Pythia-2.8-DPO(CP+TP) & 69.5\% & 62.1\% \\
        Pythia-2.8-DPO-RK(CP+TP) & 68.0\% & 57.8\% \\
        Pythia-2.8-SFT & 37.5\% & 36.7\% \\
        \hline
    \end{tabular}
    \caption{Win-rate comparison of the best-performing systems evaluated using PairRM and GPT-4.}
    \label{tab:gpt4-evaluation-summarization-app}
\end{table}

We note that the rank based on PairRM win-rate agrees with the rank produced by GPT-4 except that GPT-4 prefers Pythia-2.8-DPO-D(CP+TP).

\subsection{Varying the Percentage of Ties}
\label{app:varying the percentage of ties}

In Table \ref{tab:tp-proportion-regularization} we vary the proportion of TPs and evaluate on the WMT-21 ZH-EN dataset, keeping the CPs unchanged. We train systems with three $\beta$ values [0.2, 0.4, 0.6] and report the best-performing system in terms of BLEURT. We observe that, under the same $\beta$ value, including more TPs reduces the KL divergence with respect to the reference model at the end of training while maintaining the same performance. This provides further empirical evidence for our analysis in Sec.\ref{sec:Explaining the regularization effect of ties via ideal DPO policy theory}, where ties regularize training.

\begin{table}[!ht]
    \centering
    \begin{tabular}{lcccc}
        \hline
        \textbf{TP\%} & \textbf{KL} & \textbf{KIWI-22} & \textbf{BLEURT} & \textbf{COMET} \\ \hline
        \multicolumn{5}{c}{DPO-RK (CP+TP)} \\
        \hline
        25\% & 10.16 & 78.4 & 66.5 & 80.7 \\
        50\% & 6.05 & 78.2 & 66.2 & 80.4 \\
        75\% & 4.40 & 78.5 & 66.4 & 80.7 \\
        \hline
        \multicolumn{5}{c}{DPO-D (CP+TP)} \\
        \hline
        25\% & 17.26 & 78.5 & 66.4 & 80.8 \\
        50\% & 10.40 & 78.5 & 66.3 & 80.7 \\
        75\% & 6.63 & 78.5 & 66.5 & 80.8 \\
        \hline
    \end{tabular}
    \caption{Regularization effects of varying TPs in DPO-RK and DPO-D systems. KIWI-22~\cite{KIWI-22_paper} is a reference-less metric. A TP\% of 50\% means that we keep 50\% of all the TPs, resulting in a CPs:TPs ratio of 2:1.}
    \label{tab:tp-proportion-regularization}
\end{table}

\subsection{IPO Baselines and KIWI-22 Evaluation on WMT}
\label{app:IPO Baselines and KIWI-22 Evaluation on WMT}

We additionally evaluate our systems with KIWI-22~\citep{KIWI-22_paper}, a popular reference-less neural metric used by recent works in evaluating state-of-the-art (SoTA) machine translation systems~\citep{ALMA-R_paper}.

\begin{table*}[!ht]
    \centering
    \begin{tabular}{lccc}
        \hline
        \textbf{Model} & \textbf{Best BLEURT} & \textbf{Best COMET} & \textbf{Best KIWI-22} \\
        \hline
        Bloomz-mt-7b1-SFT              & 61.6 & 77.9 & 77.3 \\
        Bloomz-mt-7b1-DPO (CP)         & \textbf{67.1} & \textbf{81.3} & \textbf{82.0} \\
        Bloomz-mt-7b1-DPO (CP+TP)      & 66.6 & 80.6 & 78.5 \\
        Bloomz-mt-7b1-DPO-RK (CP+TP)   & \textbf{67.1} & 81.0 & 79.0 \\
        Bloomz-mt-7b1-DPO-D (CP+TP)    & \textbf{67.1} & 81.0 & 79.2 \\
        Bloomz-mt-7b1-IPO (CP)         & 66.6 & 80.6 & 78.6 \\
        Bloomz-mt-7b1-IPO (CP+TP)      & 66.3 & 80.5 & 78.4 \\
        \hline
    \end{tabular}
    \caption{Evaluation results under BLEURT, COMET, and KIWI-22 metrics for various models. DPO and its variants show clear gains over the SFT baseline across all three metrics.}
    \label{tab:ipo-kiwi-evaluation}
\end{table*}

We perform additional NMT experiments with IPO fine-tuning on CPs (IPO(CP+TP) and CPs+TPs (IPO(CP+TP)). We conduct IPO fine-tuning (using code from https://github.com/eric-mitchell/direct-preference-optimization) with the a range of beta values [0.025, 0.05, 0.1, 0.3, 0.5, 0.7, 1.0] and report the best performing system as evaluated on the WMT-21 ZH-EN dataset in Table \ref{tab:ipo-kiwi-evaluation}. We use identical training hyper-parameters as used in DPO, DPO-RK and DPO-D fine-tuning. We note that these metrics are strongly rated in the WMT-23 metric overview paper~\citep{WMT23_Metric_paper}.

We find that (1) both IPO(CP) and IPO(CP+TP) yield substantial gains compared to the baseline SFT model (+4.7 BLEURT. (2) Adding TPs to IPO causes a small degradation (<-0.3) across all metrics. (3)
IPO(CP) performs on par with DPO(CP+TP) but under-performs DPO(CP), DPO-RK(CP+TP), and DPO-D(CP+TP) on all metrics.

\subsection{PairRM as a Proxy Evaluator for GPT-4}
\label{app:PairRM as a proxy of GPT-4}
\begin{table}[!ht]
\centering
\begin{tabular}{@{}llll@{}}
\toprule
System & GPT-4 & PairRM &  \\ \midrule
DPO &  &  &  \\
~~T=1.0 & 23.4\% & 27.3\% &  \\
~~T=0.75 & 40.2\% & 40.6\% &  \\
~~T=0.5 & 52.3\% & 54.7\% &  \\
~~T=0.25 & 46.9\% & 51.6\% &  \\
~~T=0.0 & 50.4\% & 55.5\% &  \\
 &  &  &  \\
SFT &  &  &  \\
~~T=1.0 & 22.3\% & 23.0\% &  \\
~~T=0.5 & 37.5\% & 38.7\% &  \\
~~T=0.0 & 36.7\% & 39.8\% &  \\ \bottomrule
\end{tabular}
\caption{Win-rate of Pythia-2.8B model SFT/DPO on TL;DR train against 256 human-written summaries as judged by GPT4-0613 and PairRM.}
\label{tab:PairRM win-rate}
\end{table}
PairRM~\citep{PairRM-paper} is a strong reward model that has been shown to be effective in curating preference datasets for iterative DPO training~\citep{snorkel-iterative-dpo-with-pairrm}. In our experiments on TL;DR summarization, we use the PairRM reward model instead of GPT-4 for comparing generated summaries against human references. In this appendix, we show that win-rate as judged by PairRM is a good proxy for GPT4-0613~\citep{openai2024gpt4technicalreport} win-rate on the TL;DR dataset~\cite{TLDR-dataset-OpenAI}.

We generate summaries from SFT pythia-2.8B model by sampling at temperature $T=[0.0, 0.5, 1.0]$ and the DPO model ($\beta=0.1$) trained on TL;DR's full training set at temperature $T=[0.0, 0.25, 0.5, 0.75, 1.0]$. Their win-rates against the 256 human-written summaries in the TL;DR valid-2 split as judged by GPT-4 and PairRM are tabulated in Table \ref{tab:PairRM win-rate}. We find that the win-rates by GPT-4 and PairRM are similar and that system rankings are generally preserved. We opt to use PairRM as our evaluation metric which enables us to conduct experiments faster and at lower costs.

\subsection{Verifying a Tied Pair Selection Strategy for TL;DR}
\label{app:Verifying tied pair selection strategy}


As explained in Sec.~\ref{sec:adding ties}, we use the reward model associated with the DPO model trained on TL;DR to identify summarizations that are similar in quality.  Note that we are performing unsupervised labelling of ties in the DPO training data, which is somewhat more forgiving than the classification task discussed in other sections which requires labelling ties in held-out data not seen in training.  We do however assume that the reward model should perform well on the data it was trained on.   

To investigate these assumptions, we swap the preferred and the dispreferred responses in all tied pairs to form ``reversed Tied Pairs'' (rTP). If the responses in TP are truly similar in quality (i.e., it is acceptable to reverse the preference direction), training with DPO(CP+TP) and DPO(CP+rTP) should yield similar performing models. Furthermore, the DPO-RK and DPO-D learning procedures which explicitly model tied pairs should yield better performing model.  We conduct  experiments on TL;DR. Table \ref{tab:reversed ties results} shows that the performance relation $\text{DPO-D(CP+TP)} \sim \text{DPO-RK(CP+TP)} \succ \text{DPO(CP+TP)} \sim \text{DPO(CP+rTP)}$ indeed holds for TL;DR, which suggests that our Tied Pair selection strategy is reasonable.

\begin{table}[!htb]
\centering
       \begin{tabular}{@{}lc@{}} \toprule
        System & PairRM \\
        DPO(CP+ TP)  &  58.6\% \\
        DPO(CP+rTP)  & 60.9\% \\
        DPO-RK(CP+TP)  & 68.0\% \\
        DPO-D(CP+TP) & 68.8\% \\
        \bottomrule
        \end{tabular}
    \caption{Win-rates of Pythia-2.8B model DPO on TL;DR train against 256 human-written summaries as judged by  PairRM.
    Systems were trained on CP+TP or CP+rTP data with DPO, DPO-RK, or DPO-D at fixed $\beta=0.3$. For DPO-RK and DPO-D learning, rTP is equivalent to TP as there is no preference direction for ties.}
    \label{tab:reversed ties results}
\end{table}

\subsection{Using GPT-4 to validate ties on NMT}
\label{app:Using GPT-4 to validate ties on NMT}

We conduct the following experiments with GPT-4 to evaluate whether the tied pairs selected by BLEURT are reliable.

We randomly select 128 source sentences on our training set sampled from WMT20 ZH-EN and collect their corresponding CPs and TPs as determined by BLERUT score's ranking. Then, for each source sentence, we present the two pairs of translations to GPT-4o-2024-08-06 and prompt it to decide which pair of translations is a tied pair.

We find that on the 128 randomly selected examples from WMT20 ZH-EN, the GPT-4o model agrees 82.03\% of the time with BLEURT's assignment of CP and TP.

An example of our prompt and GPT-4's response is below. In this case, GPT-4 chooses the same pair (Pair A) as TP as BLEURT. 


The prompt for GPT-4 and example responses are provided below.

\begin{lstlisting}[caption={Prompt and Response for using GPT-4 to validate tie selection on NMT}, label={lst:gpt4-prompt}]
Pair A:
    Better Translation: Beijing Youth Daily reporters learned that there were many important amendments in the first published " drafts".
    Worse Translation: Beijing Youth Daily reporters learned that the "First Draft" has many important modifications.
Pair B:
    Better Translation: The Beijing Youth Daily reporter learned that there were many important changes in the Draft for the first time published.
    Worse Translation:  Beijing Youth Daily reporter learned that the first version of the "Comments" had been revised many times.

You should provide a short rationale and then decide which pair is a tied pair. You should follow the format "DECISION: [[A]]" or "DECISION: [[B]]" to give your decision.

Response:

To determine which pair represents a tied pair, let's evaluate the differences between the translations in each pair.

In Pair A:
- The "better" translation is: "Beijing Youth Daily reporters learned that there were many important amendments in the first published 'drafts'."
- The "worse" translation is: "Beijing Youth Daily reporters learned that the 'First Draft' has many important modifications."

Both translations in Pair A convey similar meanings, describing modifications or amendments to a draft. The primary differences are stylistic, such as the placement and quotation of "draft".

In Pair B:
- The "better" translation is: "The Beijing Youth Daily reporter learned that there were many important changes in the Draft for the first time published."
- The "worse" translation is: "Beijing Youth Daily reporter learned that the first version of the 'Comments' had been revised many times."

In Pair B, the two translations have significant differences in meaning. The "better" translation accurately captures the notion of "important changes" and "first published," whereas the "worse" translation introduces "first version of the 'Comments'" and a focus on revisions rather than initial publication, which diverges more from the source sentence.  

With this analysis, Pair A has translations that are more similar in meaning and are thus likely to be the tied pair, while Pair B shows a clearer distinction in translation quality.

DECISION: [[A]]
\end{lstlisting}

\newpage

\clearpage

\newpage
\section{Simulating the Perfect DPO-Davidson Policy}
In Section~\ref{ss:tpvkl} we make use of the relationship derived by \citet[Appendix A.2]{PL-paper-chen} which specifies the optimal DPO policy to  
minimize the binary classification loss  
\begin{align*}
    \min_\pi&  \mathbb{P}(y_1 \succ_x y_2) \log \pi (y_1 \succ_x y_2)] \\ &+ (1-\mathbb{P}(y_1 \succ_x y_2)) \log (1- \pi (y_1 \succ_x y_2))
\end{align*}

where $\mathbb{P}(y_1 \succ_x y_2)$ is the human ground truth preference distribution.  

We extend the analysis of \citet{PL-paper-chen} to include the Davidson model, noting that the binary maximum likelihood objective becomes ternary.  We assume we have the ground-truth human preference distributions $\mathbb{P}(y_1 \succ_x y_2)$, $\mathbb{P}(y_2 \succ_x y_1)$, and $\mathbb{P}(y_1 \sim_x y_2)$ needed to define the objective. The resulting     
Theorem \ref{theorem:simulating perfect DPO-D} can be viewed as a generalization of Theorem 3 of \citet{PL-paper-chen} that allows for the observations of ties.  Where ties are not allowed (i.e. $\nu_{D}=0$), the Davidson model simplifies to the Bradley-Terry model and Theorem 3 of \citet{PL-paper-chen} 
is recovered as a special case of Theorem \ref{theorem:simulating perfect DPO-D}.

\label{app:Simulating Ideal DPO-Davidon Model}

\begin{theorem}[Simulating Perfect DPO-D Policy]
Assume we are given an aggregated comparison datapoint $(x, y_1, y_2)$ and   human ground-truth preference probabilities  $\mathbb{P}(y_1 \succ_x y_2)$,  $\mathbb{P}(y_1 \succ_x y_2)$, and $\mathbb{P}(y_1 \sim_x y_2)$ which  obey the Davidson model with hyper-parameter $\nu_{D}$. Let the reference model be $\pi_{ref}$. It follows that the perfect DPO-Davidson policy $\pi^*$ on this aggregated comparison datapoint satisfies 
\begin{equation}
    \frac{\pi^*(y_1|x)}{\pi^*(y_2|x)}=\frac{\pi_{ref}(y_1|x)}{\pi_{ref}(y_2|x)}\Big(\frac{\mathbb{P}(y_1 \succ_x y_2)}{\mathbb{P}(y_2 \succ_x y_1)}\Big)^{1/\beta}
    \label{eq:perfect DPO-D policy simulation}
\end{equation}
or equivalently
\begin{equation}
    \frac{\pi^*(y_1|x)}{\pi^*(y_2|x)}=\frac{\pi_{ref}(y_1|x)}{\pi_{ref}(y_2|x)}\Big(2\nu_D\frac{\mathbb{P}(y_1 \succ_x y_2)}{\mathbb{P}(y_1 \sim_x y_2)}\Big)^{2/\beta}
    \label{eq:perfect DPO-D policy simulation with tie probabilities}
\end{equation}

\label{theorem:simulating perfect DPO-D}
\end{theorem}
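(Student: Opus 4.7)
The plan is to follow the outline of Theorem 3 of \citet{PL-paper-chen}, but with a ternary outcome space rather than a binary one. First, I would specialize the DPO-D population objective (Eq.~\ref{eq:extended_obj_with_tie} instantiated with the Davidson probabilities of Eqs.~\ref{eq:dwinpprob} and~\ref{eq:dtiepprob}) to the single aggregated datapoint $(x, y_1, y_2)$, writing the expected loss as a ternary cross-entropy
\begin{equation*}
\mathcal{L}(d) \;=\; -\sum_{o \in \{\succ,\,\prec,\,\sim\}} \mathbb{P}(o \mid x, y_1, y_2)\, \log p^{D}(o;\, d),
\end{equation*}
where $d = d_\theta(x, y_1, y_2)$ is the scalar reward margin from Eq.~\ref{eq:rewarddiff} and $p^{D}(\cdot;\,d)$ denotes the Davidson distribution over the three outcomes with fixed $\nu_D$, now parameterized by the single scalar $d$.

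The next step is to note that $\mathcal{L}(d)$ equals the KL divergence from the Davidson model to $\mathbb{P}$ plus the entropy of $\mathbb{P}$ (a constant in $d$), so by Gibbs' inequality the minimum is attained precisely when the two distributions coincide. By hypothesis $\mathbb{P}$ is itself a Davidson distribution with the same $\nu_D$, so such a minimizer $d^*$ exists. Uniqueness follows because the ratio $p^{D}(y_1\succ_x y_2;\,d)/p^{D}(y_2\succ_x y_1;\,d) = e^{d}$ (read off directly from Eq.~\ref{eq:dwinpprob}) is strictly monotone in $d$. Matching this ratio against $\mathbb{P}$ pins down
\begin{equation*}
d^* \;=\; \log \frac{\mathbb{P}(y_1 \succ_x y_2)}{\mathbb{P}(y_2 \succ_x y_1)}.
\end{equation*}

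Finally, I would substitute the reward-margin parameterization $d^* = \beta\bigl[\log(\pi^*(y_1|x)/\pi_{ref}(y_1|x)) - \log(\pi^*(y_2|x)/\pi_{ref}(y_2|x))\bigr]$ from Eq.~\ref{eq:rewarddiff} and exponentiate to arrive at Eq.~\ref{eq:perfect DPO-D policy simulation}. To obtain the tie-probability form (Eq.~\ref{eq:perfect DPO-D policy simulation with tie probabilities}), I would invoke the Davidson identity $p^{D}(y_1\sim_x y_2) = 2\nu_D\, e^{-d/2}\, p^{D}(y_1\succ_x y_2)$ (Eq.~\ref{eq:dtiepprob}) evaluated at the matching $d^*$, solve for $e^{d^*/2} = 2\nu_D\,\mathbb{P}(y_1\succ_x y_2)/\mathbb{P}(y_1\sim_x y_2)$, and raise to the power $2/\beta$.

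The hard part is conceptual rather than computational: verifying that, once $\nu_D$ is fixed, the ternary Davidson family is identified by the single scalar $d$, so that minimizing the DPO-D loss over all policies $\pi^*$ reduces cleanly to matching this one quantity. Once that is in place everything else is bookkeeping, and the degenerate case $\nu_D = 0$ collapses the Davidson model to Bradley-Terry and recovers Theorem 3 of \citet{PL-paper-chen} as a useful sanity check.
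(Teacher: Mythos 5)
Your proposal is correct and follows essentially the same route as the paper's proof: minimize the ternary cross-entropy so that the Davidson distribution parameterized by the reward margin matches the ground-truth probabilities, read off $d^* = \log\bigl(\mathbb{P}(y_1 \succ_x y_2)/\mathbb{P}(y_2 \succ_x y_1)\bigr)$ from the win-probability ratio, substitute the reward-margin parameterization, and use the Davidson tie relation for the second form. Your explicit appeal to Gibbs' inequality and the monotonicity of $e^{d}$ for existence and uniqueness is a slightly more careful justification of the matching step that the paper simply asserts, but it is the same argument.
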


\begin{proof}
The DPO-D policy objective optimizes the following three-way classification loss:
\begin{align*}
\min_\pi & \mathbb{P}(y_1 \succ_x y_2)\log \pi(y_1 \succ_x y_2) \\ &+ \mathbb{P}(y_2 \succ_x y_1)\log\pi(y_2 \succ_x y_1) \\ &+ \mathbb{P}(y_1 \sim_x y_2)\log\pi(y_1 \sim_x y_2)
\end{align*}
Let $\theta^*$ denotes a set of parameters such that $\pi_{\theta^*}$ is an optimal policy for the above loss, then $\pi_{\theta^*}$ satisfies:
\begin{align*}
    \pi_{\theta^*}(y_1 \succ_x y_2) &= \mathbb{P}(y_1 \succ_x y_2) \\
    \pi_{\theta^*}(y_2 \succ_x y_1) &= \mathbb{P}(y_2 \succ_x y_1) \\
    \pi_{\theta^*}(y_1 \sim_x y_2) &= \mathbb{P}(y_1 \sim_x y_2)
\end{align*}

Expressing the policy probability $\pi_{\theta^*}(y_w \succ_x y_l)$ and $\pi_{\theta^*}(y_l \succ_x y_w)$ in terms of the reward margins $d_{\theta^*}(x,y_w,y_l)$:
\begin{align*}
\mathbb{P}(y_1 \succ_x y_2) &= \frac{ 1 }{ 1 + e^{-d_{\theta^*}(x,y_w,y_l)} + 2\nu_D e^{-d_{\theta^*}(x,y_w,y_l)/2}} \\
    \mathbb{P}(y_2 \succ_x y_1) &= \frac{ e^{-d_{\theta^*}(x,y_1,y_2)} }{ 1 + e^{-d_{\theta^*}(x,y_1,y_2)} + 2\nu_D e^{-d_{\theta^*}(x,y_1,y_2)/2}} 
\end{align*}

Rearranging, we have
\begin{align*}
    \frac{\mathbb{P}(y_2 \succ_x y_1)}{\mathbb{P}(y_1 \succ_x y_2)} &= \exp\big(-d_{\theta^*}(x,y_1,y_2) \big) \\ &= \exp\Big(\beta \log \frac{\pi_{\theta^*}(y_2|x)}{\pi_{ref}(y_2|x)} - \beta \log \frac{\pi_{\theta^*}(y_1|x)}{\pi_{ref}(y_1|x)}\Big)
\end{align*}
Taking logarithms on both side and divide by $\beta$.
\begin{equation*}
    \frac{1}{\beta}\log\frac{\mathbb{P}(y_1 \succ_x y_2)}{\mathbb{P}(y_2 \succ_x y_1)} = \log\frac{\pi_{\theta^*}(y_2|x)\pi_{ref}(y_1|x)}{\pi_{ref}(y_2|x)\pi_{\theta^*}(y_1|x)}
\end{equation*}
Exponentiating both sides gives

\begin{equation*}
    \frac{\pi_{\theta^*}(y_2|x)}{\pi_{\theta^*}(y_1|x)} = \frac{\pi_{ref}(y_2|x)}{\pi_{ref}(y_1|x)}\Big(\frac{\mathbb{P}(y_2 \succ_x y_1)}{\mathbb{P}(y_1 \succ_x y_2)}\Big)^{1/\beta}
\end{equation*}

Taking the inverse yields Eq \ref{eq:perfect DPO-D policy simulation}. 

To see the equivalence between Eq \ref{eq:perfect DPO-D policy simulation} and Eq \ref{eq:perfect DPO-D policy simulation with tie probabilities}, note that the ground-truth preference and tie probabilities which obey the Davidson model satisfy the following relation:

\begin{align*}
    \mathbb{P}(y_1\sim_x y_2) = 2\nu_D \sqrt{\mathbb{P}(y_1\succ_x y_2)\mathbb{P}(y_2 \succ_x y_1)}
\end{align*}


Rearranging Eq \ref{eq:perfect DPO-D policy simulation}:
    
\begin{align*}
    \frac{\pi^*(y_1|x)}{\pi^*(y_2|x)}&=\frac{\pi_{ref}(y_1|x)}{\pi_{ref}(y_2|x)}\Big(\frac{\mathbb{P}(y_1 \succ_x y_2)}{\mathbb{P}(y_2 \succ_x y_1)}\Big)^{1/\beta} \\ &= \frac{\pi_{ref}(y_1|x)}{\pi_{ref}(y_2|x)}\Big(\sqrt{\frac{\mathbb{P}(y_1 \succ_x y_2)}{\mathbb{P}(y_2 \succ_x y_1)}}\Big)^{2/\beta} \\
    &= \frac{\pi_{ref}(y_1|x)}{\pi_{ref}(y_2|x)}\Big(\frac{\mathbb{P}(y_1 \succ_x y_2)}{\sqrt{\mathbb{P}(y_1 \succ_x y_2)\mathbb{P}(y_2 \succ_x y_1)}}\Big)^{2/\beta} \\
    &= \frac{\pi_{ref}(y_1|x)}{\pi_{ref}(y_2|x)}\Big(2\nu_D\frac{\mathbb{P}(y_1 \succ_x y_2)}{\mathbb{P}(y_1 \sim_x y_2)}\Big)^{2/\beta}
    \label{eq:perfect DPO-D policy simulation}
\end{align*}

which is Eq \ref{eq:perfect DPO-D policy simulation with tie probabilities}. 
\end{proof}

\section{Extended Related Work}
\label{sec:extended related work}
\paragraph{Variants of Direct Preference Optimization} A range of variants of Direct Preference Optimization have been proposed based on problem-specific or theoretical motivations. \citet{DisentanglingLengthDPO} tackle excessive response length by introducing explicit length normalization in the DPO objective. SimPO~\citep{SimPO} modifies the DPO objective to remove the need for a reference model and to include length normalization. KTO~\citep{KTOpaper} is motivated by Kahneman and Tversky’s prospect theory and learns from non-paired preference data. ODPO~\citep{ODPO-Direct-Preference-Optimization-with-an-offset-Amini} incorporates preference strength in the objective by introducing an offset parameter. In deriving ODPO, the offset parameter of \citet[Theorem 3]{ODPO-DirectPreferenceOptimizationWithAnOffset})  plays a role similar to 
 the sensitivity threshold of \citet{rao1967ties}. We note that the ODPO objective with a fixed offset agrees with our proposed DPO-RK objective restricted to clear preference data, but does not extend to ties. 

\paragraph{Frameworks for Pair-wise Preference Optimization} Several works propose theoretical frameworks for understanding general Preference Optimization from which DPO can be obtained as a special case. \citet{IPO-AGeneralTheoreticalParadigmtoUnderstandLearningfromHumanPreferences} introduces the $\Psi$PO formalism which allows alternative expression of the reward in terms of the model's predicted probability. IPO is derived when the identity mapping is used, and DPO arises under a log-ratio mapping. \citet{DensityEstimationPerspectiveForPreferenceOptimization} formulate learning from pair-wise preference as learning the implicit preference generating distribution of the annotators. In this formalism, DPO is a well-specified model for the implicit preference distribution assuming that the human preference generative process follows the Bradley-Terry model. Our work can be viewed as assuming an annotator preference generating distribution that allows for the outcome of a tie (i.e. the Rao-Kupper or the Davidson model).
\citet{GPO-GeneralizedPreferenceOptimization-AUnifiedApproachtoOfflineAlignment} propose a generalized approach to deriving offline preference optimization losses through binary classification. In this work, we consider the ternary classification with the possibility of declaring a tie. In Appendix \ref{app:Simulating Ideal DPO-Davidon Model}, we show that the `perfect' DPO-D policy can be simulated starting from the ternary classification loss.


\paragraph{Pair-wise Comparison Models}  \citet{TheManyRoutesToTheUbiquitousbradleyterrymodel} review the  Bradley-Terry model, including its relation to the logistic distribution~\citep{BradleyTerryandLogisiticDistribution} and  Luce choice axiom~\cite{LuceChoice}. 
The Rao-Kupper~\citep{rao1967ties} and the Davidson model~\citep{David-TheMethodOfPairedComparisons} are two notable extensions to Bradley-Terry (Sec.~\ref{ssec:bte}). 
See the review by \citet{David-TheMethodOfPairedComparisons}  and  bibliography by \citet{A-Bibliography-on-the-Method-of-Paired-Comparisons}. Modeling ties remains an active research topic in fields such as sport team ranking~\citep{ImprovePageRankSport},  medical treatments~\citep{TieMedicalRecord}, and chatbots~\citep{RankingChatbots}.

\newpage 
 
\section{Qualitative Cases from DPO, DPO-RK, and DPO-D Systems}
\label{app:Qualitative Cases from DPO, DPO-RK, and DPO-D Systems}

We collect the responses on WMT-21 ZH-EN from the best performing model under each configuration. Below, we present two cases where systems trained with DPO-RK(CP+TP) and DPO-D(CP+TP) perform better than DPO(CP) and one case where they perform worse than DPO(CP) as judged by KIWI-22. 

The KIWI-22 score of each translation is shown in parentheses.

\subsection*{Case 1: ID = 10}

\noindent\textbf{Source:} \begin{CJK*}{UTF8}{gbsn}
小康不小康，关键看老乡。
\end{CJK*}

\noindent\textbf{Reference:} Hometowns are the key to moderate prosperity.

\begin{itemize}
    \item \textbf{DPO(CP):} It is not important whether you are rich or poor, as long as you have a good relationship with your neighbors. (71.6)
    \item \textbf{DPO(CP+TP):} Wealthy or poor, it all depends on our neighbors. (72.0)
    \item \textbf{DPO-RK(CP+TP):} Wealthy or poor, it all depends on our neighbors. (72.0)
    \item \textbf{DPO-D(CP+TP):} Whether you are rich or poor depends on your neighbors. (77.9)
\end{itemize}

In this case, the DPO(CP) system produces a long translation with incorrect meaning. The other three systems produce correct, literal translations. The translation from DPO-D(CP+TP) is more fluent. However, all systems fail to fully capture the meaning of \begin{CJK*}{UTF8}{gbsn}
"小康"
\end{CJK*} (i.e., "moderate prosperity") as done in the reference.

\subsection*{Case 2: ID = 269}

\noindent\textbf{Source:} \begin{CJK*}{UTF8}{gbsn} 这一航空运营商在声明中说，旗下一架客机7月15日从华盛顿州西雅图飞往得克萨斯州奥斯汀途中，一台发动机突然熄火，客机最终在奥斯汀机场成功紧急降落。\end{CJK*}

\noindent\textbf{Reference:} The airline released a statement claiming that as its aircraft was flying from Seattle, Washington to Austin, Texas, one of the engines suddenly stalled, and the aircraft finally made an emergency landing in Austin.

\begin{itemize}
    \item \textbf{DPO(CP):} This airline operator said in a statement that one engine on one of its aircraft failed midflight on July 15 from Seattle, Washington to Austin, Texas, and the aircraft successfully made an emergency landing at the Austin airport. (84.5)
    \item \textbf{DPO(CP+TP):} This airline operator said in a statement that one of its aircraft was on its way from Seattle, Washington to Austin, Texas when one of its engines suddenly went out of order, and the aircraft successfully made an emergency landing at the airport in Austin. (81.4)
    \item \textbf{DPO-RK(CP+TP):} This airline operator said in a statement that one engine on one of its aircraft failed during the flight from Seattle, Washington to Austin, Texas on July 15, and the aircraft successfully made an emergency landing at the Austin airport. (85.4)
    \item \textbf{DPO-D(CP+TP):} This airline operator said in a statement that one engine on one of its aircraft failed during the flight from Seattle, Washington to Austin, Texas on July 15, and the aircraft successfully made an emergency landing at the Austin airport. (85.4)
\end{itemize}

The translation produced by the DPO(CP+TP) system fails to capture the date of the event ("July 15"). Translations from DPO-RK(CP+TP) and DPO-D(CP+TP) are identical and differ from that from DPO(CP) only in rephrasing "midflight" with "during the flight."

\subsection*{Case 3: ID = 91}

\noindent\textbf{Source:} 
\begin{CJK*}{UTF8}{gbsn} 运动少年热血竞技秀《运动吧少年》在7月11日正式登陆湖南卫视。\end{CJK*}

\noindent\textbf{Reference:} The inspiring competitive show for youth sports, We Are The One, was officially broadcast on Hunan TV on July 11.

\begin{itemize}
    \item \textbf{DPO(CP):} The youth sports action show “Be Active, Be Sporty” will officially debut on Hunan Satellite TV on July 11. (76.8)
    \item \textbf{DPO(CP+TP):} The youth sports competition show "Go, Boy!" officially landed on the Changsha Satellite TV Station on July 11. (75.6)
    \item \textbf{DPO-RK(CP+TP):} The youth sports competition show "Go Play Boys" will be officially broadcast on the Hunan Satellite TV on July 11. (72.1)
    \item \textbf{DPO-D(CP+TP):} The youth sports competition show "Let's Go Boys" will officially debut on the Hunan Satellite TV on July 11. (74.2)
\end{itemize}

In this case, all systems produced a more literal translation of the name of the TV show compared to the reference. Although the KIWI-22 scores for DPO-RK(CP+TP) and DPO-D(CP+TP) are lower, the translation qualities are in fact similar.

\section{Broader Impacts}
\label{app:Broader Impacts}
This paper presents work whose goal is to advance the field of Machine Learning. There are many
potential societal consequences of our work, none of which we feel must be specifically highlighted
here.

\section*{NeurIPS Paper Checklist}

\begin{enumerate}

\item {\bf Claims}
    \item[] Question: Do the main claims made in the abstract and introduction accurately reflect the paper's contributions and scope?
    \item[] Answer: \answerYes{} 
    \item[] Justification: See abstract, introduction, and conclusion.
    \item[] Guidelines:
    \begin{itemize}
        \item The answer NA means that the abstract and introduction do not include the claims made in the paper.
        \item The abstract and/or introduction should clearly state the claims made, including the contributions made in the paper and important assumptions and limitations. A No or NA answer to this question will not be perceived well by the reviewers. 
        \item The claims made should match theoretical and experimental results, and reflect how much the results can be expected to generalize to other settings. 
        \item It is fine to include aspirational goals as motivation as long as it is clear that these goals are not attained by the paper. 
    \end{itemize}

\item {\bf Limitations}
    \item[] Question: Does the paper discuss the limitations of the work performed by the authors?
    \item[] Answer: \answerYes{} 
    \item[] Justification: See Appendix \ref{app:limitation}.
    \item[] Guidelines:
    \begin{itemize}
        \item The answer NA means that the paper has no limitation while the answer No means that the paper has limitations, but those are not discussed in the paper. 
        \item The authors are encouraged to create a separate "Limitations" section in their paper.
        \item The paper should point out any strong assumptions and how robust the results are to violations of these assumptions (e.g., independence assumptions, noiseless settings, model well-specification, asymptotic approximations only holding locally). The authors should reflect on how these assumptions might be violated in practice and what the implications would be.
        \item The authors should reflect on the scope of the claims made, e.g., if the approach was only tested on a few datasets or with a few runs. In general, empirical results often depend on implicit assumptions, which should be articulated.
        \item The authors should reflect on the factors that influence the performance of the approach. For example, a facial recognition algorithm may perform poorly when image resolution is low or images are taken in low lighting. Or a speech-to-text system might not be used reliably to provide closed captions for online lectures because it fails to handle technical jargon.
        \item The authors should discuss the computational efficiency of the proposed algorithms and how they scale with dataset size.
        \item If applicable, the authors should discuss possible limitations of their approach to address problems of privacy and fairness.
        \item While the authors might fear that complete honesty about limitations might be used by reviewers as grounds for rejection, a worse outcome might be that reviewers discover limitations that aren't acknowledged in the paper. The authors should use their best judgment and recognize that individual actions in favor of transparency play an important role in developing norms that preserve the integrity of the community. Reviewers will be specifically instructed to not penalize honesty concerning limitations.
    \end{itemize}

\item {\bf Theory assumptions and proofs}
    \item[] Question: For each theoretical result, does the paper provide the full set of assumptions and a complete (and correct) proof?
    \item[] Answer: \answerYes{} 
    \item[] Justification: Complete derivation for DPO-RK and DPo-D are provided in Appendix \ref{app:Mathematical Derivations}. Complete proof of ideal DPO-D policy theory is provided in Appendix \ref{app:Simulating Ideal DPO-Davidon Model}.
    \item[] Guidelines:
    \begin{itemize}
        \item The answer NA means that the paper does not include theoretical results. 
        \item All the theorems, formulas, and proofs in the paper should be numbered and cross-referenced.
        \item All assumptions should be clearly stated or referenced in the statement of any theorems.
        \item The proofs can either appear in the main paper or the supplemental material, but if they appear in the supplemental material, the authors are encouraged to provide a short proof sketch to provide intuition. 
        \item Inversely, any informal proof provided in the core of the paper should be complemented by formal proofs provided in appendix or supplemental material.
        \item Theorems and Lemmas that the proof relies upon should be properly referenced. 
    \end{itemize}

    \item {\bf Experimental result reproducibility}
    \item[] Question: Does the paper fully disclose all the information needed to reproduce the main experimental results of the paper to the extent that it affects the main claims and/or conclusions of the paper (regardless of whether the code and data are provided or not)?
    \item[] Answer: \answerYes{} 
    \item[] Justification: Hyper-parameters of our experiments are provided in \ref{app:experimental details and full results}, where model versions, datasets, training hyper-parameters and computing platforms are specified. We will publish code on publication.
    \item[] Guidelines:
    \begin{itemize}
        \item The answer NA means that the paper does not include experiments.
        \item If the paper includes experiments, a No answer to this question will not be perceived well by the reviewers: Making the paper reproducible is important, regardless of whether the code and data are provided or not.
        \item If the contribution is a dataset and/or model, the authors should describe the steps taken to make their results reproducible or verifiable. 
        \item Depending on the contribution, reproducibility can be accomplished in various ways. For example, if the contribution is a novel architecture, describing the architecture fully might suffice, or if the contribution is a specific model and empirical evaluation, it may be necessary to either make it possible for others to replicate the model with the same dataset, or provide access to the model. In general. releasing code and data is often one good way to accomplish this, but reproducibility can also be provided via detailed instructions for how to replicate the results, access to a hosted model (e.g., in the case of a large language model), releasing of a model checkpoint, or other means that are appropriate to the research performed.
        \item While NeurIPS does not require releasing code, the conference does require all submissions to provide some reasonable avenue for reproducibility, which may depend on the nature of the contribution. For example
        \begin{enumerate}
            \item If the contribution is primarily a new algorithm, the paper should make it clear how to reproduce that algorithm.
            \item If the contribution is primarily a new model architecture, the paper should describe the architecture clearly and fully.
            \item If the contribution is a new model (e.g., a large language model), then there should either be a way to access this model for reproducing the results or a way to reproduce the model (e.g., with an open-source dataset or instructions for how to construct the dataset).
            \item We recognize that reproducibility may be tricky in some cases, in which case authors are welcome to describe the particular way they provide for reproducibility. In the case of closed-source models, it may be that access to the model is limited in some way (e.g., to registered users), but it should be possible for other researchers to have some path to reproducing or verifying the results.
        \end{enumerate}
    \end{itemize}

\item {\bf Open access to data and code}
    \item[] Question: Does the paper provide open access to the data and code, with sufficient instructions to faithfully reproduce the main experimental results, as described in supplemental material?
    \item[] Answer: \answerYes{} 
    \item[] Justification: Code and scripts will be provided upon publication.
    \item[] Guidelines:
    \begin{itemize}
        \item The answer NA means that paper does not include experiments requiring code.
        \item Please see the NeurIPS code and data submission guidelines (\url{https://nips.cc/public/guides/CodeSubmissionPolicy}) for more details.
        \item While we encourage the release of code and data, we understand that this might not be possible, so “No” is an acceptable answer. Papers cannot be rejected simply for not including code, unless this is central to the contribution (e.g., for a new open-source benchmark).
        \item The instructions should contain the exact command and environment needed to run to reproduce the results. See the NeurIPS code and data submission guidelines (\url{https://nips.cc/public/guides/CodeSubmissionPolicy}) for more details.
        \item The authors should provide instructions on data access and preparation, including how to access the raw data, preprocessed data, intermediate data, and generated data, etc.
        \item The authors should provide scripts to reproduce all experimental results for the new proposed method and baselines. If only a subset of experiments are reproducible, they should state which ones are omitted from the script and why.
        \item At submission time, to preserve anonymity, the authors should release anonymized versions (if applicable).
        \item Providing as much information as possible in supplemental material (appended to the paper) is recommended, but including URLs to data and code is permitted.
    \end{itemize}

\item {\bf Experimental setting/details}
    \item[] Question: Does the paper specify all the training and test details (e.g., data splits, hyperparameters, how they were chosen, type of optimizer, etc.) necessary to understand the results?
    \item[] Answer: \answerYes{} 
    \item[] Justification: See Appendix \ref{app:experimental details and full results}.
    \item[] Guidelines:
    \begin{itemize}
        \item The answer NA means that the paper does not include experiments.
        \item The experimental setting should be presented in the core of the paper to a level of detail that is necessary to appreciate the results and make sense of them.
        \item The full details can be provided either with the code, in appendix, or as supplemental material.
    \end{itemize}

\item {\bf Experiment statistical significance}
    \item[] Question: Does the paper report error bars suitably and correctly defined or other appropriate information about the statistical significance of the experiments?
    \item[] Answer: \answerNo{} 
    \item[] Justification: Our main results reports a grid search in hyper-parameter and we report all available results. We note a large increase in computational cost is required to conduct multiple training runs at each hyper-parameter setting. We note that we have reported results on 5 different experimental setups (Appendix \ref{app:experimental details and full results} and Sec.\ref{sec:Performance Improvement through DPO-RK and DPO-D}) with at least 15 training runs per setup (3 DPO variants times minimal 5 beta values), and our findings are consistent across these experiments. 
    \item[] Guidelines:
    \begin{itemize}
        \item The answer NA means that the paper does not include experiments.
        \item The authors should answer "Yes" if the results are accompanied by error bars, confidence intervals, or statistical significance tests, at least for the experiments that support the main claims of the paper.
        \item The factors of variability that the error bars are capturing should be clearly stated (for example, train/test split, initialization, random drawing of some parameter, or overall run with given experimental conditions).
        \item The method for calculating the error bars should be explained (closed form formula, call to a library function, bootstrap, etc.)
        \item The assumptions made should be given (e.g., Normally distributed errors).
        \item It should be clear whether the error bar is the standard deviation or the standard error of the mean.
        \item It is OK to report 1-sigma error bars, but one should state it. The authors should preferably report a 2-sigma error bar than state that they have a 96\% CI, if the hypothesis of Normality of errors is not verified.
        \item For asymmetric distributions, the authors should be careful not to show in tables or figures symmetric error bars that would yield results that are out of range (e.g. negative error rates).
        \item If error bars are reported in tables or plots, The authors should explain in the text how they were calculated and reference the corresponding figures or tables in the text.
    \end{itemize}

\item {\bf Experiments compute resources}
    \item[] Question: For each experiment, does the paper provide sufficient information on the computer resources (type of compute workers, memory, time of execution) needed to reproduce the experiments?
    \item[] Answer: \answerYes{} 
    \item[] Justification: See Appendix \ref{app:experimental details and full results}.
    \item[] Guidelines:
    \begin{itemize}
        \item The answer NA means that the paper does not include experiments.
        \item The paper should indicate the type of compute workers CPU or GPU, internal cluster, or cloud provider, including relevant memory and storage.
        \item The paper should provide the amount of compute required for each of the individual experimental runs as well as estimate the total compute. 
        \item The paper should disclose whether the full research project required more compute than the experiments reported in the paper (e.g., preliminary or failed experiments that didn't make it into the paper). 
    \end{itemize}
    
\item {\bf Code of ethics}
    \item[] Question: Does the research conducted in the paper conform, in every respect, with the NeurIPS Code of Ethics \url{https://neurips.cc/public/EthicsGuidelines}?
    \item[] Answer: \answerYes{} 
    \item[] Justification: Our research has been conducted with strict adherence to the NeurIPS Code of
Ethics.
    \item[] Guidelines:
    \begin{itemize}
        \item The answer NA means that the authors have not reviewed the NeurIPS Code of Ethics.
        \item If the authors answer No, they should explain the special circumstances that require a deviation from the Code of Ethics.
        \item The authors should make sure to preserve anonymity (e.g., if there is a special consideration due to laws or regulations in their jurisdiction).
    \end{itemize}

\item {\bf Broader impacts}
    \item[] Question: Does the paper discuss both potential positive societal impacts and negative societal impacts of the work performed?
    \item[] Answer: \answerYes{} 
    \item[] Justification: See Appendix \ref{app:Broader Impacts}
    \item[] Guidelines:
    \begin{itemize}
        \item The answer NA means that there is no societal impact of the work performed.
        \item If the authors answer NA or No, they should explain why their work has no societal impact or why the paper does not address societal impact.
        \item Examples of negative societal impacts include potential malicious or unintended uses (e.g., disinformation, generating fake profiles, surveillance), fairness considerations (e.g., deployment of technologies that could make decisions that unfairly impact specific groups), privacy considerations, and security considerations.
        \item The conference expects that many papers will be foundational research and not tied to particular applications, let alone deployments. However, if there is a direct path to any negative applications, the authors should point it out. For example, it is legitimate to point out that an improvement in the quality of generative models could be used to generate deepfakes for disinformation. On the other hand, it is not needed to point out that a generic algorithm for optimizing neural networks could enable people to train models that generate Deepfakes faster.
        \item The authors should consider possible harms that could arise when the technology is being used as intended and functioning correctly, harms that could arise when the technology is being used as intended but gives incorrect results, and harms following from (intentional or unintentional) misuse of the technology.
        \item If there are negative societal impacts, the authors could also discuss possible mitigation strategies (e.g., gated release of models, providing defenses in addition to attacks, mechanisms for monitoring misuse, mechanisms to monitor how a system learns from feedback over time, improving the efficiency and accessibility of ML).
    \end{itemize}
    
\item {\bf Safeguards}
    \item[] Question: Does the paper describe safeguards that have been put in place for responsible release of data or models that have a high risk for misuse (e.g., pretrained language models, image generators, or scraped datasets)?
    \item[] Answer: \answerNA {} 
    \item[] Justification: This paper has no such risks.
    \item[] Guidelines:
    \begin{itemize}
        \item The answer NA means that the paper poses no such risks.
        \item Released models that have a high risk for misuse or dual-use should be released with necessary safeguards to allow for controlled use of the model, for example by requiring that users adhere to usage guidelines or restrictions to access the model or implementing safety filters. 
        \item Datasets that have been scraped from the Internet could pose safety risks. The authors should describe how they avoided releasing unsafe images.
        \item We recognize that providing effective safeguards is challenging, and many papers do not require this, but we encourage authors to take this into account and make a best faith effort.
    \end{itemize}

\item {\bf Licenses for existing assets}
    \item[] Question: Are the creators or original owners of assets (e.g., code, data, models), used in the paper, properly credited and are the license and terms of use explicitly mentioned and properly respected?
    \item[] Answer: \answerYes{} 
    \item[] Justification: We have used open-sourced datasets and models that are widely-used in the research community and well-known at the time of writing. They have been cited where appropriate.
    \item[] Guidelines:
    \begin{itemize}
        \item The answer NA means that the paper does not use existing assets.
        \item The authors should cite the original paper that produced the code package or dataset.
        \item The authors should state which version of the asset is used and, if possible, include a URL.
        \item The name of the license (e.g., CC-BY 4.0) should be included for each asset.
        \item For scraped data from a particular source (e.g., website), the copyright and terms of service of that source should be provided.
        \item If assets are released, the license, copyright information, and terms of use in the package should be provided. For popular datasets, \url{paperswithcode.com/datasets} has curated licenses for some datasets. Their licensing guide can help determine the license of a dataset.
        \item For existing datasets that are re-packaged, both the original license and the license of the derived asset (if it has changed) should be provided.
        \item If this information is not available online, the authors are encouraged to reach out to the asset's creators.
    \end{itemize}

\item {\bf New assets}
    \item[] Question: Are new assets introduced in the paper well documented and is the documentation provided alongside the assets?
    \item[] Answer: \answerNA{} 
    \item[] Justification: This paper does not release new assets.
    \item[] Guidelines:
    \begin{itemize}
        \item The answer NA means that the paper does not release new assets.
        \item Researchers should communicate the details of the dataset/code/model as part of their submissions via structured templates. This includes details about training, license, limitations, etc. 
        \item The paper should discuss whether and how consent was obtained from people whose asset is used.
        \item At submission time, remember to anonymize your assets (if applicable). You can either create an anonymized URL or include an anonymized zip file.
    \end{itemize}

\item {\bf Crowdsourcing and research with human subjects}
    \item[] Question: For crowdsourcing experiments and research with human subjects, does the paper include the full text of instructions given to participants and screenshots, if applicable, as well as details about compensation (if any)? 
    \item[] Answer: \answerNA{} 
    \item[] Justification: This paper does not involve crowdsourcing nor research with human subjects.
    \item[] Guidelines:
    \begin{itemize}
        \item The answer NA means that the paper does not involve crowdsourcing nor research with human subjects.
        \item Including this information in the supplemental material is fine, but if the main contribution of the paper involves human subjects, then as much detail as possible should be included in the main paper. 
        \item According to the NeurIPS Code of Ethics, workers involved in data collection, curation, or other labor should be paid at least the minimum wage in the country of the data collector. 
    \end{itemize}

\item {\bf Institutional review board (IRB) approvals or equivalent for research with human subjects}
    \item[] Question: Does the paper describe potential risks incurred by study participants, whether such risks were disclosed to the subjects, and whether Institutional Review Board (IRB) approvals (or an equivalent approval/review based on the requirements of your country or institution) were obtained?
    \item[] Answer: \answerNA{} 
    \item[] Justification: this paper does not involve crowdsourcing nor research with human subjects.
    \item[] Guidelines:
    \begin{itemize}
        \item The answer NA means that the paper does not involve crowdsourcing nor research with human subjects.
        \item Depending on the country in which research is conducted, IRB approval (or equivalent) may be required for any human subjects research. If you obtained IRB approval, you should clearly state this in the paper. 
        \item We recognize that the procedures for this may vary significantly between institutions and locations, and we expect authors to adhere to the NeurIPS Code of Ethics and the guidelines for their institution. 
        \item For initial submissions, do not include any information that would break anonymity (if applicable), such as the institution conducting the review.
    \end{itemize}

\item {\bf Declaration of LLM usage}
    \item[] Question: Does the paper describe the usage of LLMs if it is an important, original, or non-standard component of the core methods in this research? Note that if the LLM is used only for writing, editing, or formatting purposes and does not impact the core methodology, scientific rigorousness, or originality of the research, declaration is not required.
    \item[] Answer: \answerNA{} 
    \item[] Justification: We do not use LLMs in developing the core method in this research.
    \item[] Guidelines:
    \begin{itemize}
        \item The answer NA means that the core method development in this research does not involve LLMs as any important, original, or non-standard components.
        \item Please refer to our LLM policy (\url{https://neurips.cc/Conferences/2025/LLM}) for what should or should not be described.
    \end{itemize}

\end{enumerate}

\end{document}